\newcommand{\ours}{ODST\xspace}
\newcommand{\base}{ST\xspace}
\newcommand{\baseod}{ST-OT\xspace}
\def\R{\mathbb{R}}
\newcommand{\f}[1]{f^{(#1)}}
\def\T{\mathbf{T}}
\def\U{\mathbf{U}}
\def\I{\mathbf{I}}
\def\Exp{\mathbb{E}}
\def\ones{\mathop{\rm 1}\nolimits}
\def\pin{\mathrm{p_{\textrm{in}}}}
\def\pall{\mathrm{p_{\textrm{all}}}}
\def\max{\mathop{\rm max}\nolimits}
\def\ones{\mathbf{1}}
\newtheorem{lemma}{Lemma}[section]
\newenvironment{proof}{\par\noindent{\bf Proof:\ }}{\hfill$\Box$\\[2mm]}
\begin{document}

\title{
Out-distribution aware Self-training in an Open World Setting}

\author{Maximilian Augustin\\
University of T\"ubingen\\
{\tt\small  maximilian.augustin@uni-tuebingen.de}
\and
Matthias Hein\\
University of T\"ubingen\\
{\tt\small matthias.hein@uni-tuebingen.de}
}

\maketitle

\begin{abstract}

Deep Learning heavily depends on large labeled datasets which limits further improvements. While unlabeled data is available in large amounts, in particular in image recognition, it does not fulfill the 
closed world assumption of semi-supervised learning 
that all unlabeled data are task-related.
The goal of this paper is to leverage
unlabeled data in an open world setting to further improve prediction performance.
For this purpose, we introduce out-distribution aware self-training, which includes a careful sample selection strategy based on the confidence of the classifier. 
While normal self-training deteriorates prediction performance, our iterative scheme improves using up to 15 times the amount of originally labeled data.
Moreover, our classifiers are by design out-distribution aware and can thus distinguish task-related inputs from unrelated ones. 
\end{abstract}

\section{Introduction} 

In past years we have seen tremendous progress in image recognition based on deep learning \cite{KriSutHin2012,he2016deep,tan2020efficientnet}. However, this success also required very large labeled datasets that are expensive to generate. On the other hand, large amounts of unlabeled data are broadly available, in particular in image recognition. The promise of semi-supervised learning \cite{Chapelle2006SSL} is to leverage unlabeled data in order to improve prediction performance compared to only using labeled data. However, the underlying assumption of most
semi-supervised learning (SSL) algorithms \cite{Chapelle2006SSL}  is that the unlabeled data comes from the same distribution or at least contains data from the same set of classes as the labeled data (closed world assumption). 
\cite{oliver2018realistic} criticized this as being unrealistic since the assumption is hard to control when retrieving large
amounts of unlabeled data from the web



\begin{figure}[t]
\centering
    \newcolumntype{A}{>{\centering\arraybackslash}m{4pt}}
    \newcolumntype{B}{>{\centering\arraybackslash}m{0.88\columnwidth}}
    \begin{tabular}{|A||B|}
        \hline
        \hspace{-5pt}
        \rotatebox[]{90}{ST} & 
        \vspace{-9pt}
        \makecell{\vspace{-9pt}\hspace{-12pt}\includegraphics[width=0.98\columnwidth]{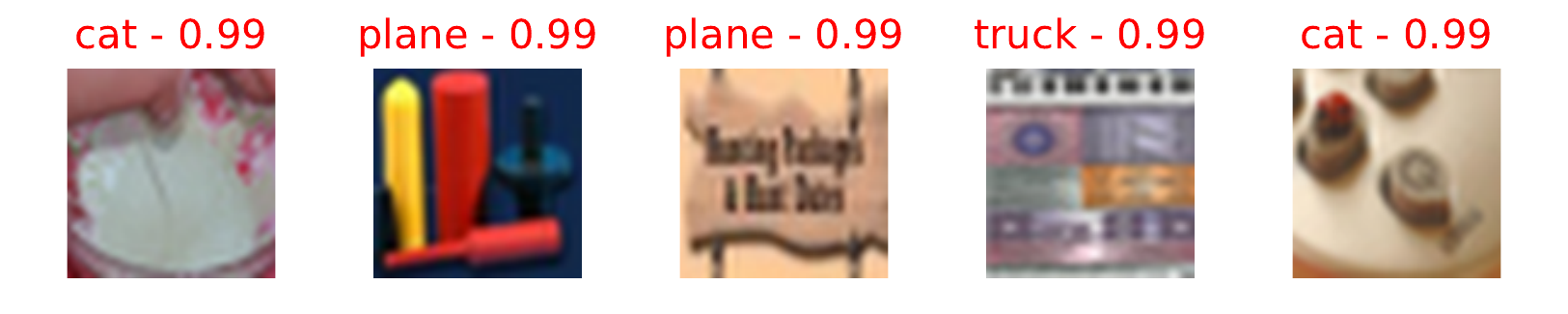}}\\
        \hline
        \hspace{-5pt}
        \rotatebox[]{90}{\hspace{10pt}\ours} &
        \vspace{-9pt}
        \makecell{\vspace{-9pt}\hspace{-12pt}\includegraphics[width=0.98\columnwidth]{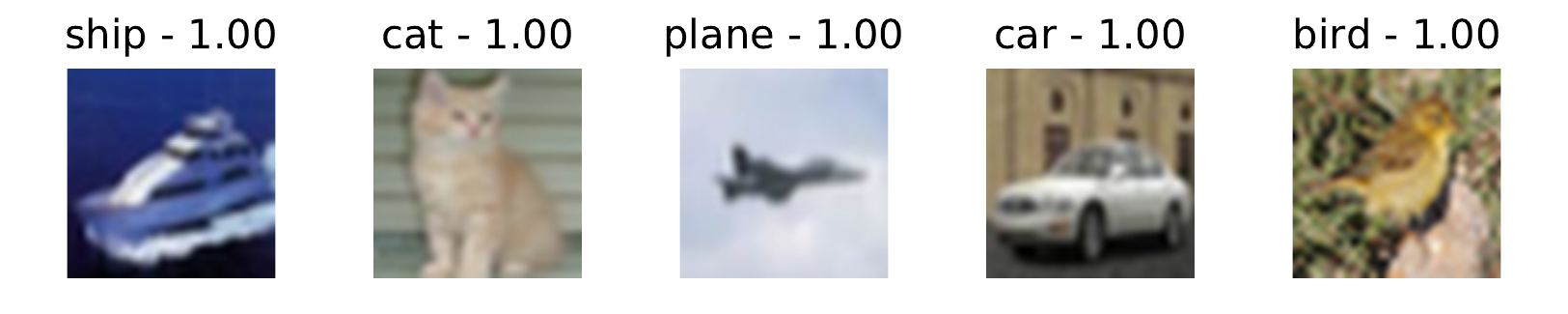}}\\
        \hline
    \end{tabular}

\caption{\label{fig:teaser}
Standard self-training (ST) fails to select the right unlabeled samples in an open world setting but our out-distribution aware self-training (\ours) (bottom) has very good selection quality (see also Figure \ref{fig:cifar10_sample_demonstration}).
}

\end{figure}

SSL in an open world setting where the unlabeled data contains task-relevant but also non-related images has recently attracted attention \cite{athiwaratkun2019consistent,guo2020self,ChenEtAL2020SSL} as a more realistic approach to SSL. However, these approaches either work in a setting where the number of labeled samples and the ratio of non-relevant to task-relevant unlabeled images is quite small or they use weak supervision. The goal of this paper is to show that one can leverage large amounts of unlabeled data (80 million tiny images) for medium-sized labeled datasets (CIFAR10/100) in order to improve prediction performance even when the ratio of non-relevant to task-relevant images is huge (80 million tiny images contains roughly 100 times more samples from unrelated classes than from the ones of CIFAR10). Closest in spirit are the self-training approaches \cite{yalniz2019billion,xie2020self} which use large amounts of unlabeled data to improve the performance on ImageNet. While they are also working in an open world setting, the ratio of non-relevant to task-relevant images is much smaller due to a large number of classes. They thus suffer less from overconfident predictions of deep neural networks on out-distribution images \cite{NguYosClu2015, hendrycks2016baseline, HeiAndBit2019} which deteriorates the sample selection process in self-training as illustrated in Figure~\ref{fig:teaser}. 
These mistakes in the labeling process accumulate, create a distribution shift and can degrade prediction performance or at least prevent further improvements. 

In this paper, we propose an out-distribution aware self-training approach which we run iteratively with increasing amounts of pseudo-labeled unlabeled data. By enforcing low confidence predictions on out-distribution images in combination with carefully designed class-specific sample selection and calibration strategies, our pseudo-labeled images are of high-quality, see Figure \ref{fig:teaser} and \ref{fig:cifar10_sample_demonstration}. Thus we can train with large amounts of pseudo-labeled images (up to 15 times more than the size of the labeled set) even with a huge ratio of non-task-related to task-related images. We always improve the base classifier trained on the labeled set and in some cases even improve over three iterations of self-training while a standard self-training approach degrades much earlier and can sometimes fail to even surpass the base classifier.
On CIFAR10 and CIFAR100, we achieve the best-known results for our employed ResNet50 and PyramidNet272 architectures. Additionally, our classifiers have excellent out-distribution detection performance and thus know when they don't know.

\section{Related Work}
\textbf{Semi-supervised learning (SSL)} is an established technique \cite{zhu2005semi, zhu2009introduction}  for leveraging information from unlabeled data to improve predictive performance.
In self-training \cite{riloff1996automatically, riloff2003learning, scudder1965probability}, a teacher model is trained in a fully-supervised fashion on a labeled dataset. The teacher model is then used to label a set of unlabeled examples, typically drawn from the original data distribution (closed world assumption), which is then used in combination with the labeled samples to train a new student model.
Various extensions of this protocol have been proposed, including the use of an ensemble of teacher models \cite{zhou2018edf} and Co-training \cite{blum1998combining}.

Recently, self-training has been used to improve performance on ImageNet \cite{russakovsky2015imagenet}, by 
using large image databases consisting of millions of task-relevant and out-of-distribution samples \cite{xie2020self,yalniz2019billion}. 
On CIFAR10,  \cite{CarEtAl19} were able to significantly improve model robustness to adversarial perturbations by adding unlabeled samples from 80 million tiny images (80MTI) \cite{torralba200880} using self-training.
The distinctive feature of self-training in comparison to other SSL methods is that the training of the teacher model is separated from the labeling process. In contrast, in pseudo-labeling \cite{lee2013pseudo, iscen2019label, shi2018transductive} labels are generated during training by the model itself. Similarly, consistency-based SSL-methods like $\Pi$-models \cite{laine2016temporal, sajjadi2016regularization}, mean-teacher \cite{tarvainen2017mean} and virtual adversarial training \cite{miyato2018virtual} enforce an invariance of the model's output on the unlabeled data under a specific set of perturbations. A related technique is entropy-minimization \cite{grandvalet2005semi}, which \emph{penalizes} low-confidence predictions on unlabeled samples during training. \cite{oliver2018realistic} found that SSL can improve the model's performance in the traditional SSL setting where the unlabeled data is sampled from the same distribution as the training data (closed world assumption) but can degrade the performance when the unlabeled data contains non-task-related samples (open world setting).  


The open world setting has recently been tackled in 
\cite{athiwaratkun2019consistent} for a 500k subset of 80MTI selected using weak labels with established SSL methods together with stochastic weight averaging. On CIFAR100 they improve by less than $0.6\%$ or even degrade performance compared to just using the labeled training set of CIFAR100.
In \cite{guo2020self,ChenEtAL2020SSL,yu2020multitask} the open world setting is considered but either work only on small label size problems or suffer from severe performance degradation when the number of non-task-related unlabeled samples exceeds the number of labeled examples. In contrast,  for CIFAR10 we use the full 80MTI dataset as unlabeled data resulting in roughly 100 times more unrelated to task-specific images and \emph{improve} prediction performance.


\textbf{Out-of-distribution detection (OOD):} Deep Neural networks (DNN) have empirically and theoretically been shown to produce overconfident predictions for inputs not related to the task e.g. noise or other classes \cite{NguYosClu2015, hendrycks2016baseline, HeiAndBit2019}, \ie the confidence of a DNN is not reliable for the detection of out-of-distribution samples. Approaches for OOD include ODIN \cite{LiaLiSri2018} or using the Mahalanobis distance of higher-order features \cite{lee2018simple}. \cite{HenMazDie2019} introduces Outlier exposure (OE), see \cite{HeiAndBit2019} for the related CEDA, and show that the confidence can be used as a  reliable OOD-detector when enforcing low confidence on 80MTI during training, even when tested on out-distribution datasets other than 80MTI. OOD detection is related to open set recognition \cite{BoultEtAL2019OpenWorld} which goes beyond the scope of this paper. Our main focus is a better classifier, not only in terms of test accuracy but also in terms of OOD detection and robustness to image corruptions \cite{hendrycks2019benchmarking}. 


\section{Method}

We introduce self-training as in \cite{xie2020self} and then highlight the differences to our out-distribution aware self-training.

\subsection{Self-training}
Let $\T=(x_i,y_i)_{i=1}^{n}$ be our set of labeled examples,  where $x_i \in \R^d$ and $y_i \in \{1,\ldots,K\}$, and we assume to be given a collection of unlabeled samples $\U=(z_i)_{i=1}^{m}$. Traditional SSL literature makes the assumption that the unlabeled samples $\U$ are drawn from the same distribution as the labeled examples $\T$, or at least belong to the same set of classes. Given a neural network $f:\R^d \rightarrow \R$
the predicted probability distribution for a point $x$ is the softmax:
\[ \hat{p}_f(s|x)=\frac{e^{f_s(x)}}{\sum_{l=1}^K e^{f_l(x)}}. \]
The confidence in the decision for $x$ is then given by $\max_{s=1,\ldots,K} \hat{p}_f(s|x)$,
and the cross-entropy loss between (soft)-labels $p \in \R^K$ ($\sum_i p_i=1$, $p_i\geq 0$) and prediction $\hat{p}$ is defined as:
\[ L(p,\hat{p}) = - \sum_{i=1}^K p_i \log \hat{p}_i.\]
The iterative self-training scheme is initialized $(t=0)$ with a 
base model $\f{0}$ obtained from minimizing the cross-entropy loss on the labeled set which then becomes the first teacher. The
iterative scheme can be described as follows:
\begin{enumerate}[1)]
  \item pseudo-label all unlabeled samples in $\U$ with current teacher $\f{t}$
  \item select a subset $\I \subset \U$ of the pseudo-labeled examples \eg according to their confidence 
  \item train new model $\f{t+1}$ by  minimizing the loss on  the labeled samples in $\T$ and pseudo-labeled samples in $\I$:
  \[ \hspace{-2mm}\frac{1}{n}\sum_{i=1}^n L\big(y_i,\hat{p}_{\f{t+1}}(x_i)\big) + \frac{\lambda}{|\I|}\sum_{z \in \I} L\big(\hat{p}_{\f{t}}(z),\hat{p}_{\f{t+1}}(z)\big)\]
  \item $t\gets t+1$ and go back to step 1
\end{enumerate}
The main difficulty in self-training in a closed world setting is the propagation of labeling mistakes which leads to a degradation of prediction performance. 
In an open world setting an equally severe problem is that a large fraction of the unlabeled instances is not task-relevant such that including them leads to a shift in distribution and can hurt prediction performance (see Figure \ref{fig:teaser} and \ref{fig:cifar10_sample_demonstration}).
The distribution shift is particularly bad regarding AI safety as it yields high confidence predictions on completely unrelated images and can be hard to notice as predictive performance might appear to improve when only evaluated on the test set.

In contrast to previous work which focused on small labeled training sets or problems where the ratio of non-task-related to task-related images is small, our  goal is to show that out-distribution aware training together with a careful sample selection strategy can lead to a self-training scheme which can leverage a large unlabeled dataset to improve performance on the CIFAR10 and CIFAR100 test set over a fully-supervised baseline trained on the entire train set and additionally has excellent OOD detection performance.

\subsection{Out-distribution aware self-training}
A crucial assumption underlying our scheme is that the unlabeled dataset $\U$ contains task-related examples. Moreover, we require an in- and out-distribution validation set to guarantee a high-quality selection and to determine a class-specific criterion to stop the addition of new pseudo-labeled examples. The latter point is often neglected and is particularly important in practice as the number of task-related examples in the unlabeled dataset typically varies significantly between the different classes.

We start with an algorithmic overview over our out-distribution aware self-training (\ours) scheme before describing the individual steps in detail.
\ours is initialized with a 
base teacher model $\f{0}$ trained by minimizing:
\begin{align}\label{eq:base-loss}
\hspace{-2mm}\frac{1}{n}\sum_{i=1}^n L\big(y_i,\hat{p}_{\f{0}}(x_i)\big) + \frac{1}{|\U|}\sum_{z\in\U} L\Big(\frac{1}{K}\ones,\,\hat{p}_{\f{0}}(z)\Big).
\end{align}
We then iterate the following steps starting from $t=0$:
\begin{enumerate}[A)]
  \item calibrate $\f{t}$ on the in-distribution validation set 
  \item pseudo-label all unlabeled samples in $\U$ with current teacher $\f{t}$
  \item for each class $c$: select the top-$k$ unlabeled instances with highest confidence classified as $c$ that lie above the in- and out-distribution thresholds. The selected samples for all classes are denoted as $\I$  
  \item determine new pseudo-labels for the unlabeled instances.
        We use $q(z) = \hat{p}_{\f{t}}(z)$ for $z \in \I$ (selected samples in step C)) and 
  \begin{align}\label{eq:pseudo_labels} v(z) = \frac{1}{2}\Big(\frac{1}{K}+\hat{p}_{\f{t}}(z)\Big), 
  \;  \textrm{ for }\; z \in \U \backslash \I.
  \end{align}
  \item train a new model $\f{t+1}$ by  minimizing the loss on labeled and pseudo-labeled samples:
  \begin{align}\label{eq:final-loss} \frac{1}{n+|\I|} \hspace{-0.5mm}&\Big[\begin{aligned}[t] \sum_{i=1}^n L\big(y_i,\hat{p}_{\f{t+1}}(x_i)\big) 
  \hspace{-0.2mm} + \hspace{-0.5mm}\sum_{z \in \I} \hspace{-0.2mm}L\big(q(z),\hat{p}_{\f{t+1}}(z)\big)\hspace{-0.5mm}\Big] \end{aligned} \nonumber\\  
   + &\frac{1}{|\U\setminus\I|}\sum_{z \in \U \backslash \I} L\big(v(z),\hat{p}_{\f{t+1}}(z)\big)\end{align} 
  \item $t \gets t+1$ and go to step A)
\end{enumerate}


\noindent\textbf{The Base classifier} is essentially an Outlier Exposure (OE) model \cite{HenMazDie2019} (see also \cite{HeiAndBit2019, papadopoulos2019outlier} for related losses) where the set $\U$ can be seen as our training out-distribution where we enforce uniform confidence. OE is known to be one of the best methods for out-of-distribution detection. As in our case a crucial assumption is that the unlabeled samples are partially task-related, it might appear odd to enforce  uniform confidence on all of $\U$. However, we show in Section \ref{sec:th} that this just leads to a down-weighting of the confidence for task-related samples but preserves the Bayes optimal decision and in particular enforces close-to-uniform confidence for all unrelated samples. 


\noindent\textbf{A) Calibration:} while normal neural networks
are known to be overconfident on in-\cite{GuoEtAl2017}
and out-distribution \cite{NguYosClu2015,hendrycks2016baseline,HeiAndBit2019}, the  models resulting from enforcing low confidence on unlabeled points (such as OE) tend to be underconfident on the in-distribution. As we use the predictions of the teacher $\f{t}$ as new soft-labels for the unlabeled data, we calibrate $\f{t}$ by minimizing the expected calibration error using temperature rescaling \cite{GuoEtAl2017}. Thus the teacher model assigns the correct uncertainty score to its predictions on in-distribution samples which improves soft-label quality and  and stabilizes the training procedure.

\noindent\textbf{C) Sample Selection:} The most important problem in self-training is to integrate the right samples into the pseudo-labeled set $\I$. 
While our out-distribution aware teacher is better at discriminating between the in- and out-distribution based on confidence, there are still many samples with highly confident predictions due to the sheer size of the unlabeled dataset ($8\cdot 10^7$). Note that we select at most the top-$k$ samples (where $k=5N(t+1)/K$), but this  might still be too much if not sufficiently many task-related examples of a class exist in the unlabeled dataset. 
We thus need to determine confidence thresholds to limit the selection. 

As the number of available samples per class in the unlabeled dataset is unknown, we calculate a class-specific false-positive based threshold that controls the number of task-irrelevant samples that are falsely added into our pseudo-labeled sample pool $\I$. This is done using an out-distribution validation set, \ie a set of natural images that does not contain any class relevant images (we discuss this choice in Section \ref{sec:exp}). For each class $c$, we compute the $\alpha$-quantile of the predicted probabilities for class $c$ on the out-distribution images which we define as the out-distribution threshold for class $c$ (we use $\alpha=99.8\%$ for CIFAR10).

Similarly, using the in-distribution validation set we define the in-distribution threshold for class $c$  as the smallest predicted probability for class $c$ such that the precision
for all images which are above this threshold is greater than or equal to $\alpha$ (binary classification problem: class $c$ versus all other classes). We use the same $\alpha$ for the in- and out-distribution threshold. An in-distribution precision threshold is especially important if the classification task contains similar classes, as learning with wrong pseudo-labels on task-related images is likely to hurt predictive performance on the in-distribution task even more than the inclusion of an unrelated out-distribution image.

The final per-class threshold is the maximum of the in- and out-distribution threshold. If there are not enough samples among the top-$k$ samples of class $c$ with confidences lying above the threshold, we randomly repeat these samples to maintain a class-balanced training scheme. Note that it is much easier and also more interpretable to fix a precision value rather than the choice of a confidence threshold (in particular if the model is not calibrated) as done in \cite{xie2020self}.

We highlight that that our sample selection strategy can fail or stop the addition of new samples too early if the initial ordering of the unlabeled samples according to the confidence is deficient as we will see for the non-OOD aware self-training in the experiments. This emphasizes the importance of an OOD-aware self-training scheme.

\noindent\textbf{D) Pseudo Labels:} for the original labeled dataset we always use  one-hot labels. For unlabeled data points that have been selected in $\I$, we determine soft-labels $q$ according to the predicted probability distribution over the classes by the calibrated teacher model.
Due to the calibration, this should reflect the ``correct'' uncertainty about these labels. For all remaining images in our unlabeled dataset $\U \backslash \I$ we use a weak form of knowledge distillation by defining soft-labels $v$ as the average of the predicted probability distribution of the teacher model and the uniform distribution, given in  \eqref{eq:pseudo_labels}. This has two reasons: i)
a purely uniform distribution on $\U \backslash \I$, which in the first iterations might still contain a lot of task-relevant images, leads to a bias as it does not distinguish between task-relevant and irrelevant images, ii) only using soft-labels from the teacher model leads to  overconfident predictions as we observe them in the non-out-distribution aware self-training scheme. Thus a trade-off between these opposing goals is their average which leads to heavy damping of the confidence 
(note that the pseudo-labels have a maximal confidence of $\frac{1}{2}+\frac{1}{K}$ on $\U \backslash \I$).

\noindent\textbf{E) Training:} For the final objective in \eqref{eq:final-loss},  the selected pseudo-labeled samples in $\I$ and the original samples in $\T$ are assigned the same weight. This is quite aggressive as we add up to $5$ times more pseudo-labeled data than labeled training data in the first iteration and increase this ratio up to $15$ in the third iteration. However, this also enables larger performance gains given that the sample selection process is successful. Note that the losses on $\I \cup \T$ and on $\U \backslash \I$ have equal weight as the damping of confidences on $\U\backslash\I$ is crucial for the sample selection process.

We iterate this scheme three times. The astonishing part is that while we do not always see monotonic improvements, we never encounter a severe performance degradation. A surprising result, given that we use 80MTI as an unlabeled dataset which was used to create CIFAR10 and CIFAR100
\cite{krizhevsky2009learning} and is known to contain more images of these classes but also many more images not related to CIFAR10 or CIFAR100. This requires us to be highly accurate as there is a large potential to include non-related images.

\subsection{Bayesian Decision Theory of Self-Training}\label{sec:th}
In this section, we analyze our iterations in the framework of Bayesian decision theory. We show that the base classifier that enforces uniform confidence on the unlabeled points still leads to optimal decisions on the in-distribution. Moreover, we show that the iterative scheme with soft-labels ultimately reaches the optimal classifier which is Bayes optimal on the in-distribution task and maximally uncertain elsewhere. Proofs can be found in the Appendix.


We assume that our labeled examples $(x_i,y_i)_{i=1}^n$ are drawn i.i.d. from $\pin(x,y)$. The unlabeled data $(z_i)_{i=1}^m$ is drawn i.i.d. from $\pall(x)$ where we think of $\pall$ in an open world setting as the marginal distribution of a mixture of a very large number of classes (much larger than $K$), including the in-distribution ones.
This also means that $\pin(x)>0$ implies $\pall(x)>0$. 
This assumption on $\pall$ differs from the usual SSL closed world setting where one assumes that the unlabeled examples are also from the $K$ classes or even stronger that they are drawn i.i.d. from $\pin(x)$. 

The \ours base classifier, see \eqref{eq:base-loss}, optimizes
in expectation (for simplicity we omit the index $0$ in $\f{0}$):
\begin{align}\label{eq:exp-oe-loss}
\Exp_{\pin}\Big[L\big(Y,f(X)\big)\Big] + \Exp_{\pall}\Big[L\Big(\frac{1}{K}\ones,f(X)\Big)\Big].
\end{align}
\begin{lemma}
Let $\hat{p}(k|x)=\frac{e^{f_k(x)}}{\sum_{l=1}^K e^{f_l(x)}}$ then the Bayes optimal prediction for the loss \eqref{eq:exp-oe-loss}
is given for any $x$ with $\pall(x)+\pin(x)>0$ as 
\[ \hat{p}(k|x) = \frac{\pin(k|x)\pin(x) + \frac{1}{K}\pall(x)}{\pin(x)+\pall(x)}, \quad k=1,\ldots,K.\]
\end{lemma}
Here we have chosen to directly provide the optimal predictive probability distribution instead of expressing it in terms of the classifier $f$. Note that $\hat{p}(k|x)$ is a monotonic transformation of
$\pin(k|x)$ and thus preserves the ranking of the classes according to $\pin(k|x)$ for each point and does not change the optimal decision. However, the 
absolute ordering of the confidence $\max_k \pin(k|x)$ across different inputs $x$ is influenced significantly by the ratio of $\pin(x)$ to $\pall(x)$. In particular, non-task relevant instances where $\pall(x)$ is larger than $\pin(x)$ are significantly down-weighted and thus will not be selected, whereas if $\pin(x)$ is much larger than $\pall(x)$ the confidence $\max_k \hat{p}(k|x)$ is almost equal to $\max_k \pin(k|x)$. Note that the latter case is in particular true for task-relevant images ($\pin(x)$ large) as $\pall$ is a much more spread out distribution and thus the density value $\pall(x)$ will be small. This justifies our OOD aware initialization and also our post-training calibration step A) as $\hat{p}$ is under-confident on the in-distribution.

\begin{figure*}
\centering
\begin{subfigure}{0.245\textwidth}
    \centering
    \includegraphics[width=\textwidth]{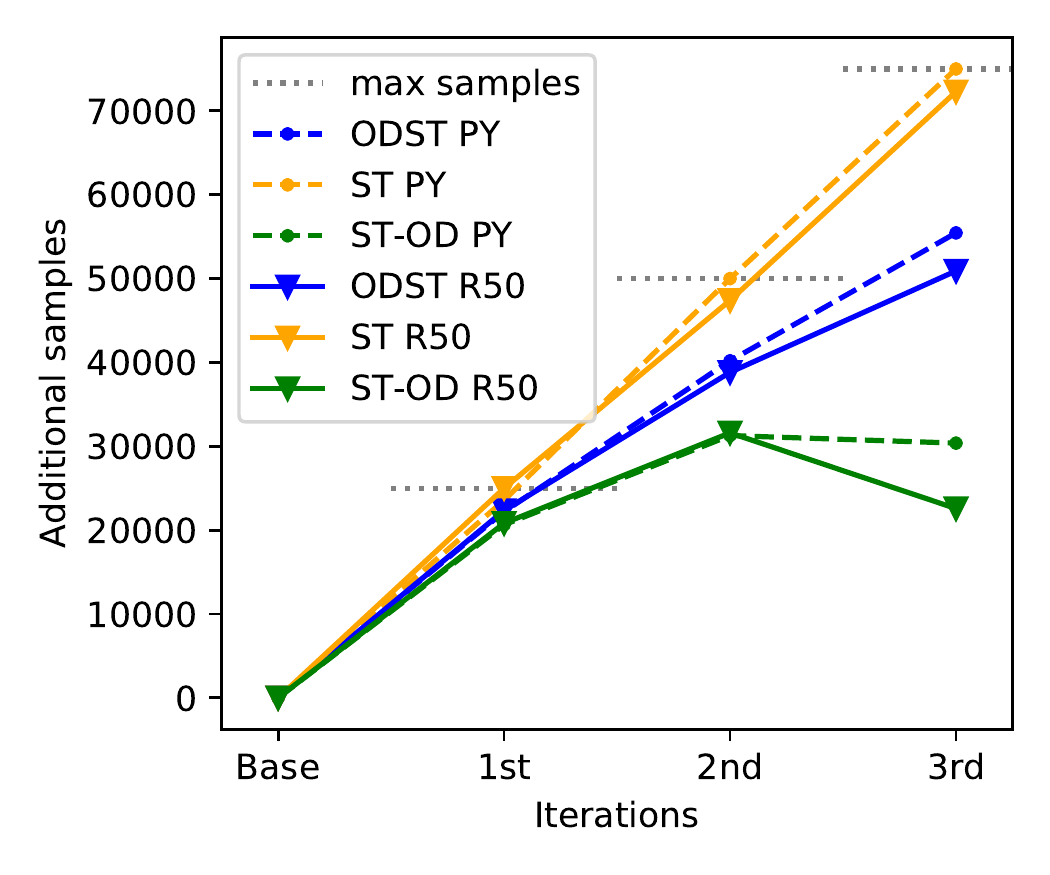}
    \caption{Added Samples for CIFAR10}
    \label{subfig:cifar10_test_error_iterations}
\end{subfigure}
    \begin{subfigure}{0.245\textwidth}
    \centering
    \includegraphics[width=\textwidth]{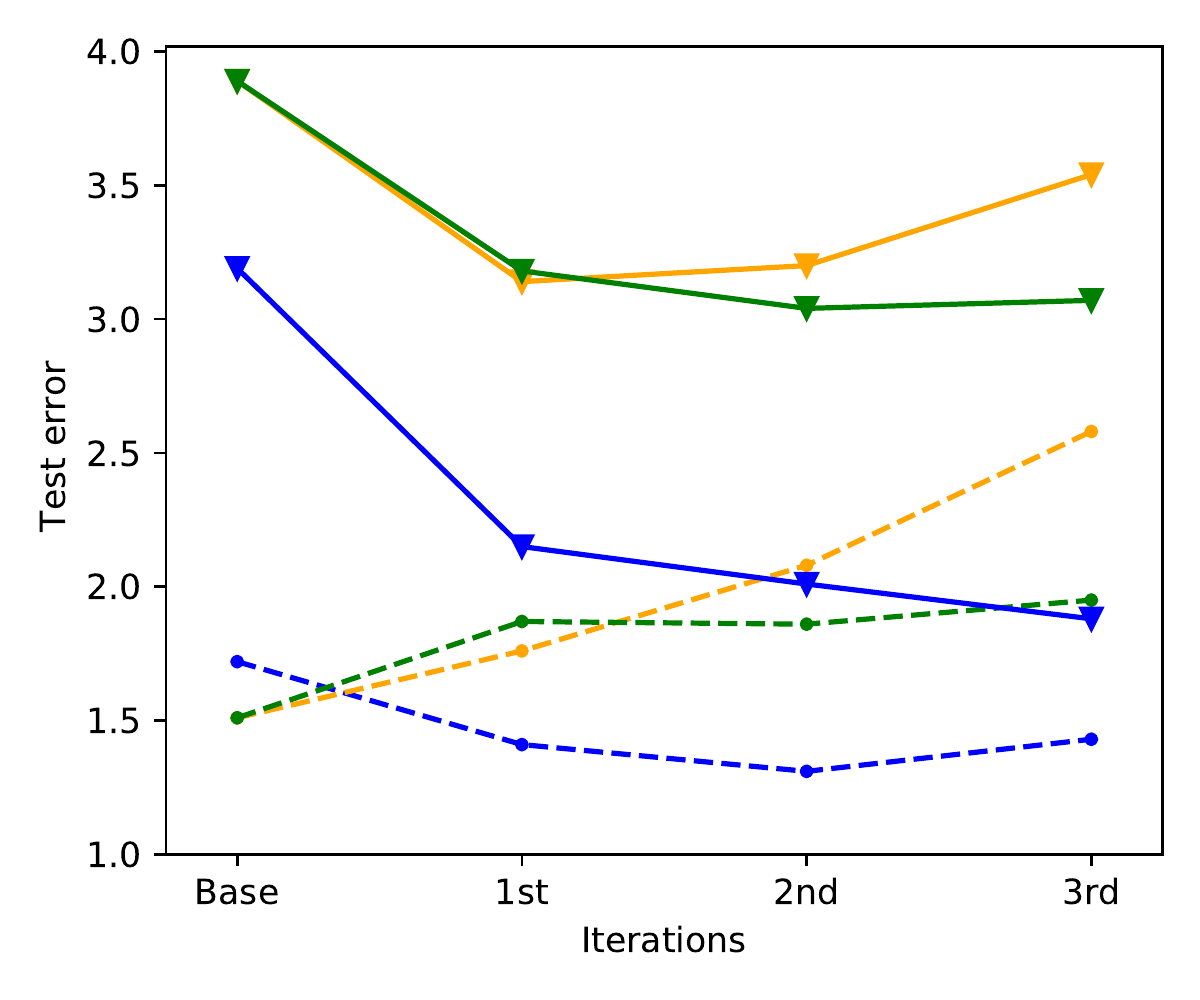}
    \caption{Test Error for CIFAR10}
    \label{subfig:cifar10_samples_iterations}
\end{subfigure}
\begin{subfigure}{0.245\textwidth}
    \centering
    \includegraphics[width=\textwidth]{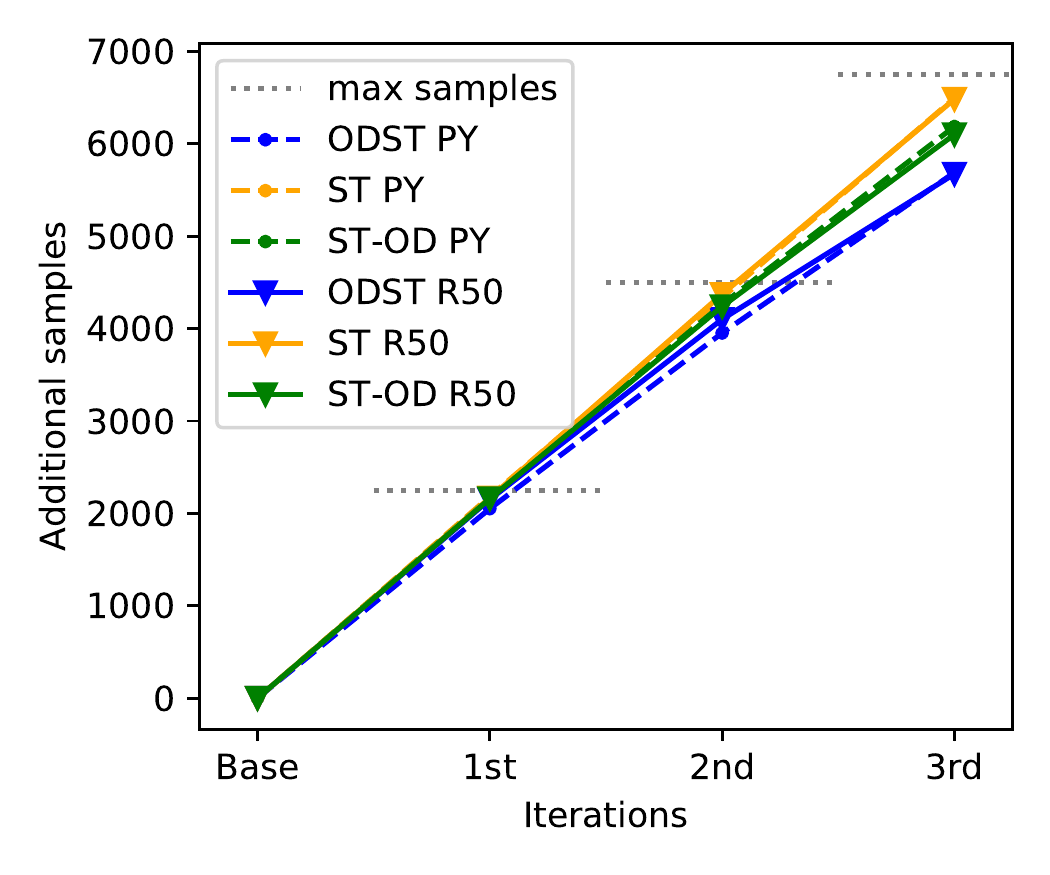}
    \caption{Added Samples for CIFAR100}
    \label{subfig:cifar100_test_error_iterations}
\end{subfigure}
\begin{subfigure}{0.245\textwidth}
    \centering
    \includegraphics[width=\textwidth]{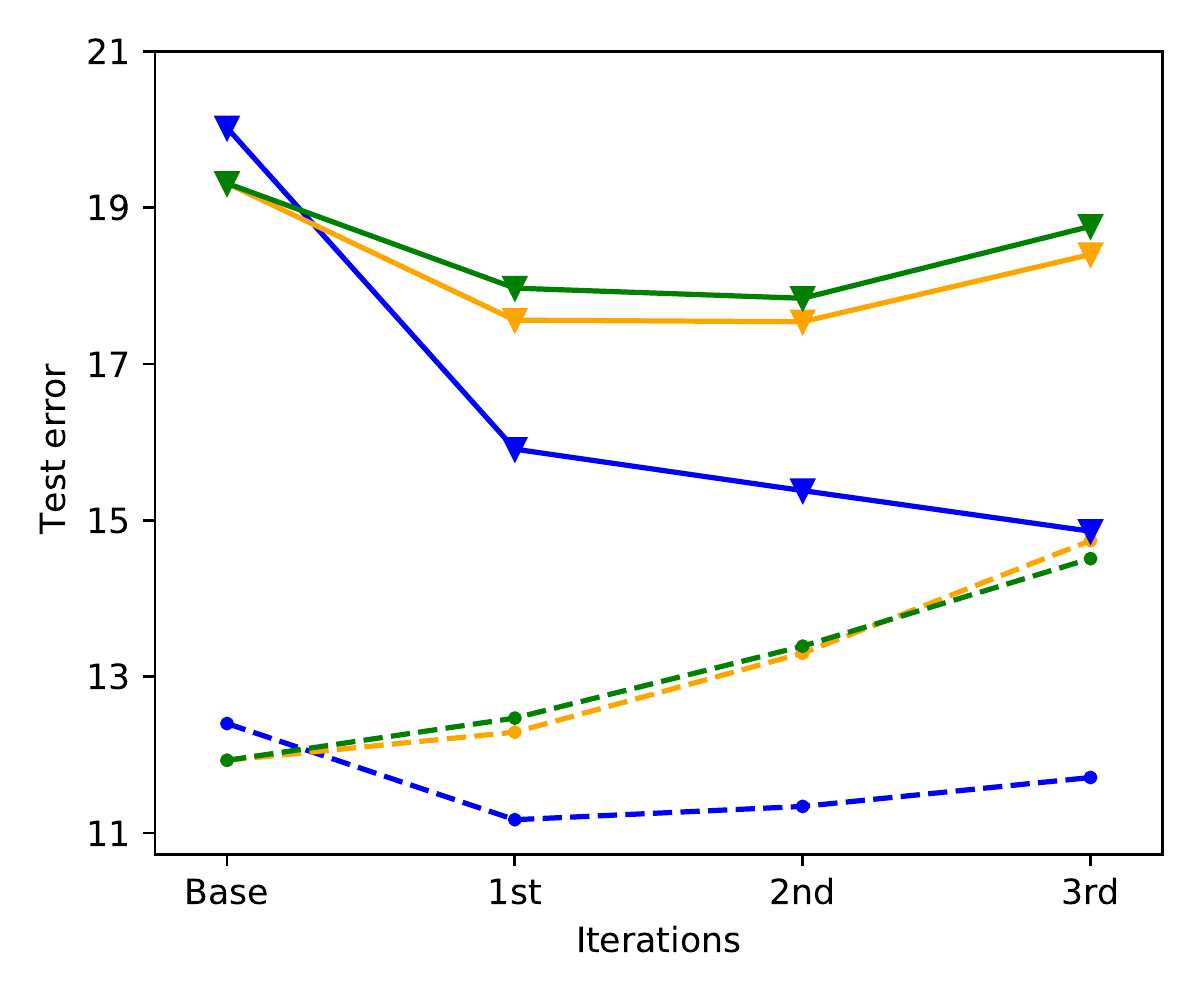}
    \caption{Test Error for CIFAR100}
    \label{subfig:cifar10_test_error_iterations}
\end{subfigure}
\caption{\label{fig:plots}Mean of added unlabeled  samples per class and test error over iterations for CIFAR10 (left) and CIFAR100 (right).}

\end{figure*}

\begin{table*}[]
    \newcolumntype{M}{>{\centering\arraybackslash}m{0.55cm}}
    \small
    \centering
    \begin{tabular}{|c|*{4}{M}|*{4}{M}|*{4}{M}|*{4}{M}|}
    \hhline{~|----|----|----|----|}
    \multicolumn{1}{c|}{} & \multicolumn{4}{c|}{CIFAR10 error} & \multicolumn{4}{c|}{CIFAR10.1 error} & \multicolumn{4}{c|}{CIFAR10-C error } & \multicolumn{4}{c|}{OD-AUROC }\\
    \hline
    \textbf{ResNet} & Base & 1st & 2nd & 3rd & Base & 1st & 2nd & 3rd & Base & 1st & 2nd & 3rd & Base & 1st & 2nd & 3rd\\
    \hhline{|=|====|====|====|====|}
    \ours & \emph{3.19} & \emph{2.15} & \emph{2.01} & \textbf{1.88} & \emph{7.00} & \emph{5.40} & \textbf{4.55} & \emph{4.70} & \emph{16.86} & \textbf{14.19} & \emph{15.02} & \emph{15.46} & \emph{98.95} & \textbf{99.20} & \emph{99.16} & \emph{98.98}\\
    \hline
    \base & 3.89 & 3.14 & 3.20 & 3.54 & 8.65 & 6.80 & 8.00 & 7.75 & 16.97 & 16.41 & 17.36 & 19.04 & 92.74 & 89.39 & 83.73 & 75.93\\
    \hline
    \baseod & 3.89 & 3.18 & 3.04 & 3.07 & 8.65 & 7.25 & 7.30 & 6.85 & 16.97 & 16.53 & 17.62 & 18.98 & 92.74 & 90.35 & 84.14 & 85.71\\
    \hline
    500k-TI \cite{CarEtAl19} & \multicolumn{4}{c|}{ 3.18 (50k Samples/class)} & \multicolumn{4}{c|}{ 7.25}  & \multicolumn{4}{c|}{ 17.35}  & \multicolumn{4}{c|}{ 94.34}\\
    \hhline{|=|====|====|====|====|}
    \textbf{Pyramid} & Base & 1st & 2nd & 3rd & Base & 1st & 2nd & 3rd & Base & 1st & 2nd & 3rd & Base & 1st & 2nd & 3rd\\
    \hhline{|=|====|====|====|====|}
    \ours & 1.72 & \emph{1.41} & \textbf{1.31} & \emph{1.43} & 4.40 & \emph{4.10} & \emph{3.50} & \textbf{3.00} & 13.93 & \emph{13.05} & \emph{12.88} & \emph{13.85} & \emph{99.30} & \emph{99.43} & \textbf{99.44} & \emph{99.38}\\
    \hline
    \base & \emph{1.51} & 1.76 & 2.08 & 2.58 & \emph{3.70} & 5.05 & 5.30 & 6.80 & \textbf{12.21} & 14.72 & 16.12 & 21.12 & 95.43 & 92.49 & 90.07 & 87.10 \\
    \hline
    \baseod & \emph{1.51} & 1.87 & 1.86 & 1.95 & \emph{3.70} & 4.50 & 4.80 & 4.65 & \textbf{12.21} & 14.03 & 15.75 & 18.80 & 95.43 & 92.51 & 91.51 & 91.55\\
    \hline
    \end{tabular}
    \caption{CIFAR10 - test error on CIFAR10 and CIFAR10.1, mean corruption error on CIFAR10-C and out-of-distribution detection performance OD-AUROC. \ours has the best improvement ($1.31\%)$ and final test error ($1.88\%$) for Resnet50 and is the only self-training method which improves for the Pyramid272 architecture by $0.41\%$ with $1.31\%$ test error. }
    \label{tab:cifar10_results}
\end{table*}

The mathematical treatment of our sample selection strategy is difficult, but it is instructive to check the case where at each iteration we impose $t+1$ soft-labels, $\hat{p}_t(k|x)$ defined by the teacher $\f{t}$ at iteration $t$ on all unlabeled points. Then we get the total expected loss
at iteration $t+1$:
\begin{align}\label{eq:exp-iter-loss} \Exp_{\pin}\big[L\big(Y,\f{t+1}(X)\big)\big] + 
\Exp_{\pall}\big[L\big(\hat{p}_{t}(X),\f{t+1}(X)\big)\big].
\end{align}
\begin{lemma}
The Bayes optimal prediction for \eqref{eq:exp-iter-loss} at iteration $t$ for $t\geq 0$
is given for any $x$ with $\pall(x)+\pin(x)>0$ and $k=1,\ldots,K$ as
\begin{align*}
 \hat{p}_{t}(k|x) &=  \pin(k|x)  + 
 \Big(\frac{\pall(x)}{\pin(x)+\pall(x)}\Big)^{t+1}
 \big(\frac{1}{K}-\pin(k|x)\big).
\end{align*}
\end{lemma}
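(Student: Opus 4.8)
The plan is to reduce the functional minimization in \eqref{eq:exp-iter-loss} to a family of pointwise problems over the probability simplex, extract a linear recursion in the iteration index, and solve that recursion by induction with the preceding lemma as the base case.

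First I would argue that, because \eqref{eq:exp-iter-loss} imposes no coupling between the predictive distributions at distinct inputs, the Bayes optimal $\f{t+1}$ may be determined pointwise: writing $\hat{p}(\cdot|x)$ for the softmax of $\f{t+1}(x)$ (which ranges over the whole simplex as $\f{t+1}$ varies) and expressing both expectations as integrals, the integrand at a fixed $x$ with $\pin(x)+\pall(x)>0$ is
\[ -\pin(x)\sum_{k=1}^K \pin(k|x)\log\hat{p}(k|x)\;-\;\pall(x)\sum_{k=1}^K \hat{p}_{t}(k|x)\log\hat{p}(k|x). \]
This is a weighted cross-entropy $-\sum_k w_k\log\hat{p}(k|x)$ with nonnegative weights $w_k = \pin(x)\pin(k|x)+\pall(x)\hat{p}_{t}(k|x)$.

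Next, a short Lagrange-multiplier computation (equivalently Gibbs' inequality) shows that the minimizer of a weighted cross-entropy over the simplex is the normalized weight vector $\hat{p}(k|x)=w_k/\sum_j w_j$. Since $\pin(\cdot|x)$ and $\hat{p}_{t}(\cdot|x)$ each sum to one, the normalizer collapses to $\pin(x)+\pall(x)$, giving the affine recursion
\[ \hat{p}_{t+1}(k|x) = (1-\rho)\,\pin(k|x) + \rho\,\hat{p}_{t}(k|x), \qquad \rho \bydef \frac{\pall(x)}{\pin(x)+\pall(x)}. \]
I would then close the argument by induction on $t$. The base case $t=0$ is exactly the preceding lemma, which can be read as this same recursion applied to the uniform teacher $\frac{1}{K}\ones$ (the label used in \eqref{eq:exp-oe-loss}). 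For the inductive step, substituting the claimed closed form into the recursion leaves $\pin(k|x)$ fixed and multiplies the deviation $\frac{1}{K}-\pin(k|x)$ by one further factor of $\rho$, upgrading $\rho^{t+1}$ to $\rho^{t+2}$ and matching the stated formula at $t+1$.

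I expect the only genuinely delicate point to be the justification of the pointwise reduction, namely that the infimum over (unrestricted) classifiers commutes with the integral so that a Bayes optimal predictive distribution may be chosen independently at each $x$. Once that standard argument is in place, the weighted-cross-entropy minimization and the one-line induction are routine.
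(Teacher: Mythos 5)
Your proposal is correct and follows essentially the same route as the paper: derive the pointwise recursion $\hat{p}_{t+1}(k|x) = \frac{\pin(x)\pin(k|x)+\pall(x)\hat{p}_t(k|x)}{\pin(x)+\pall(x)}$ by repeating the weighted cross-entropy minimization from the first lemma, then close by induction with that lemma as the base case. The only cosmetic difference is that you minimize over the simplex via Lagrange multipliers while the paper differentiates with respect to the logits $f_r$; both yield the same normalized-weights minimizer.
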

In particular, for any $x$ with $\pin(x)+\pall(x)>0$ we get :
\[ \lim_{t\rightarrow \infty} \hat{p}_t(k|x)=\begin{cases} \pin(k|x) & \textrm{ if } \pin(x)>0\\ \frac{1}{K} & \textrm{ if } \pin(x)=0.\end{cases}\]
Note that this is the perfect out-distribution aware classifier:  Bayes optimal for the in-distribution and maximal uncertainty on all non-task-related regions $(\pin(x)=0)$. 

However, this is just an asymptotic result. In the finite sample case we know that neural networks get overconfident on far away regions \cite{HeiAndBit2019} and thus we need the damping of the soft-labels on the unlabeled part in step D).

\section{Evaluation}\label{sec:exp}

We evaluate our out-distribution aware self-training (\ours) on CIFAR10/100 on two different architectures against two self-training baselines in an open world setting. Moreover, on SVHN we compare \ours in an open world setting against self-training in a closed world setting. More images and ablation studies can be found in the Appendix.

\textbf{Self-training Baselines:} The first baseline \base is a standard self-training scheme, \eg similar to \cite{xie2020self}, not adapted to the open world setting. It follows the steps of \ours with the following differences. In \base we select the unlabeled points in step C) only according to the in-distribution threshold and without any integration of out-distribution knowledge. In the training step E) one just minimizes the cross-entropy loss on the labeled data $\T$ (base classifier) plus pseudo-labeled data $\I$ during the iterations and uses no loss on the remaining unlabeled points $\U \backslash \I$. Apart from the calibration step A) this represents a classical self-training scheme, but \base also profits from more reliable soft-labels.
The second self-training baseline \baseod is partially out-distribution aware in the sense that the selection step C) is the same as in \ours, using both the in- and out-distribution thresholds. \baseod training uses the same loss as \base.

\textbf{Unlabeled dataset:} We use the 80 million tiny images dataset \cite{torralba200880} (denoted as 80MTI) as unlabeled dataset,  which contains $32\times 32$ color images, created by querying 53,464 different nouns from the wordnet hierarchy. Note that CIFAR10 and CIFAR100 are subsets of 80MTI \cite{krizhevsky2009learning} as well as the recent CIFAR10.1 dataset \cite{recht2018cifar10.1} designed as a new test set to assess the generalization of classifiers trained on CIFAR10. We thus remove (near)-duplicates of these datasets from 80MTI,  see the Appendix for details. We note that our removal procedure has higher recall than the one \cite{HenMazDie2019} which we discovered fails to remove all near-duplicates but is less aggressive than the one of \cite{CarEtAl19}, who remove more than $10$ million images just to remove duplicates of 10.000 CIFAR10 test images. After our duplicate removal process, the final 80MTI dataset contains 79106k images (190k images removed). Finally, note that 80MTI has been withdrawn by the authors as it contains a small subset of offensive images \cite{prabhu2020large}. While we respect this decision we decided to continue with this project as our method directly aims to not include information from offensive images that have no connection to the task at hand into its class representation. Additionally, we note that we do not make use of labels that could perpetuate unjust or harmful  stereotypes as we use 80MTI without any form of supervision. 

\textbf{Model Architectures and Training:} we use a standard ResNet50 \cite{he2016deep} and a larger PyramidNet272 \cite{han2017deep} with ShakeDrop regularization \cite{yamada2019shakedrop}. Unless stated otherwise, we train a base teacher model and three student networks and increase the ratio of additionally pseudo-labeled samples to the original training set from 5:1 over 10:1 to 15:1. As in \cite{xie2020self} we inject additional noise into the student training process by using AutoAugment \cite{cubuk18autoaugment} and Cutout \cite{devries17cutout}. ShakeDrop can also be interpreted as noisy student training 
similar to stochastic depth or dropout\cite{SriEtAl2014}. In each iteration, the ResNet50 models are trained from scratch for 250 epochs with a piecewise constant learning rate. 
As the PyramidNet272 requires a substantially larger amount of computational resources, we only train the base model from scratch and use fine-tuning for the student training in later iterations. Additional details 
can be found in the Appendix.

\def\sc_img_width{0.313\textwidth}
\def\line_separator{0.5pt}
\begin{figure*}
\small
\setlength\tabcolsep{2.0pt} 
\begin{tabular}{ |  c | c | c | c |}
\cline{2-4}
\multicolumn{1}{c|}{}& \multicolumn{1}{c|}{1st Iteration (25k)} & \multicolumn{1}{c|}{2nd Iteration (50k)} & \multicolumn{1}{c|}{3rd Iteration (75k)}\\
\hline
\rotatebox{90}{\hspace{-1.1cm}Baseline (\baseod)} & 
\makecell{\includegraphics[width=\sc_img_width]{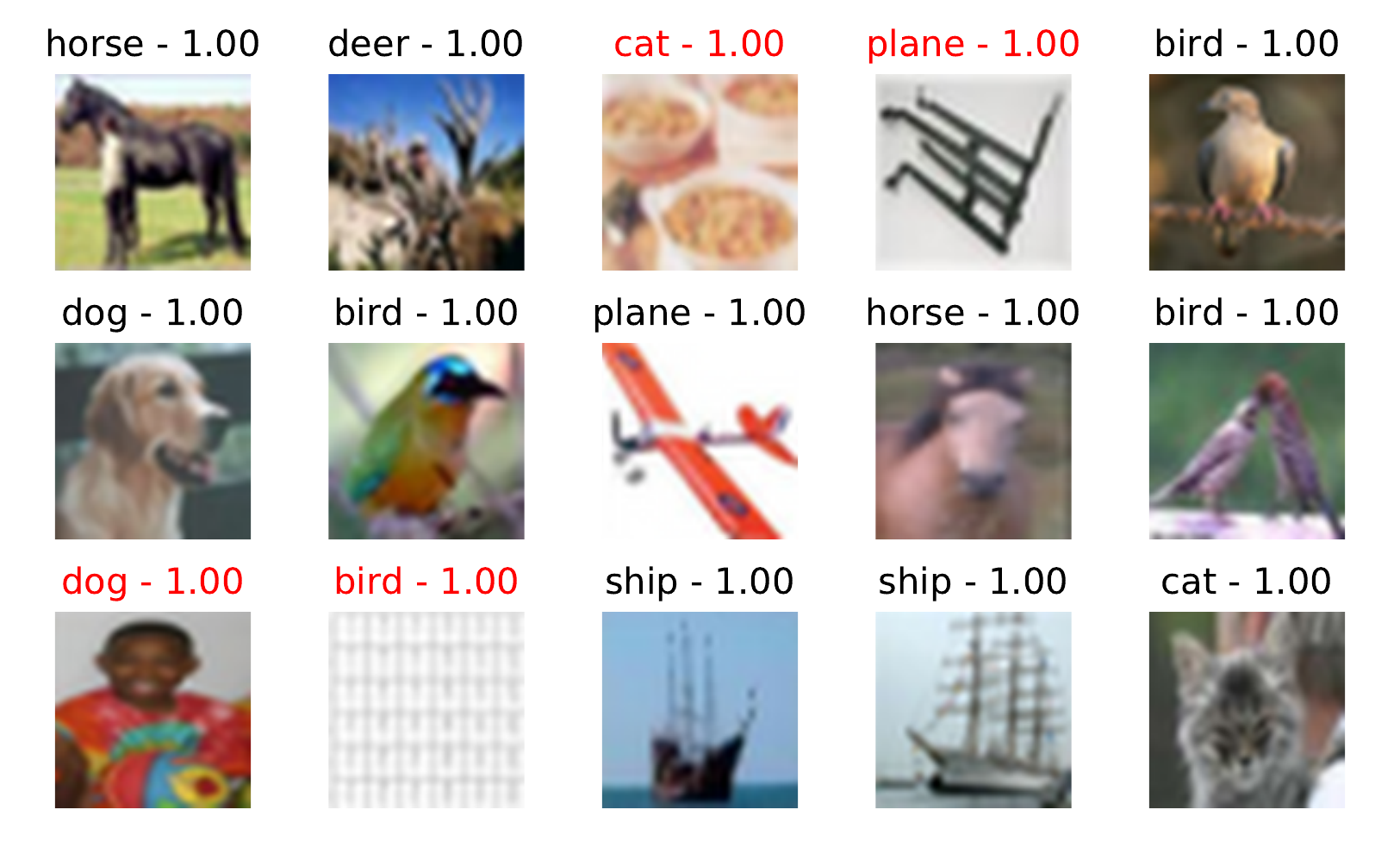}
} &
\makecell{\includegraphics[width=\sc_img_width]{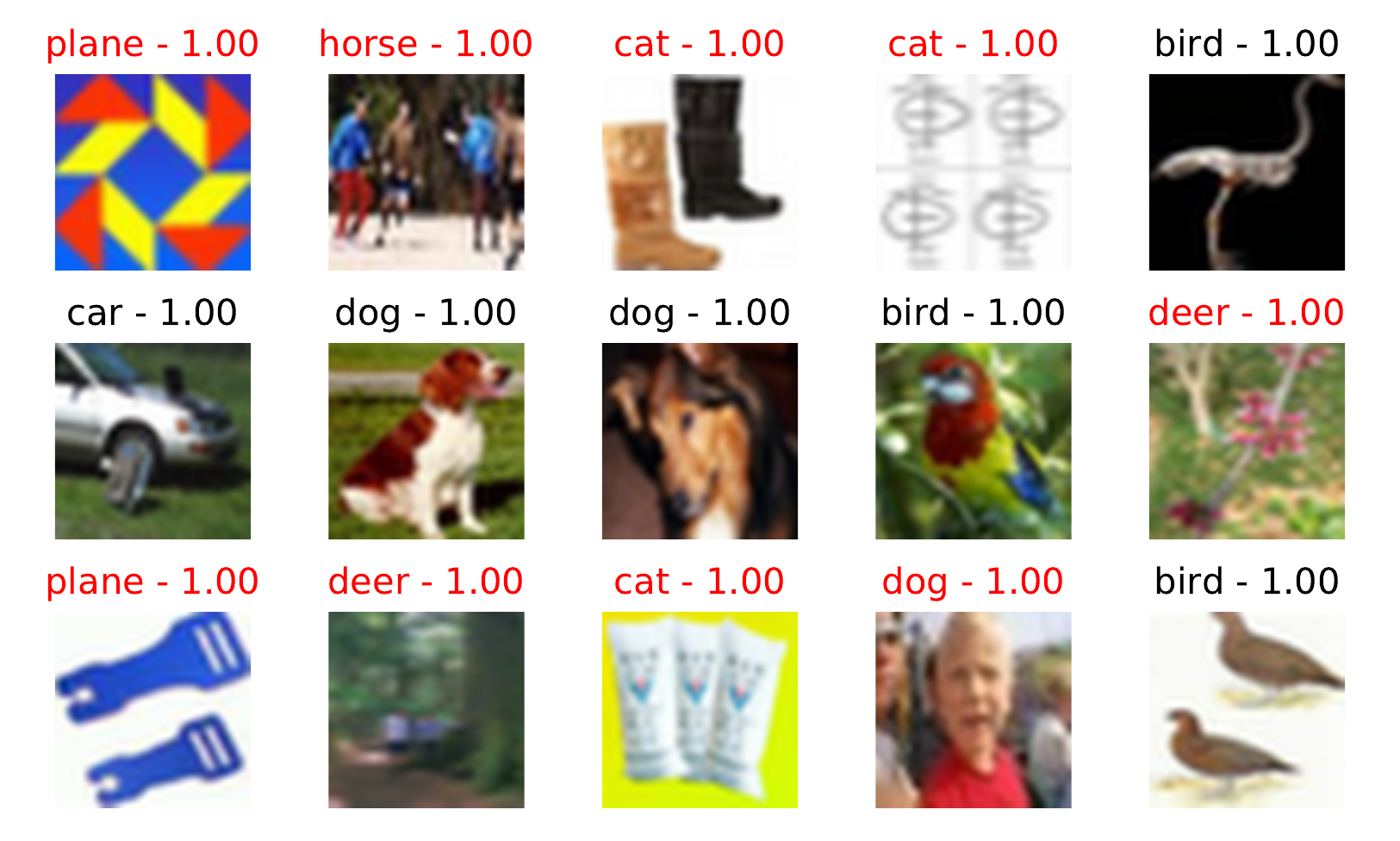}
} &
\makecell{\includegraphics[width=\sc_img_width]{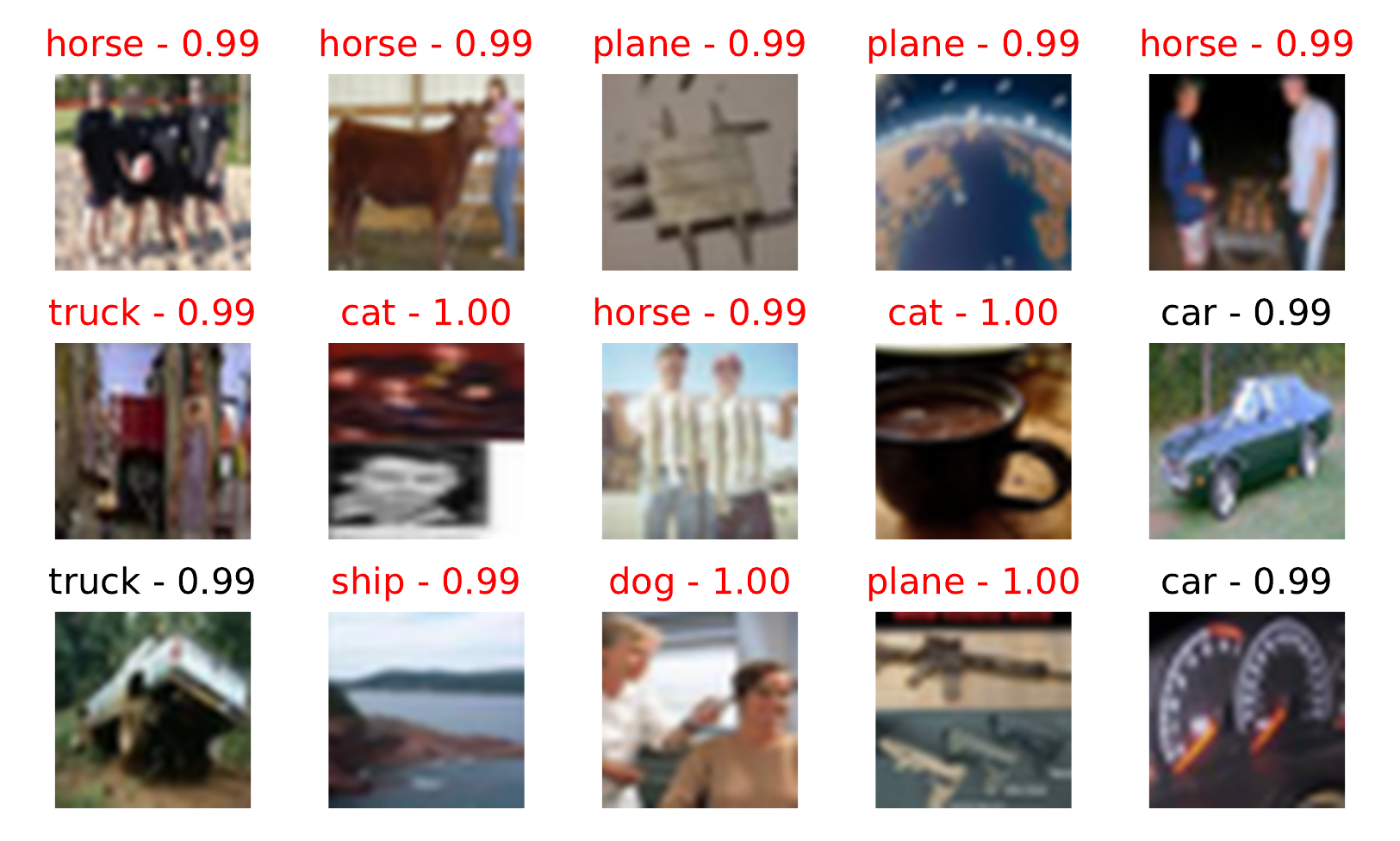}
}\\
\hline
\rotatebox{90}{\hspace{-0.3cm}\ours} &
\makecell{\includegraphics[width=\sc_img_width]{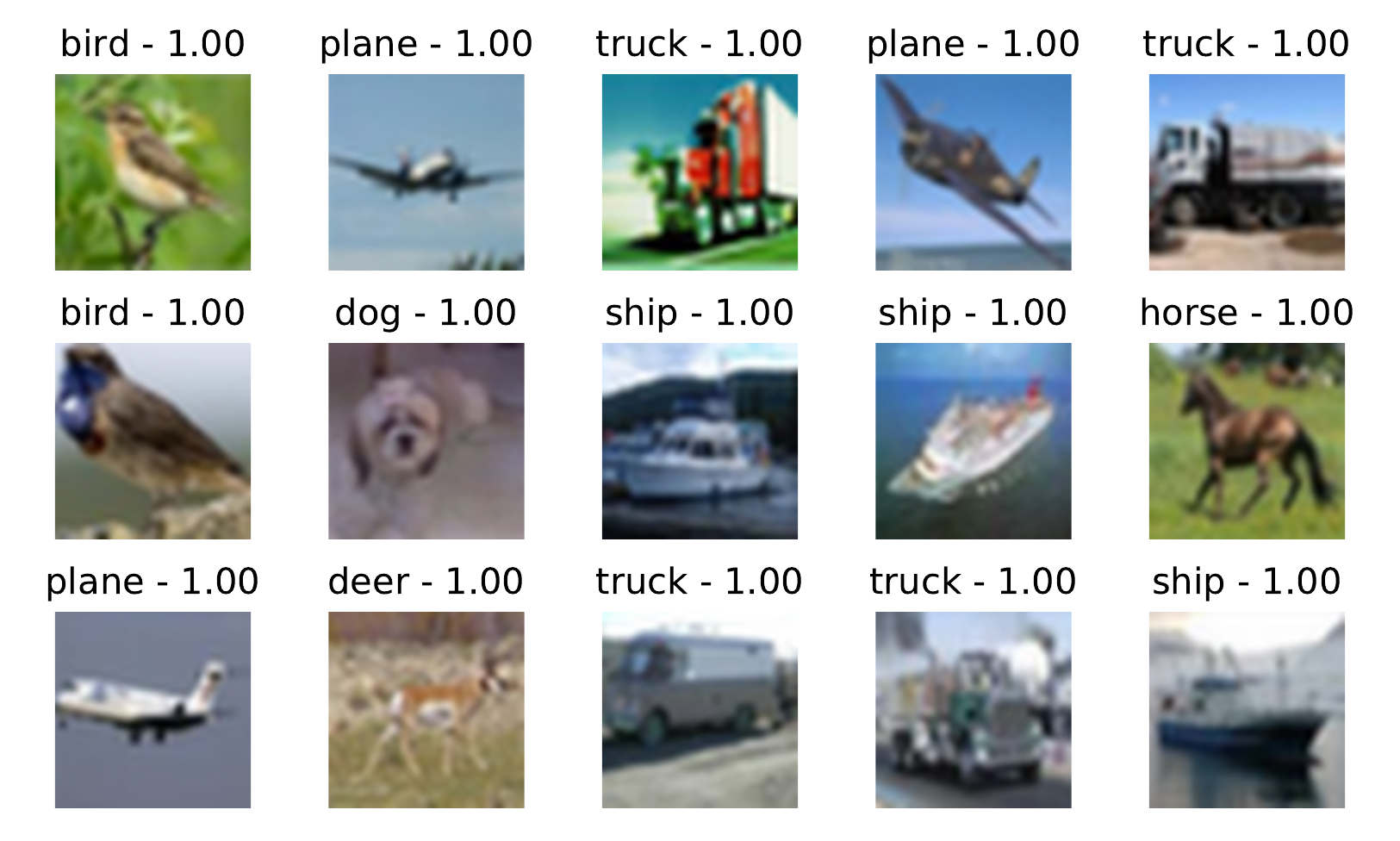}
} &
\makecell{\includegraphics[width=\sc_img_width]{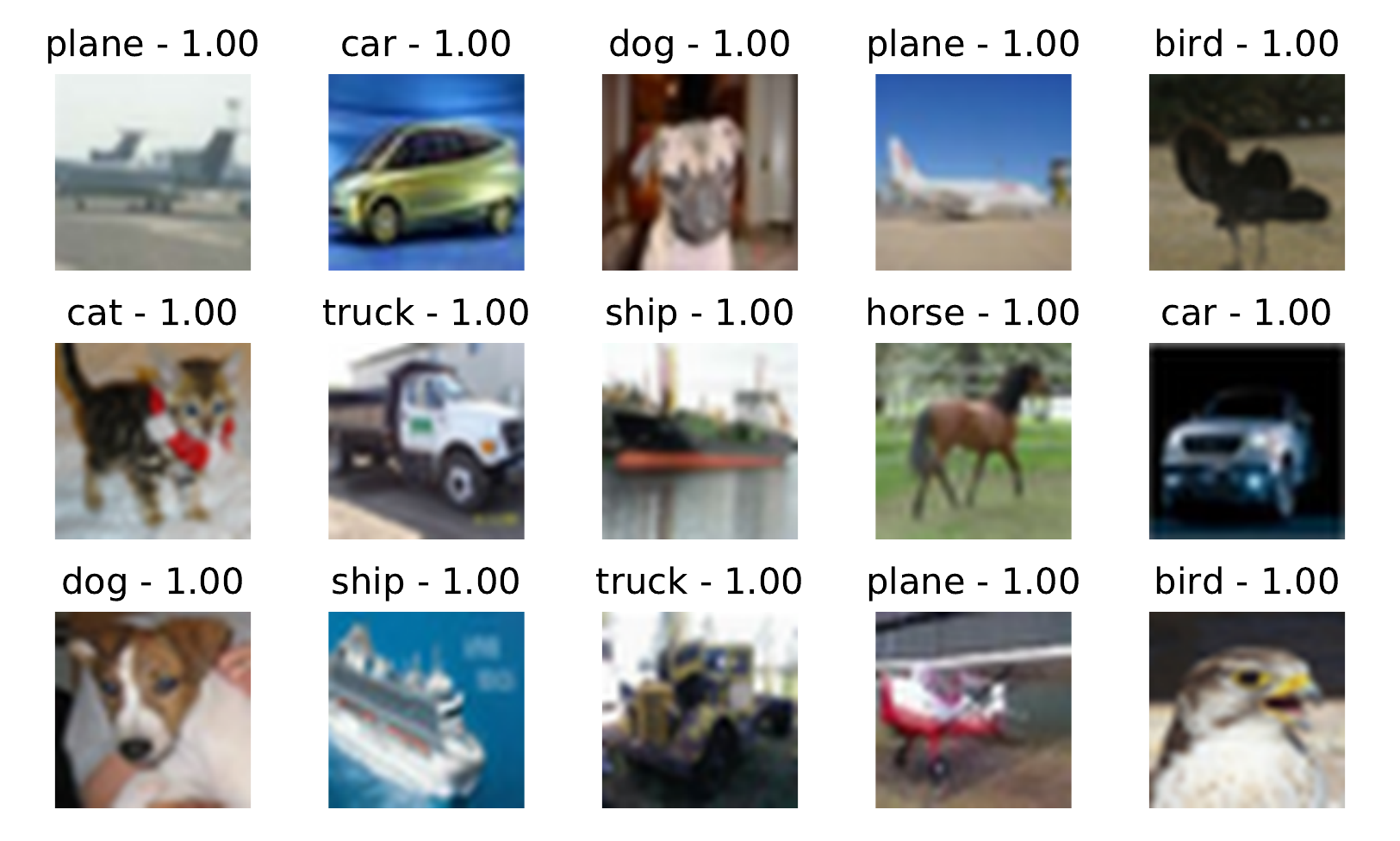}
} &
\makecell{\includegraphics[width=\sc_img_width]{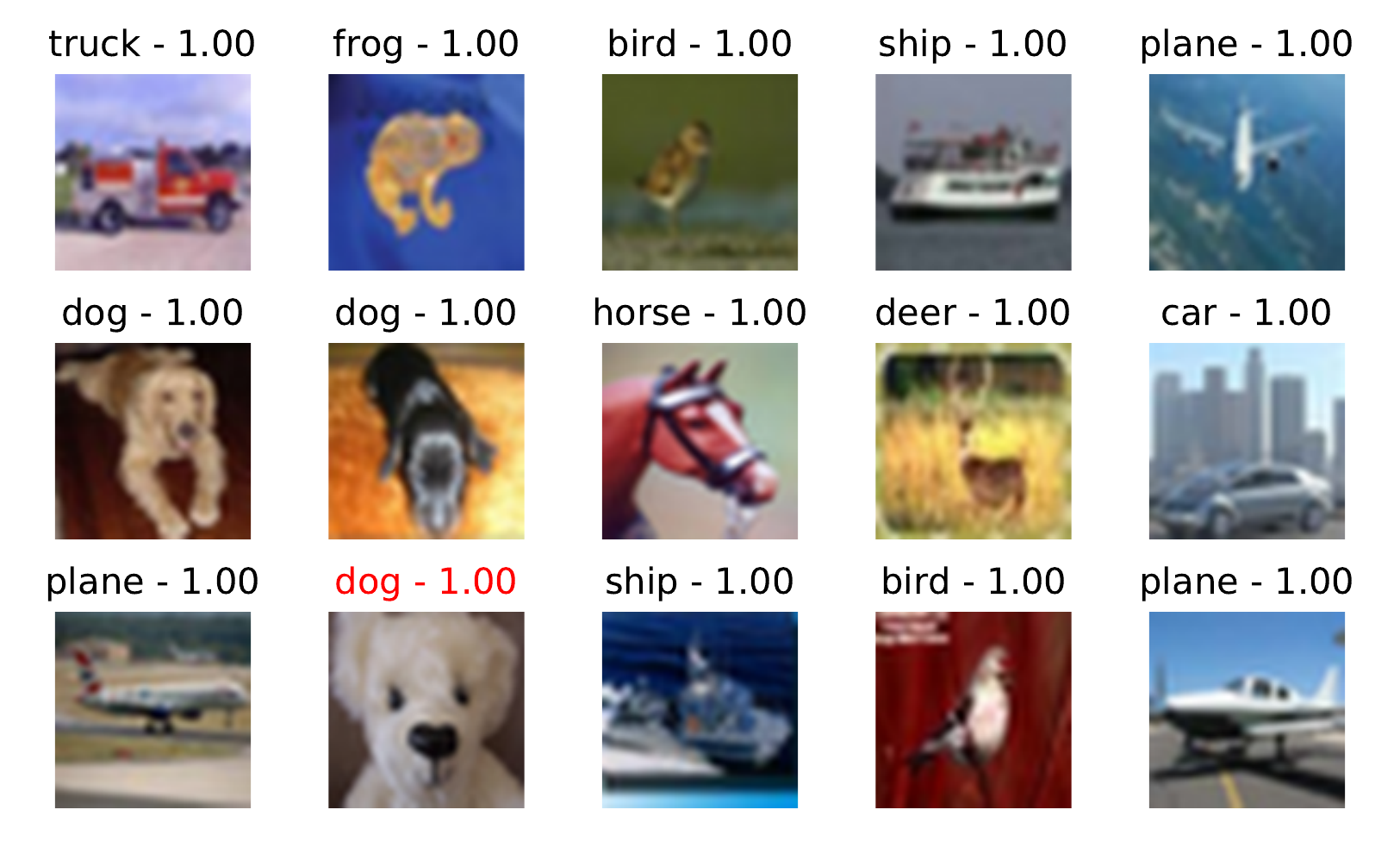}
}\\
\hline
\end{tabular}
\caption{CIFAR10 - random samples of exclusively selected samples from 80MTI for either \baseod (top) or \ours (bottom) over all three iterations of self-training. While \ours has good selection quality over all iterations, the selection of \baseod degrades even though it accepts less samples and shows a distribution shift: humans are classified as either `dog' or `horse'. }
\label{fig:cifar10_sample_demonstration}
\end{figure*}
\subsection{CIFAR10}
In order to get a validation set without splitting the train set, we use CIFAR10.1 (2k images) as validation set for i) determining the in-distribution threshold for the sample selection step C) and ii) for selecting the best model 
during training. 
As out-distribution for determining the out-distribution threshold in step C) we use CIFAR100 where we removed the classes "bus" and
"pickup-truck" as they can be confused with the classes "car" and "truck" of CIFAR10. For both thresholds we use $\alpha=99.8\%$ which is conservative but justified by the high accuracy of the base CIFAR10 classifiers. Thus it is desirable to add pseudo-labeled data only with very high precision.

In our evaluation, we go beyond the standard test error as this is not the only important property if the final classifier is  applied in an open world setting. Thus we report test error, the error on CIFAR10.1 (note that all methods have  optimized this error as we use this as validation set) and the mean corruption error on CIFAR10-C \cite{hendrycks2019benchmarking} which are 15 different corruptions \eg different kind of noise, blur, contrast and brightness applied in five levels of severity on the CIFAR10 test-set and we report the mean over all corruptions and levels. In an open world setting it is important to be able to distinguish task-related images from the out-distribution. Thus we compute the AUROC values for discriminating based on the confidence between the CIFAR10 test set and the out-distribution datasets: CIFAR100, SVHN, LSUN-CR\cite{LSUN}, Flowers\cite{nilsback2008automated} and Food-101\cite{bossard14food} and report the mean AUROC which we denote as OD-AUROC.

\begin{figure*}[h!]
\small
\centering
\begin{tabular}{c|c}
    \hspace{-12.5pt}
    \makecell{
   \includegraphics[width=0.495\textwidth]{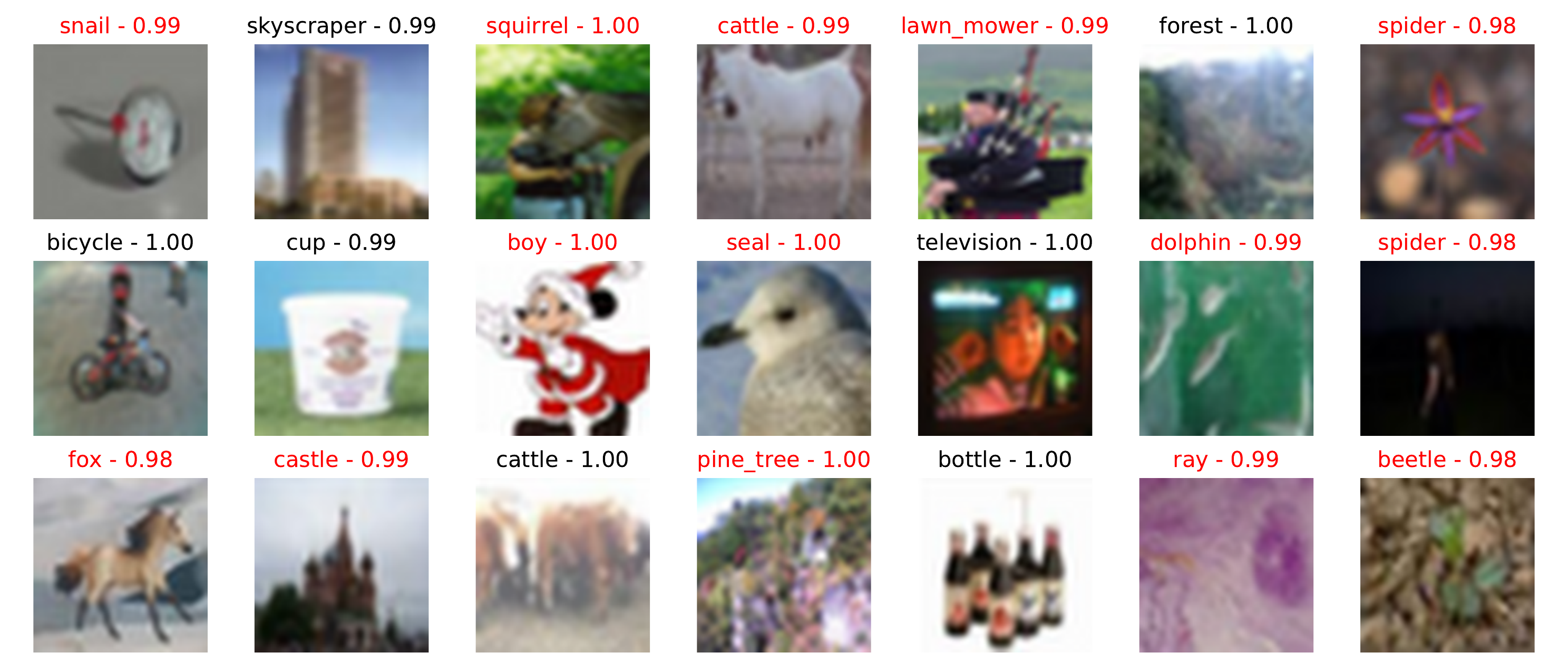}}
   &
   \makecell{
   \includegraphics[width=0.495\textwidth]{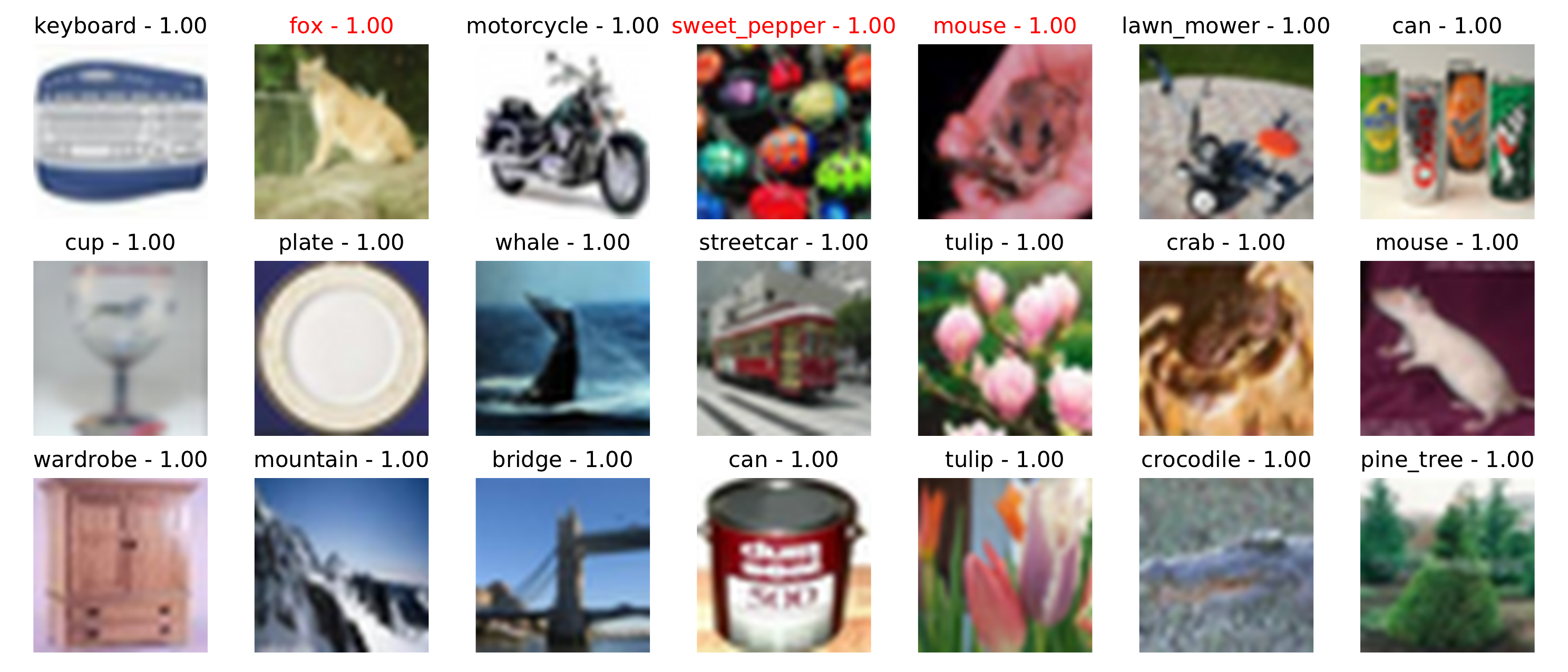}}\\
   \baseod & \ours
\end{tabular}
\caption{\label{fig:cifar100_samples}CIFAR100 - Random selection of 24 samples that are exclusively selected by either \baseod (left) or \ours (right) in the third iteration of self-training for the ResNet50 architecture. False positives are marked in red. }
\end{figure*}

%

\begin{figure*}[]
  \begin{minipage}[b]{0.74\textwidth}
    \centering
    \newcolumntype{F}{>{\centering\arraybackslash}m{1cm}}
    \newcolumntype{M}{>{\centering\arraybackslash}m{0.55cm}}
    \small
    \begin{tabular}{|F|*{4}{M}|*{4}{M}|*{4}{M}|}
    \hhline{~|----|----|----|}
    \multicolumn{1}{c|}{} & \multicolumn{4}{c|}{CIFAR100 error} & \multicolumn{4}{c|}{CIFAR100-C error} & \multicolumn{4}{c|}{OD-AUROC}\\
    \hline
    \textbf{ResNet} & Base & 1st & 2nd & 3rd & Base & 1st & 2nd & 3rd & Base & 1st & 2nd & 3rd\\
    \hhline{|=|====|====|====|}
    \ours & 20.02 & \emph{15.91} & \emph{15.38} & \textbf{14.86} & 43.39 & \emph{36.92} &  \emph{35.64} & \textbf{34.82} & \emph{91.76} & \textbf{93.42} & \emph{93.28} & \emph{92.49}\\
    \hline
    \base & \emph{19.31} & 17.56 & 17.54 & 18.40 & \emph{42.51} & 39.92 & 39.70 & 41.76 & 82.91 & 81.20 & 76.68 & 74.83  \\
    \hline
    \baseod & \emph{19.31} &17.97 & 17.84 & 18.76 & \emph{42.51} & 38.55 & 39.87 & 41.70 & 82.91 & 81.36 & 77.57 & 73.74\\
    \hhline{|=|====|====|====|}
    \textbf{Pyramid} & Base & 1st & 2nd & 3rd & Base & 1st & 2nd & 3rd & Base & 1st & 2nd & 3rd\\
    \hhline{|=|====|====|====|}
    \ours & 12.40 & \textbf{11.17} & \emph{11.34} & \emph{11.71} & 33.56 & \textbf{32.37} & \emph{32.97} & \emph{34.54} & \textbf{95.16} & \emph{94.91} & \emph{94.76} & \emph{94.45}\\
    \hline
    \base & \emph{11.93} & 12.29 & 13.30 & 14.74 & \emph{32.84} & 34.37 & 37.45 & 41.27 & 84.93 & 80.43 & 78.14 & 76.96\\
    \hline
    \baseod & \emph{11.93} & 12.47 & 13.39 & 14.51 & \emph{32.84} & 34.45 & 37.56 & 41.56 & 84.93 & 80.97 & 78.88 & 76.55\\
    \hline
    \end{tabular}
    \captionof{table}{\label{tab:CIFAR100}CIFAR100: 
    \base and \baseod do not improve test error over the base classifier. Only \ours improves consistently for ResNet50 over iterations ($5.16\%$ better) and can even improve performance for the Pyramid272 architecture ($1.23\%$ better). 
    }
    \label{tab:cifar100_error}
  \end{minipage}
  \hfill
  \begin{minipage}[b]{0.23\textwidth}
      \centering
      \includegraphics[width=1\textwidth]{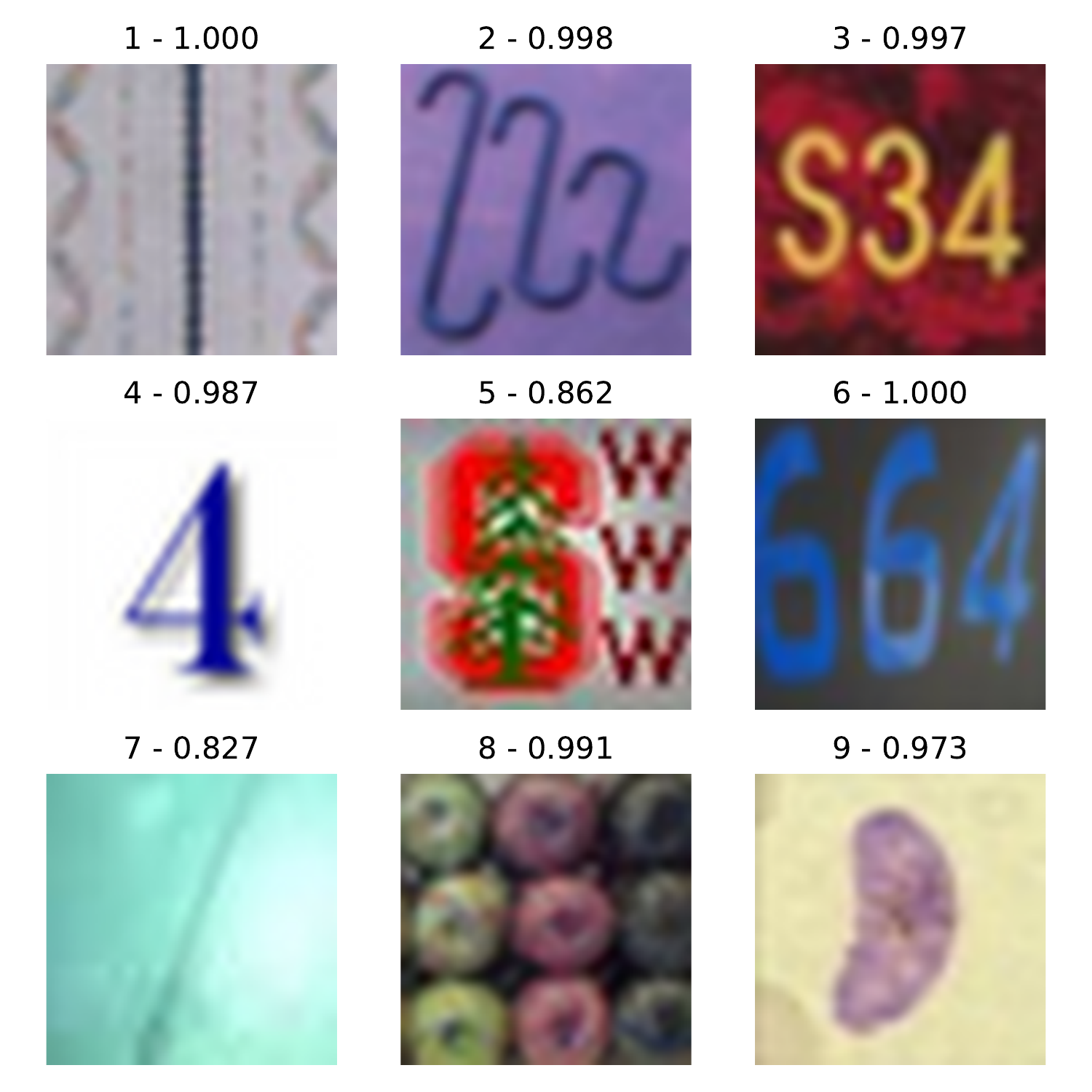}
      \captionof{figure}{\label{fig:SVHN-sample}SVHN: Samples selected by \ours from 80MTI show often digits.}
  \end{minipage}
\end{figure*}

\textbf{ResNet50:} In Figure \ref{fig:plots} we report
the test error and number of added pseudo-labeled examples as they evolve over the iterations. All results are summarized in Table \ref{tab:cifar10_results}. \ours is the only method able to improve the test performance over all 3 iterations using  $15$ times the number of original training samples. The two baseline methods \base and \baseod improve the test error until the first resp. second iteration but get worse or stagnate in later iterations. This is due to large errors in the selected samples which lead to a distribution shift as illustrated in Figure \ref{fig:cifar10_sample_demonstration} where we plot images which are \emph{exclusively} selected by \ours and \baseod (\base is even worse). The problem of \baseod is that a lot of out-distribution images are classified with high confidence and thus even the out-distribution threshold cannot prevent that some of them are selected. As the training of \baseod in contrast to our \ours does not enforce low confidence on unlabeled images, the sample selection quality degrades significantly over iterations. In particular, one can notice a distribution shift as almost all images containing humans are classified as "horse" or "dog". Surprisingly, the test performance of \baseod does not degrade more although most selected samples are not class related.
In contrast \ours selects mostly task-related images from 80MTI and stops selecting examples in a class-specific way e.g. \ours selects at most 12k images of the class "frog" as 80MTI does not contain as many "frog" images in comparison to more dominant classes like "cars".  While the catastrophic failure of the sample selection process for \base and \baseod is not apparent from the test error, there is a strong drop in OD-AUROC. In contrast \ours improves the OD-AUROC in the first two iterations which shows that \ours learns a robust representation of the classes. This is also reflected in the CIFAR10.1 error, where \ours achieves a CIFAR10 to CIFAR10.1 gap of $2.82\%$, which is significantly smaller than the smallest gap of 4.1\% reported in \cite{recht2018cifar10.1} and and the improved corruption resistance compared to the base model.

We also trained a ResNet50 with the 500k selected samples from 80MTI of \cite{CarEtAl19} 
and report the results in Table \ref{tab:cifar10_results}. The test error of $3.18\%$ is similar to \base and \baseod in the 2nd iteration (where we also add up to $50k$ per class) but significantly worse than the $2.01\%$ of \ours.

In summary, despite the better base test error, \ours, is able to improve by $1.31\%$ from $3.19\%$ to $1.88\%$ whereas the largest improvement for the two baseline methods is $0.85\%$ ($3.89\%$ to $3.04\%$). Up to our knowledge $1.88\%$ is the best reported performance of a ResNet50 on CIFAR10. Concerning other results with the same augmentation, \cite{cubuk18autoaugment} require a much larger AmoebaNet-B to achieve an error rate of $1.80\%$ and only achieve $2.6\%$ with a WideResNet-28x10, which typically outperforms a ResNet50 \cite{ZagKom2016}. 


\textbf{PyramidNet272:} As the base performance is already below $2\%$ further improvements are much harder to realize and can only be obtained by succeeding in the challenging task to select class relevant samples of very high-quality from the large pool of unlabeled samples. \ours achieves this by improving to $1.31\%$ accuracy which up to our knowledge is the best test accuracy achieved with this architecture (previously $1.36\%$, see \cite{harris2020fmix}). In contrast, due to poor performance in the sample selection \base and \baseod degrade all performance measures from the first iteration on and thus are not able to profit at all from unlabeled data.


\subsection{CIFAR100}
For CIFAR100 we randomly select 50 out of the 500 training samples per class as validation set. As certified out-distribution, 
 we use CIFAR10 without classes "car" and "truck" as they are ambiguous wrt to ``pickup-truck''. In- and out-distribution thresholds are set to $98\%$, due to the lower base accuracy on CIFAR100. AUROC values are calculated wrt CIFAR10, SVHN, LSUN CR and FGVC-Aircraft.   
In Figure \ref{fig:cifar100_samples} and Table \ref{tab:CIFAR100}, one can see that for ResNet50 only \ours is able to improve the test error (by $5.16\%$) throughout iterations and for the PyramidNet \ours is the only one which improves test error at all (by $1.23\%$).
In comparison,  \cite{athiwaratkun2019consistent} report in Table 3 an improvement of $1.17\%$ for 50k add. labels per class on a 13-layer CNN and worse performance for a ResNet26 
(Table 5) even with weak-supervision and $50$ additional labeled points per class. Unlike CIFAR10, the out-distribution threshold employed by \baseod does not lead to improvements over \base. This shows that the out-distribution threshold is only useful if the confidence ranking of the teacher model is reliable. We show a random selection of exclusively added samples by \baseod and \ours (third iteration) in Figure \ref{fig:cifar100_samples}.


\subsection{SVHN}
\begin{table}
    \centering
    \begin{tabular}{|c||c|c|c|c|}
        \hline 
        \textbf{SVHN-Error} & Base & 1st & 2nd & 3rd \\
        \hhline{|=#=|=|=|=|}
        \ours & \emph{1.94} & 1.55 & \emph{1.46} & \textbf{1.36}\\
        non-SVHN sel. & & 0.4\% & 1.6\% & 5.9\%\\
        \hhline{|=#=|=|=|=|}
        \base-CW & 2.03 & \emph{1.54} & 1.53 & 1.52\\ 
        \hline
        Supervised Train+521k & \multicolumn{4}{c|}{1.24}\\
        \hline
    \end{tabular}
    \caption{\label{tab:svhn_resnet_error}SVHN: test error of \ours in open world (other row: fraction of selected 80MTI samples) vs. self-training \base-CW in closed world setting and supervised training.}
\end{table}

In this experiment, we show that \ours in an open world setting can match the performance of standard closed world self-training . This is done by mixing 521k extra validation samples of SVHN with 80MTI to generate an unlabeled dataset for \ours whereas the closed world self-training scheme only sees the 521k additional SVHN samples as unlabeled data (for more details see the Appendix).
Table \ref{tab:svhn_resnet_error} shows that \ours even outperforms the closed world baseline and comes close to the fully supervised baseline (original training set+521k labeled examples). Table \ref{tab:svhn_resnet_error} also shows that even in the third iteration of \ours only $5.9\%$ of all added samples are not from SVHN. However, that does not mean that these 80TIM samples are wrongly added as they often show digits or digit specific features, see Figure \ref{fig:SVHN-sample} for a random selection of these samples. 




\section{Conclusion} 
 We show that using \ours, it is possible to leverage information from large unlabeled datasets with only a tiny fraction of task-related samples and consistently improve over the supervised baseline on the labeled dataset. The resulting classifiers are more accurate and robust and show better out-distribution detection performance.

{\small
\bibliographystyle{ieee_fullname}
\bibliography{egbib}

\begin{thebibliography}{10}\itemsep=-1pt

\bibitem{athiwaratkun2019consistent}
Ben Athiwaratkun, Marc Finzi, Pavel Izmailov, and Andrew~Gordon Wilson.
\newblock There are many consistent explanations of unlabeled data: Why you
  should average.
\newblock In {\em ICLR}, 2019.

\bibitem{blum1998combining}
Avrim Blum and Tom Mitchell.
\newblock Combining labeled and unlabeled data with co-training.
\newblock In {\em COLT}, 1998.

\bibitem{bossard14food}
Lukas Bossard, Matthieu Guillaumin, and Luc Van~Gool.
\newblock Food-101 -- mining discriminative components with random forests.
\newblock In {\em ECCV}, 2014.

\bibitem{BoultEtAL2019OpenWorld}
T.~E. Boult, S. Cruz, A.R. Dhamija, M. Gunther, J. Henrydoss, and W.J.
  Scheirer.
\newblock Learning and the unknown: Surveying steps toward openworld
  recognition.
\newblock In {\em AAAI}, 2019.

\bibitem{CarEtAl19}
Yair Carmon, Aditi Raghunathan, Ludwig Schmidt, John~C Duchi, and Percy~S
  Liang.
\newblock Unlabeled data improves adversarial robustness.
\newblock In {\em NeurIPS}, 2019.

\bibitem{Chapelle2006SSL}
O. Chapelle, B. Sch{\"o}lkopf, and A. Zien.
\newblock {\em Semi-Supervised Learning}.
\newblock MIT Press, 2006.

\bibitem{ChenEtAL2020SSL}
Yanbei Chen, Xiatian Zhu, Wei Li, and Shaogang Gong.
\newblock Semi-supervised learning under class distribution mismatch.
\newblock In {\em AAAI}, 2020.

\bibitem{cubuk18autoaugment}
Ekin~D Cubuk, Barret Zoph, Dandelion Mane, Vijay Vasudevan, and Quoc~V Le.
\newblock Autoaugment: Learning augmentation strategies from data.
\newblock In {\em CVPR}, 2019.

\bibitem{devries17cutout}
Terrance DeVries and Graham~W Taylor.
\newblock Improved regularization of convolutional neural networks with cutout.
\newblock {\em arXiv preprint}, 2017.

\bibitem{grandvalet2005semi}
Yves Grandvalet and Yoshua Bengio.
\newblock Semi-supervised learning by entropy minimization.
\newblock In {\em NeurIPS}, 2005.

\bibitem{GuoEtAl2017}
C. Guo, G. Pleiss, Y. Sun, and K. Weinberger.
\newblock On calibration of modern neural networks.
\newblock In {\em ICML}, 2017.

\bibitem{guo2020self}
Lan-Zhe Guo, Zhen-Yu Zhang, Yuan Jiang, Yu-Feng Li, and Zhi-Hua Zhou.
\newblock Safe deep semi-supervised learning for unseen-class unlabeled data.
\newblock In {\em ICML}, 2020.

\bibitem{han2017deep}
Dongyoon Han, Jiwhan Kim, and Junmo Kim.
\newblock Deep pyramidal residual networks.
\newblock In {\em CVPR}, 2017.

\bibitem{harris2020fmix}
Ethan Harris, Antonia Marcu, Matthew Painter, Mahesan Niranjan, and Adam
  Pr{\"u}gel-Bennett~Jonathon Hare.
\newblock Fmix: Enhancing mixed sample data augmentation.
\newblock {\em arXiv preprint}, 2020.

\bibitem{he2016deep}
K. He, X. Zhang, , S. Ren, and J. Sun.
\newblock Deep residual learning for image recognition.
\newblock In {\em CVPR}, 2016.

\bibitem{HeiAndBit2019}
M. Hein, M. Andriushchenko, and J. Bitterwolf.
\newblock Why {ReLU} networks yield high-confidence predictions far away from
  the training data and how to mitigate the problem.
\newblock In {\em CVPR}, 2019.

\bibitem{hendrycks2019benchmarking}
Dan Hendrycks and Thomas Dietterich.
\newblock Benchmarking neural network robustness to common corruptions and
  perturbations.
\newblock In {\em ICLR}, 2019.

\bibitem{hendrycks2016baseline}
Dan Hendrycks and Kevin Gimpel.
\newblock A baseline for detecting misclassified and out-of-distribution
  examples in neural networks.
\newblock In {\em ICLR}, 2017.

\bibitem{HenMazDie2019}
D. Hendrycks, M. Mazeika, and T. Dietterich.
\newblock Deep anomaly detection with outlier exposure.
\newblock In {\em ICLR}, 2019.

\bibitem{iscen2019label}
Ahmet Iscen, Giorgos Tolias, Yannis Avrithis, and Ondrej Chum.
\newblock Label propagation for deep semi-supervised learning.
\newblock In {\em CVPR}, 2019.

\bibitem{krizhevsky2009learning}
Alex Krizhevsky and Geoffrey Hinton.
\newblock Learning multiple layers of features from tiny images.
\newblock Technical report, Citeseer, 2009.

\bibitem{KriSutHin2012}
A. Krizhevsky, I. Sutskever, and G.~E. Hinton.
\newblock Imagenet classification with deep convolutional neural networks.
\newblock In {\em NeurIPS}, 2012.

\bibitem{laine2016temporal}
Samuli Laine and Timo Aila.
\newblock Temporal ensembling for semi-supervised learning.
\newblock {\em arXiv preprint}, 2016.

\bibitem{lee2013pseudo}
Dong-Hyun Lee.
\newblock Pseudo-label: The simple and efficient semi-supervised learning
  method for deep neural networks.
\newblock In {\em Workshop on challenges in representation learning, ICML},
  2013.

\bibitem{lee2018simple}
K. Lee, H. Lee, K. Lee, and J. Shin.
\newblock A simple unified framework for detecting out-of-distribution samples
  and adversarial attacks.
\newblock In {\em NeurIPS}, 2018.

\bibitem{LiaLiSri2018}
S. Liang, Y. Li, and R. Srikant.
\newblock Enhancing the reliability of out-of-distribution image detection in
  neural networks.
\newblock In {\em ICLR}, 2018.

\bibitem{miyato2018virtual}
Takeru Miyato, Shin-ichi Maeda, Masanori Koyama, and Shin Ishii.
\newblock Virtual adversarial training: a regularization method for supervised
  and semi-supervised learning.
\newblock {\em IEEE PAMI}, 41(8):1979--1993, 2018.

\bibitem{NguYosClu2015}
A. Nguyen, J. Yosinski, and J. Clune.
\newblock Deep neural networks are easily fooled: {H}igh confidence predictions
  for unrecognizable images.
\newblock In {\em CVPR}, 2015.

\bibitem{nilsback2008automated}
Maria-Elena Nilsback and Andrew Zisserman.
\newblock Automated flower classification over a large number of classes.
\newblock In {\em ICVGIP}, 2008.

\bibitem{oliver2018realistic}
Avital Oliver, Augustus Odena, Colin~A Raffel, Ekin~Dogus Cubuk, and Ian
  Goodfellow.
\newblock Realistic evaluation of deep semi-supervised learning algorithms.
\newblock In {\em NeurIPS}, 2018.

\bibitem{papadopoulos2019outlier}
Aristotelis-Angelos Papadopoulos, Mohammad~Reza Rajati, Nazim Shaikh, and
  Jiamian Wang.
\newblock Outlier exposure with confidence control for out-of-distribution
  detection.
\newblock {\em arXiv preprint}, 2019.

\bibitem{prabhu2020large}
Vinay~Uday Prabhu and Abeba Birhane.
\newblock Large image datasets: A pyrrhic win for computer vision?
\newblock {\em arXiv preprin}, 2020.

\bibitem{recht2018cifar10.1}
Benjamin Recht, Rebecca Roelofs, Ludwig Schmidt, and Vaishaal Shankar.
\newblock Do cifar-10 classifiers generalize to cifar-10?
\newblock In {\em arXiv preprint}, 2018.

\bibitem{riloff1996automatically}
Ellen Riloff.
\newblock Automatically generating extraction patterns from untagged text.
\newblock In {\em AAAI}, 1996.

\bibitem{riloff2003learning}
Ellen Riloff and Janyce Wiebe.
\newblock Learning extraction patterns for subjective expressions.
\newblock In {\em EMNLP}, 2003.

\bibitem{russakovsky2015imagenet}
Olga Russakovsky, Jia Deng, Hao Su, Jonathan Krause, Sanjeev Satheesh, Sean Ma,
  Zhiheng Huang, Andrej Karpathy, Aditya Khosla, Michael Bernstein, et~al.
\newblock Imagenet large scale visual recognition challenge.
\newblock {\em IJCV}, 115(3):211--252, 2015.

\bibitem{sajjadi2016regularization}
Mehdi Sajjadi, Mehran Javanmardi, and Tolga Tasdizen.
\newblock Regularization with stochastic transformations and perturbations for
  deep semi-supervised learning.
\newblock In {\em NeurIPS}, 2016.

\bibitem{scudder1965probability}
H Scudder.
\newblock Probability of error of some adaptive pattern-recognition machines.
\newblock {\em IEEE Transactions on Information Theory}, 11(3):363--371, 1965.

\bibitem{shi2018transductive}
Weiwei Shi, Yihong Gong, Chris Ding, Zhiheng MaXiaoyu~Tao, and Nanning Zheng.
\newblock Transductive semi-supervised deep learning using min-max features.
\newblock In {\em ECCV}, 2018.

\bibitem{SriEtAl2014}
N. Srivastava, G. Hinton, A. Krizhevsky, I. Sutskever, and R. Salakhutdinov.
\newblock Dropout: A simple way to prevent neural networks from overfitting.
\newblock {\em JMLR}, 15:1929--1958, 2014.

\bibitem{tan2020efficientnet}
Mingxing Tan and Quoc~V. Le.
\newblock Efficientnet: Rethinking model scaling for convolutional neural
  networks.
\newblock In {\em ICML}, 2019.

\bibitem{tarvainen2017mean}
Antti Tarvainen and Harri Valpola.
\newblock Mean teachers are better role models: Weight-averaged consistency
  targets improve semi-supervised deep learning results.
\newblock In {\em NeurIPS}, 2017.

\bibitem{torralba200880}
Antonio Torralba, Rob Fergus, and William~T Freeman.
\newblock 80 million tiny images: A large data set for nonparametric object and
  scene recognition.
\newblock {\em IEEE PAMI}, 30(11):1958--1970, 2008.

\bibitem{wang2004image}
Zhou Wang, Alan~C Bovik, Hamid~R Sheikh, and Eero~P Simoncelli.
\newblock Image quality assessment: from error visibility to structural
  similarity.
\newblock {\em IEEE transactions on image processing}, 13(4):600--612, 2004.

\bibitem{xie2020self}
Qizhe Xie, Minh-Thang Luong, Eduard Hovy, and Quoc~V Le.
\newblock Self-training with noisy student improves imagenet classification.
\newblock In {\em CVPR}, pages 10687--10698, 2020.

\bibitem{yalniz2019billion}
I~Zeki Yalniz, Herv{\'e} J{\'e}gou, Kan Chen, Manohar Paluri, and Dhruv
  Mahajan.
\newblock Billion-scale semi-supervised learning for image classification.
\newblock {\em arXiv preprint}, 2019.

\bibitem{yamada2019shakedrop}
Yoshihiro Yamada, Masakazu Iwamura, Takuya Akiba, and Koichi Kise.
\newblock Shakedrop regularization for deep residual learning.
\newblock {\em IEEE Access}, 2019.

\bibitem{LSUN}
F. Yu, A. Seff, Y. Zhang, S. Song, T. Funkhouser, and J. Xiao.
\newblock Lsun: Construction of a large-scale image dataset using deep learning
  with humans in the loop.
\newblock arXiv preprint, 2015.

\bibitem{yu2020multitask}
Qing Yu, Daiki Ikami, Go Irie, and Kiyoharu Aizawa.
\newblock Multi-task curriculum framework for open-set semi-supervised
  learning.
\newblock In {\em ECCV}, 2020.

\bibitem{ZagKom2016}
S. Zagoruyko and N. Komodakis.
\newblock Wide residual networks.
\newblock In {\em BMVC}, 2016.

\bibitem{zhang2018unreasonable}
Richard Zhang, Phillip Isola, Alexei~A Efros, Eli Shechtman, and Oliver Wang.
\newblock The unreasonable effectiveness of deep features as a perceptual
  metric.
\newblock In {\em Proceedings of the IEEE conference on computer vision and
  pattern recognition}, pages 586--595, 2018.

\bibitem{zhou2018edf}
Giulio Zhou, Subramanya Dulloor, David~G Andersen, and Michael Kaminsky.
\newblock Edf: ensemble, distill, and fuse for easy video labeling.
\newblock {\em arXiv preprint}, 2018.

\bibitem{zhu2009introduction}
Xiaojin Zhu and Andrew~B Goldberg.
\newblock Introduction to semi-supervised learning.
\newblock {\em Synthesis lectures on artificial intelligence and machine
  learning}, 3(1):1--130, 2009.

\bibitem{zhu2005semi}
Xiaojin~Jerry Zhu.
\newblock Semi-supervised learning literature survey.
\newblock Technical report, University of Wisconsin-Madison Department of
  Computer Sciences, 2005.

\end{thebibliography}
}

\clearpage
\appendix

\section{Proofs for Bayesian Decision Theory of
Self-Training}\label{sec:th-proofs}
We provide here the missing proofs of 
Section \ref{sec:th} where we have analyzed our iterations of self-training in the framework of Bayesian decision theory

We repeat the setting so that this section is self-contained. We assume that our labeled examples $(x_i,y_i)_{i=1}^n$ are drawn i.i.d. from $\pin(x,y)$. The unlabeled data $(z_i)_{i=1}^m$ is drawn i.i.d. from $\pall(x)$ where we think of $\pall$ in an open world setting as the marginal distribution of a mixture of a very large number of classes (much larger than $K$), including the in-distribution ones.
This also means that $\pin(x)>0$ implies $\pall(x)>0$. 
This assumption on $\pall$ differs from the usual SSL closed world setting where one assumes that the unlabeled examples are also from the $K$ classes or even stronger that they are drawn i.i.d. from $\pin(x)$. 

The \ours base classifier, see \eqref{eq:base-loss}, optimizes
in expectation (for simplicity we omit the index $0$ in $\f{0}$):
\begin{align}\label{eq:exp-oe-loss-2}
\Exp_{\pin}\Big[L\big(Y,f(X)\big)\Big] + \Exp_{\pall}\Big[L\Big(\frac{1}{K}\ones,f(X)\Big)\Big].
\end{align}
The Bayes optimal prediction has been characterized in the following lemma in 
Section \ref{sec:th}.
\begin{lemma}\label{le:A1}
Let $\hat{p}(k|x)=\frac{e^{f_k(x)}}{\sum_{l=1}^K e^{f_l(x)}}$ then the Bayes optimal prediction for the loss \eqref{eq:exp-oe-loss}
is given for any $x$ with $\pall(x)+\pin(x)>0$ as 
\[ \hat{p}(k|x) = \frac{\pin(k|x)\pin(x) + \frac{1}{K}\pall(x)}{\pin(x)+\pall(x)}, \quad k=1,\ldots,K.\]
\end{lemma}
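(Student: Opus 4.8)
The plan is to exploit the fact that the objective in \eqref{eq:exp-oe-loss} decouples over the inputs $x$, so that the Bayes optimal predictive distribution can be found by an independent pointwise minimization on the probability simplex for each $x$.

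First I would substitute the definition $L(p,\hat p)=-\sum_{k=1}^K p_k\log\hat p_k$ into both terms and write them as integrals against the relevant densities. The first term becomes $\int \pin(x)\sum_{k=1}^K \pin(k|x)\big(-\log\hat p(k|x)\big)\,dx$, since the one-hot label $Y$ contributes $-\log\hat p(Y|x)$ in expectation over $\pin(k|x)$, while the second term becomes $\int \pall(x)\sum_{k=1}^K \frac{1}{K}\big(-\log\hat p(k|x)\big)\,dx$ because the target is the uniform vector $\frac1K\ones$. Adding the two integrands and grouping the coefficient of $-\log\hat p(k|x)$ shows that, for each fixed $x$, we must minimize $-\sum_{k=1}^K w_k(x)\log\hat p(k|x)$ over all distributions $\hat p(\cdot|x)$ on the simplex, where $w_k(x):=\pin(x)\pin(k|x)+\frac{1}{K}\pall(x)\ge 0$.

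Next I would solve this single weighted cross-entropy problem. Writing $W(x):=\sum_{k}w_k(x)$ and $q_k:=w_k(x)/W(x)$, the objective equals $W(x)\big[H(q)+D_{\mathrm{KL}}(q\,\|\,\hat p(\cdot|x))\big]$, where the entropy $H(q)$ does not depend on $\hat p$. Since the Kullback--Leibler divergence is nonnegative and vanishes exactly when $\hat p(\cdot|x)=q$, the minimizer is $\hat p(k|x)=w_k(x)/W(x)$. (Equivalently one can obtain this from a Lagrange multiplier for the constraint $\sum_k\hat p(k|x)=1$.) Finally I would compute the normalizer $W(x)=\pin(x)\sum_k\pin(k|x)+\frac{1}{K}\pall(x)\,K=\pin(x)+\pall(x)$, which is strictly positive precisely under the hypothesis $\pin(x)+\pall(x)>0$, and substituting gives the claimed formula.

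The only subtlety — and the place where a little care is needed — is the justification that minimizing pointwise yields the global minimizer: this is licensed because the predicted distribution at each $x$ may be chosen independently, so the infimum of the integral is the integral of the pointwise infima. One should also note that when $\pall(x)>0$ every weight satisfies $w_k(x)\ge\frac{1}{K}\pall(x)>0$, so the optimal $\hat p(\cdot|x)$ lies in the interior of the simplex and is genuinely attained; the degenerate coordinates (where $w_k(x)=0$, possible only if $\pall(x)=0$) correspond to $\hat p(k|x)=0$, consistent with interpreting $\hat p$ as a predictive probability rather than literally as a softmax output.
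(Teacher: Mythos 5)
Your proposal is correct and follows essentially the same route as the paper: both reduce the objective to a pointwise weighted cross-entropy with weights $\pin(x)\pin(k|x)+\frac{1}{K}\pall(x)$ via linearity in the first argument, and identify the minimizer as the normalized weights. The only (minor) difference is the last step, where the paper sets the derivative with respect to the logits $f_r$ to zero using convexity of the softmax cross-entropy, while you invoke the Gibbs/KL inequality directly on the simplex; your variant is marginally cleaner in that it also handles the degenerate coordinates where a weight vanishes.
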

\begin{proof}
We can write the expected loss in \eqref{eq:exp-oe-loss-2} with the predictive distribution $\hat{p}(k|x)=\frac{e^{f_k(x)}}{\sum_{l=1}^K e^{f_l(x)}}$ and the cross-entropy loss, $L(p,\hat{p})=\sum_k p_k \log(\hat{p}_k)$,  as
\begin{align*}
&\Exp_{\pin}\Big[L\big(Y,\hat{p}(X)\big)\Big] + \Exp_{\pall}\Big[L\Big(\frac{1}{K}\ones,\hat{p}(X)\Big)\Big]\\
=& \int_{\R^d} \pin(x) \sum_{k=1}^K
\pin(k|x) L(e_k,\hat{p}(x)) dx\\& + \int_{\R^d}
\pall(x) \sum_{k=1}^K \frac{1}{K}L(e_k,\hat{p}(x)) dx\\
=& \int_{\R^d} L\Big(\sum_{k=1}^K \big[ \pin(x) \pin(k|x) + \frac{1}{K}\pall(x).\big],\hat{p}(x)\Big)\\
=& \int_{\R^d} -\Big(\sum_{k=1}^K \Big[ \pin(x) \pin(k|x) + \frac{1}{K}\pall(x).\Big]\Big)\\
&\;\Big(f_k(x)-\log(\sum_{l=1}^K e^{f_l(x)})\Big)\\
\end{align*}
where we have used that the cross-entropy loss is linear in the first argument. Moreover, the cross-entropy loss is convex in the second argument and thus the optimality condition
for 
\[ L(p,f)=-\sum_{k=1}^K p_k \Big(f_k-\log\big(\sum_{l=1}^K e^{f_l}\big)\Big).\] is given by
\[ \frac{\partial L}{\partial f_r}=-p_r + \sum_{k=1}^K p_k \frac{e^{f_r}}{\sum_{l=1}^K e^{f_l}}.\]
which yields
\[ \frac{e^{f_r}}{\sum_{l=1}^K e^{f_l}} = \frac{p_r}{\sum_{l=1}^K p_l}.\]
and thus we get
\[ \hat{p}(k|x) = \frac{e^{f_k(x)}}{\sum_{l=1}^K e^{f_l(x)}}= \frac{\pin(x) \pin(k|x) + \frac{1}{K}\pall(x)}{\pin(x) + \pall(x)}.\]
\end{proof}

The second result yields the Bayes optimal prediction for an interated training scheme where the predictions of the teacher $\f{t}$ at iteration $t$ become the soft-lables for the student model $\f{t+1}$
Then we get the total expected loss
for the student model $\f{t+1}$ at iteration $t+1$:
\begin{align}\label{eq:exp-iter-loss} \Exp_{\pin}\big[L\big(Y,\f{t+1}(X)\big)\big] + 
\Exp_{\pall}\big[L\big(\hat{p}_{t}(X),\f{t+1}(X)\big)\big].
\end{align}
\begin{lemma}
The Bayes optimal prediction for \eqref{eq:exp-iter-loss} at iteration $t$ for $t\geq 0$
is given for any $x$ with $\pall(x)+\pin(x)>0$ and $k=1,\ldots,K$ as
\begin{align*}
 \hat{p}_{t}(k|x) &=  \pin(k|x)  + 
 \Big(\frac{\pall(x)}{\pin(x)+\pall(x)}\Big)^{t+1}
 \big(\frac{1}{K}-\pin(k|x)\big)
\end{align*}
\end{lemma}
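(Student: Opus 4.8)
The plan is to proceed by induction on $t$, reducing each step to the pointwise Bayes-optimality computation already carried out in Lemma \ref{le:A1}. The key observation is that \eqref{eq:exp-iter-loss} has exactly the same structure as \eqref{eq:exp-oe-loss-2}: the only change is that the out-distribution soft-label $\frac{1}{K}\ones$ is replaced by the teacher's prediction $\hat{p}_t(\cdot|x)$. Since the cross-entropy loss is linear in its first argument (so the two expectations can be merged into a single integrand at each fixed $x$) and convex in its second argument (so the first-order condition pins down the predictive distribution), the same derivation as in Lemma \ref{le:A1} applies verbatim with $\frac{1}{K}$ replaced by $\hat{p}_t(k|x)$. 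This yields the recurrence
\begin{align*}
\hat{p}_{t+1}(k|x) = \frac{\pin(x)\,\pin(k|x) + \pall(x)\,\hat{p}_t(k|x)}{\pin(x)+\pall(x)},
\end{align*}
which expresses $\hat{p}_{t+1}$ as the $\pin(x),\pall(x)$-weighted average of the in-distribution posterior and the previous teacher prediction.

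With the recurrence in hand, I would solve it explicitly. Writing $\beta(x)=\frac{\pall(x)}{\pin(x)+\pall(x)}$, so that $1-\beta(x)=\frac{\pin(x)}{\pin(x)+\pall(x)}$, the recurrence becomes $\hat{p}_{t+1}(k|x)=(1-\beta)\,\pin(k|x)+\beta\,\hat{p}_t(k|x)$. Introducing the deviation $r_t(k|x)=\hat{p}_t(k|x)-\pin(k|x)$ and subtracting $\pin(k|x)$ from both sides collapses this to the purely geometric relation $r_{t+1}=\beta\,r_t$. For the anchor $t=0$, Lemma \ref{le:A1} gives $\hat{p}_0(k|x)=\frac{\pin(k|x)\pin(x)+\frac1K\pall(x)}{\pin(x)+\pall(x)}$, which rearranges to $r_0(k|x)=\beta(x)\big(\frac1K-\pin(k|x)\big)$. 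Hence $r_t=\beta^t r_0=\beta^{t+1}\big(\frac1K-\pin(k|x)\big)$, and adding $\pin(k|x)$ back recovers exactly the claimed closed form.

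The computation is essentially mechanical once the recurrence is established, so the care required is bookkeeping rather than a genuine obstacle. First, I would verify that the $t=0$ instance of the claimed formula coincides with Lemma \ref{le:A1}, since the induction is anchored there; expanding $\pin(k|x)+\beta(\frac1K-\pin(k|x))$ over the common denominator $\pin(x)+\pall(x)$ confirms the match. Second, I would treat the degenerate case $\pin(x)=0$ (so $\beta=1$): here $\pin(k|x)$ is undefined, but the recurrence gives $\hat{p}_{t+1}=\hat{p}_t$ with $\hat{p}_0=\frac1K\ones$, so $\hat{p}_t=\frac1K\ones$ for all $t$, consistent with the stated limit. Finally, I would note that convexity of the inner loss makes the first-order condition identify $\hat{p}_t(k|x)$ uniquely as a normalized weight vector, so the recurrence characterizes the Bayes optimal predictive distribution itself and not merely a critical point.
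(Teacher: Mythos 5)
Your proposal is correct and follows essentially the same route as the paper: both derive the recurrence $\hat{p}_{t+1}(k|x)=\frac{\pin(x)\pin(k|x)+\pall(x)\hat{p}_t(k|x)}{\pin(x)+\pall(x)}$ by repeating the Bayes-optimality computation of Lemma \ref{le:A1} with the uniform label replaced by the teacher's prediction, and both anchor at $t=0$ via Lemma \ref{le:A1}. The only difference is presentational — you solve the recurrence in closed form through the geometric contraction of the deviation $r_t=\hat{p}_t(k|x)-\pin(k|x)$, whereas the paper verifies the same closed form by plugging the induction hypothesis into the recurrence and simplifying.
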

\begin{proof}
We prove this by induction. First we note that for $t=0$ (base model) we have shown the predictive distribution $\hat{p}_0(k|x)$ in Lemma \ref{le:A1} to be
\[ \hat{p}_0(k|x) = \frac{e^{f_k(x)}}{\sum_{l=1}^K e^{f_l(x)}}= \frac{\pin(x) \pin(k|x) + \frac{1}{K}\pall(x)}{\pin(x) + \pall(x)}.\]
Noting that 
\begin{align*}
&\frac{\pin(x) \pin(k|x) + \frac{1}{K}\pall(x)}{\pin(x) + \pall(x)}\\ =& \pin(k|x) +\Big(\frac{\pall(x)}{\pin(x)+\pall(x)}\Big)^{1}
 \big(\frac{1}{K}-\pin(k|x)\big).
 \end{align*}
we observe that this agree with the expression in the lemma to prove. Thus the induction start for $t=0$ is verified and we go for the induction step. Repeating the derivation of 
Lemma \ref{le:A1} we get
\begin{align*}
\hat{p}_{t+1}(k|x) = \frac{\pin(x)\pin(k|x) + \pall(x)\hat{p}_t(k|x)}{\pin(x)+ \pall(x)}
\end{align*}
From plugging in the induction hypothesis we get
\begin{align*}
& \frac{\pin(x)\pin(k|x) + \pall(x)\hat{p}_t(k|x)}{\pin(x)+ \pall(x)}\\
=& \frac{\pin(x)\pin(k|x)}{\pin(x)+ \pall(x)} + \frac{ \pall(x)}{\pin(x)+ \pall(x)} \Big[  \pin(k|x) \\
& \ + \Big(\frac{\pall(x)}{\pin(x)+\pall(x)}\Big)^{t+1}
 \big(\frac{1}{K}-\pin(k|x)\big) \ \Big]\\
=& \frac{\pin(x)\pin(k|x) + \pall(x)\pin(k|x)}{\pin(x)+ \pall(x)} \\ 
& \ + \Big(\frac{\pall(x)}{\pin(x)+\pall(x)}\Big)^{t+2}
 \big(\frac{1}{K}-\pin(k|x)\big) \\
=& \pin(k|x) + \Big(\frac{\pall(x)}{\pin(x)+\pall(x)}\Big)^{t+2}
 \big(\frac{1}{K}-\pin(k|x)\big) 
\end{align*}
which finishes the proof.
\end{proof}
In particular, for any $x$ with $\pin(x)+\pall(x)>0$ we get :
\[ \lim_{t\rightarrow \infty} \hat{p}_t(k|x)=\begin{cases} \pin(k|x) & \textrm{ if } \pin(x)>0\\ \frac{1}{K} & \textrm{ if } \pin(x)=0.\end{cases}\]
Note that this is the perfect out-distribution aware classifier:  Bayes optimal for the in-distribution and maximal uncertainty on all non-task-related regions $(\pin(x)=0)$. 

\section{Duplicate removal}
In this section, we explain our approach to duplicate removal of CIFAR test images in the 80MTI dataset. As both CIFAR10 and CIFAR100 are subsets of 80 million tiny images, it is important to remove exact- and near-duplicates of test images from the unlabeled distribution to prevent them from leaking into our train set. First, we noticed that the duplicate removal from \cite{HenMazDie2019} did not remove all duplicates from 80MTI, which they use as out-distribution to enforce uniform confidence (Figure \ref{fig:appendix_duplicates_hendrycks}). While this should not improve their test accuracy, it might influence out-distribution detection when for example calculating the AUROC between the CIFAR10 and CIFAR100 test sets. \cite{CarEtAl19} follow \cite{recht2018cifar10.1} and remove all 80MTI images with an $l_2$-distance smaller than $2000/255$ to the nearest neighbour in the CIFAR10 test set. While this is likely to remove all duplicates, the approach seems overly strict as after this process, only 65.807.640 out of the 79.302.017 images remain. Thus with their definition of near-duplicate, 80MTI contains nearly 14 million duplicates of the 10.000 test images. In Figure \ref{fig:appendix_duplicates_cifar_784}, we show that 
almost all excluded images are no true duplicates. While we acknowledge that it is important to optimise recall instead of precision when removing duplicates, their procedure is too aggressive and leads to an exclusion of a large set of images which have low variation or close to monochrome images. 
In Figure \ref{fig:appendix_duplicate_hist}we show a histogram of the $l_2$-nearest neighbor distances between the CIFAR test sets and 80MTI. One can see that the vast majority of images 
have a nearest neighbor distance above an $l_2$-distance of $3.0$. A visual inspection in Figure \ref{fig:appendix_duplicates_cifar_3} also confirms that most images below that threshold are duplicates, thus we first remove all images from 80MTI with an $l_2$- distance less than $3.0$ to one of the CIFAR test images. While this removes all exact duplicates, there can exist near-duplicates with larger $l_2$-distance. We thus collect all 80MTI samples with a $l_2$-nearest neighbour to the CIFAR test set smaller than $2000/255$ as potential candidates for removal. For each candidate $x$ and nearest neigbhour $z$ in the CIFAR test sets, we then calculate the perceptual similarity metric LPIPS \cite{zhang2018unreasonable} and SSIM \cite{wang2004image} and remove the image if $\text{LPIPS}(x,z) < 0.025$ and $1 - \text{SSIM}(x,z) < 0.4$. As both metrics are closer to the visual system, we found them to be more reliable at finding near duplicates for images with larger $l_2$ distances, see Figure \ref{fig:appendix_duplicates_cifar10_784}, but they are much too expensive to use them directly for nearest neighbor search. We highlight that we do not only find exact duplicates but also degraded versions that for example contain blur, slight translations, color changes and added text or logos. Note that we still remove some non-duplicates, showing that our thresholds are still chosen rather conservatively. Overall, we remove 24k CIFAR10 test set duplicates and 60k CIFAR100 test set duplicates. Additionally, we also remove all samples selected by \cite{HenMazDie2019}, which in particular includes exact train set duplicates. When training CIFAR10 models, we also remove all CIFAR10.1 duplicates with the same approach. 

\begin{figure*}[h!]
\small
\centering
\begin{subfigure}{0.49\textwidth}
\includegraphics[width=\textwidth]{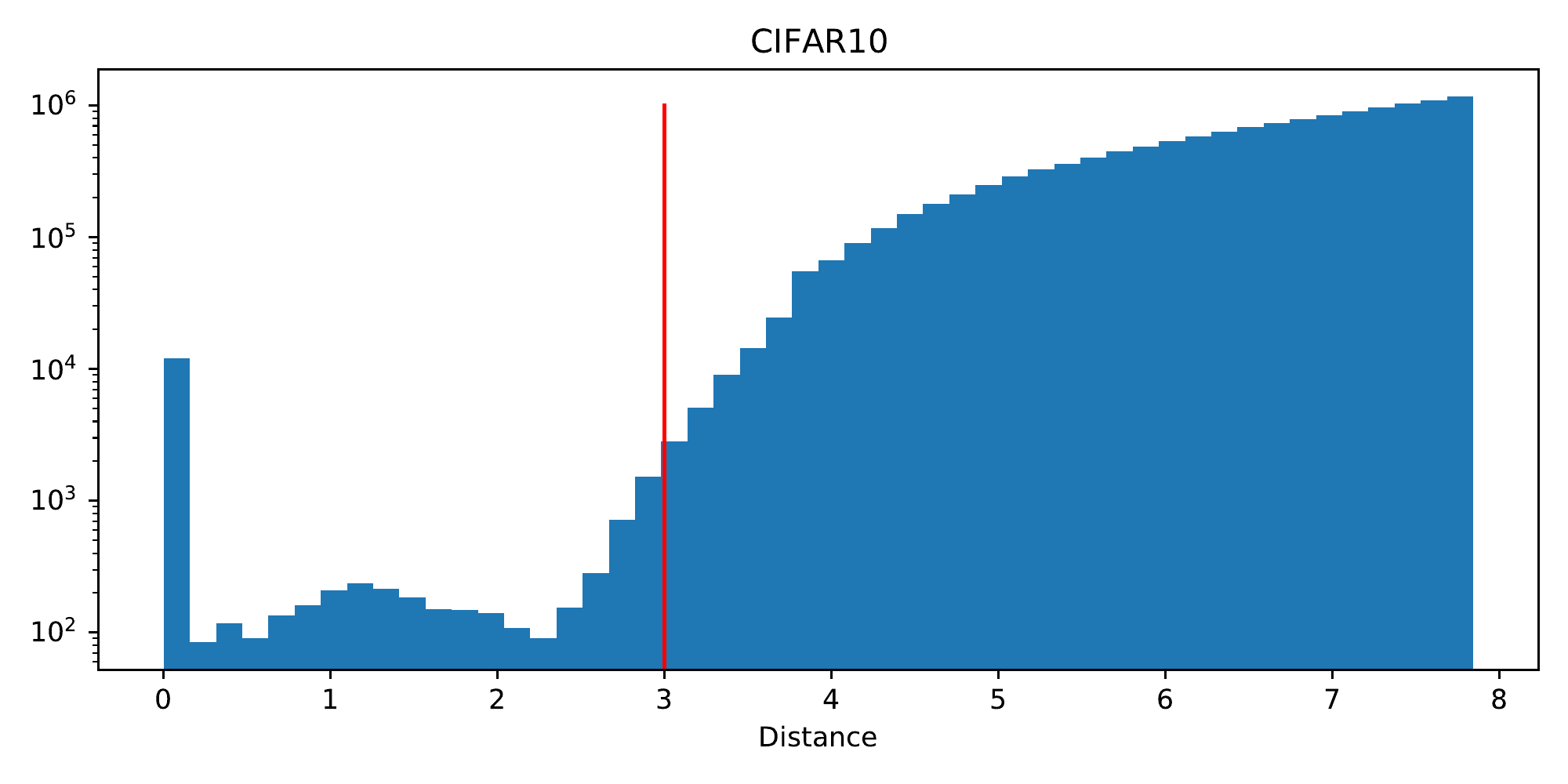}
\end{subfigure}
\begin{subfigure}{0.49\textwidth}
\includegraphics[width=\textwidth]{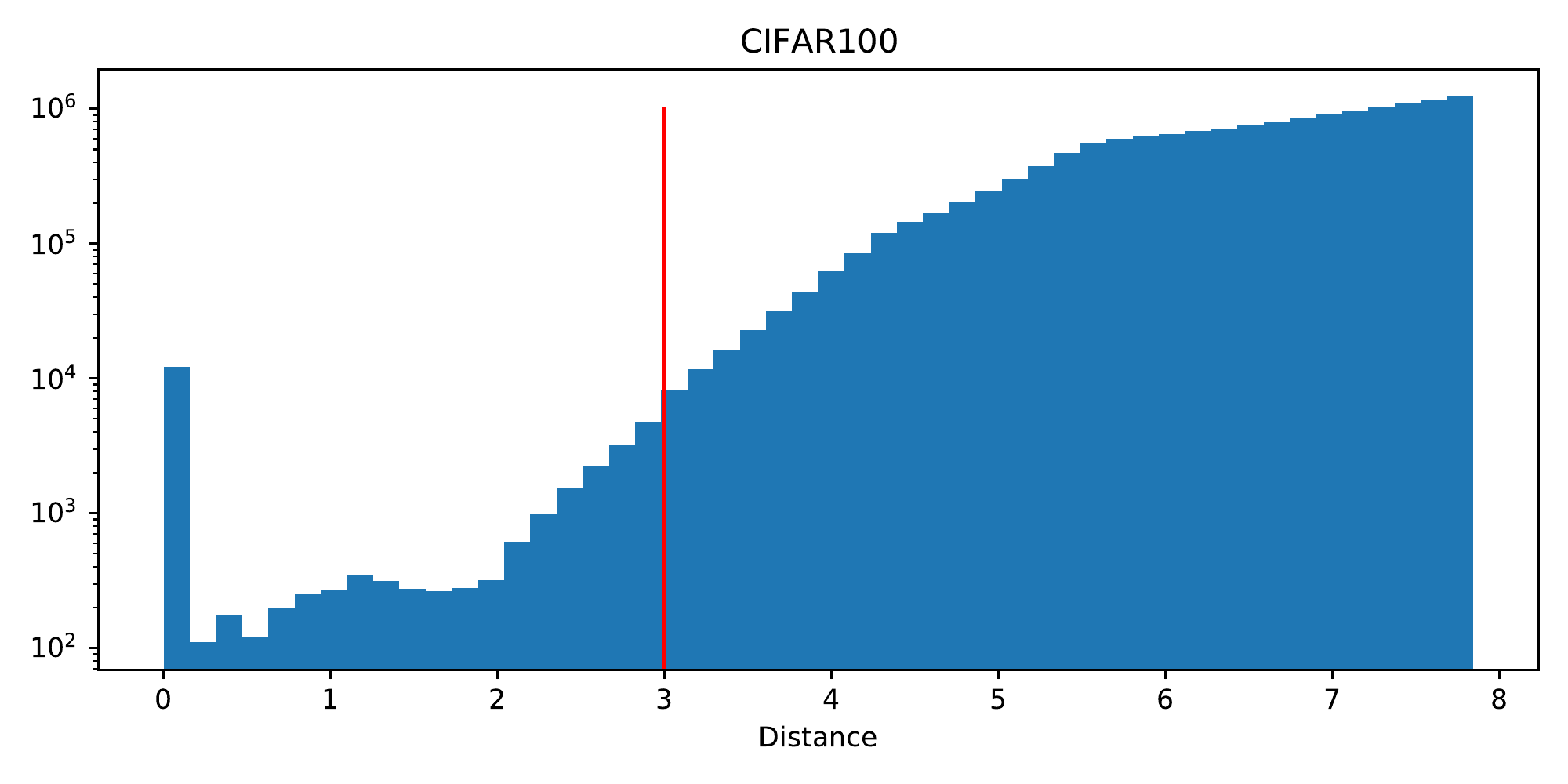}
\end{subfigure}

\caption{Logarithmic histogram of nearest neighbour distances between CIFAR test set and 80MTI for image pairs with an $l_2$ distance below $2000/255$. }
\label{fig:appendix_duplicate_hist}
\end{figure*}

\begin{figure*}[h!]
\begin{subfigure}{\textwidth}
\includegraphics[width=\textwidth]{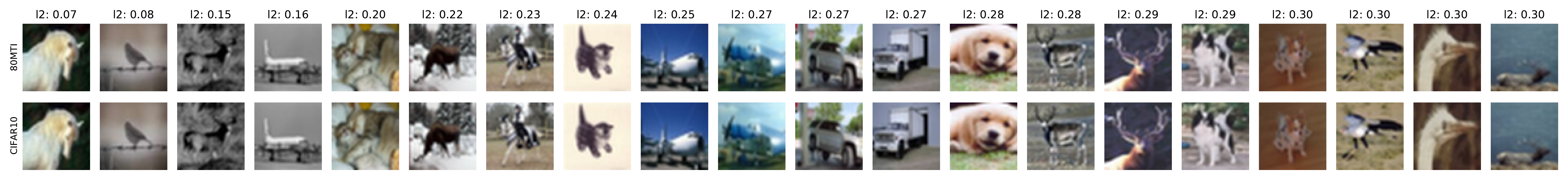}
\hrule
\includegraphics[width=\textwidth]{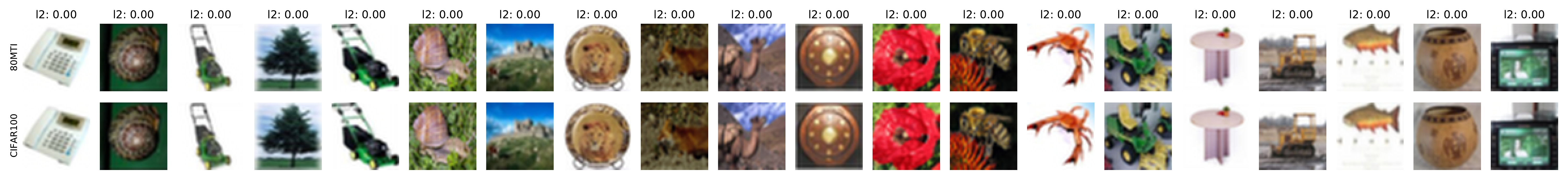}
\caption{80MTI images after \cite{HenMazDie2019}'s duplicate removal with their nearest CIFAR10 (upper) and CIFAR100 (bottom) test set neighbours  sorted by $l_2$-distance. Note that 
there are still CIFAR10 test set near-duplicates with distance larger than $0$ and exact CIFAR100 test set duplicates.}
\label{fig:appendix_duplicates_hendrycks}
\end{subfigure}

\begin{subfigure}{\textwidth}
\includegraphics[width=\textwidth]{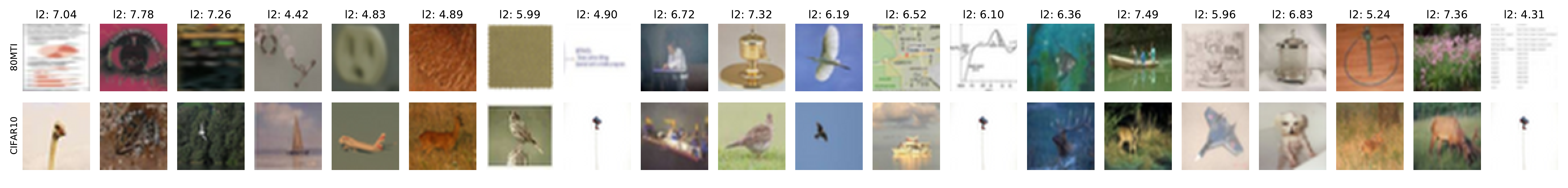}
\hrule
\includegraphics[width=\textwidth]{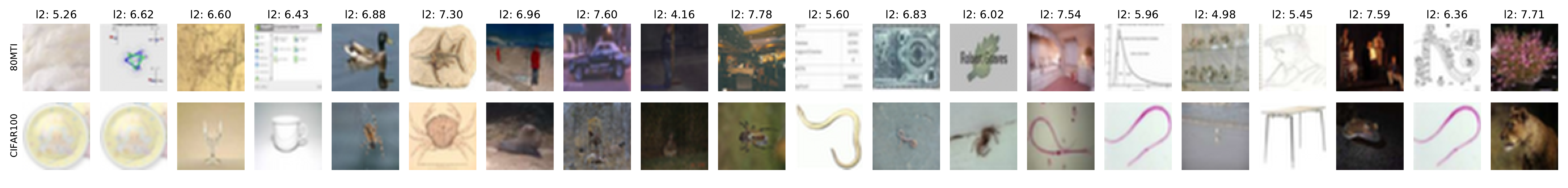}
\caption{Random selection of 80MTI samples with their respective CIFAR $l_2$-nearest neighbour for sample pairs with distance smaller than $2000/255 \approx 7.84$. This threshold was used by  \cite{CarEtAl19} for duplicate removal but is too aggressive at it removes too many unrelated images.}
\label{fig:appendix_duplicates_cifar_784}
\end{subfigure}

\begin{subfigure}{\textwidth}
\includegraphics[width=\textwidth]{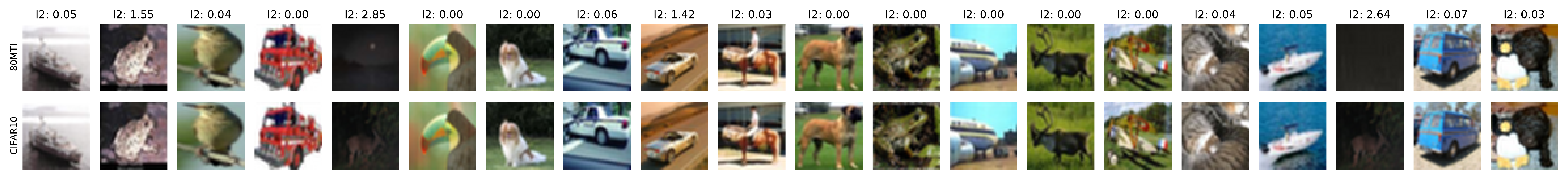}
\hrule
\includegraphics[width=\textwidth]{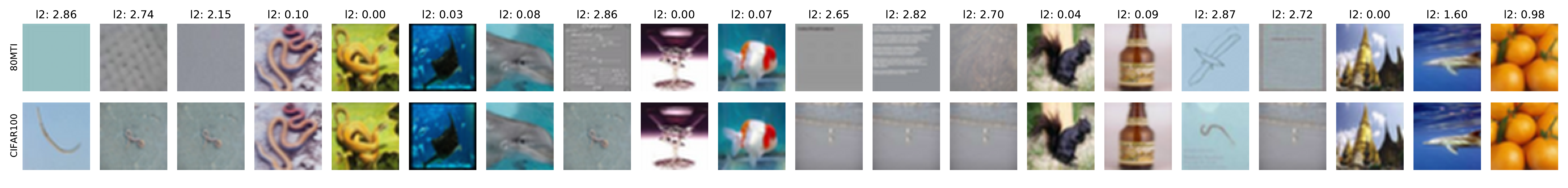}
\caption{80MTI samples with their CIFAR $l_2$-nearest neighbour for sample pairs with $l_2$-distance $\leq 3.0$ (random selection). Most  pairs are near duplicates and are thus removed in our duplicate removal. Even for this small $l_2$ radius, we find false positives for monochrome images.}
\label{fig:appendix_duplicates_cifar_3}
\end{subfigure}

\begin{subfigure}{\textwidth}
\includegraphics[width=\textwidth]{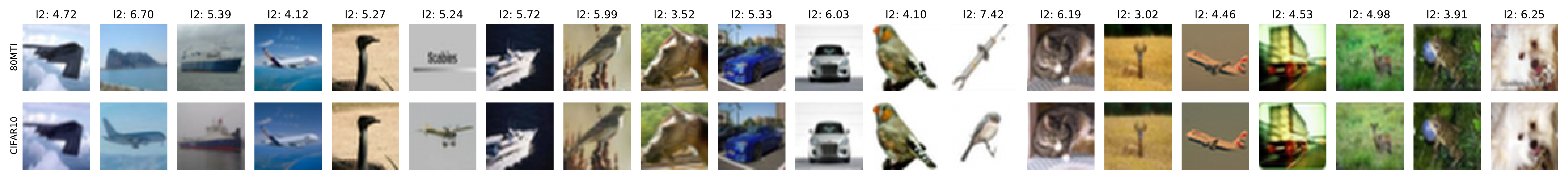}
\hrule
\includegraphics[width=\textwidth]{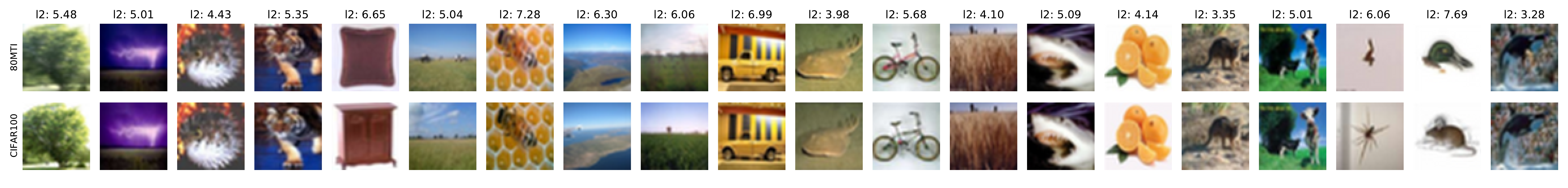}
\caption{Random selection of samples with $l_2$-distance in $[3, 2000/255]$ that are marked as duplicates wrt to both the LPIPS and SSIM threshold. We are able to find mostly duplicates even in regimes where the $l_2$ distance starts to become meaningless. In total our duplicate removal has a low number of false positives while we could not find any false negative (CIFAR test set image in 80MTI after duplicate removal).}
\label{fig:appendix_duplicates_cifar10_784}
\end{subfigure}

\caption{Visualization of various exclusion thresholds for CIFAR10 (above line) and CIFAR100 (below line). The top image shows the sample from 80MTI and the lower one the nearest neighbour in the test set.}
\end{figure*}

\section{Sample comparison}
\subsection{CIFAR10}
In this section, we present a random selection of class specific samples selected for the set $\I$ (see sample selection step C)) for the three iterations of \ours, \base and \baseod. Note that unlike in Figure \ref{fig:cifar10_sample_demonstration} in the main paper, we show randomly selected samples from $\I$ selected by the respective method and do \emph{not} restrict our selection to exclusively selected samples by \ours versus \baseod. In Figure \ref{fig:appendix_cifar10_odst}, we show results for ResNet50 \ours, \base baseline in Figure \ref{fig:appendix_cifar10_st} and \baseod with OD-thresholding in \ref{fig:appendix_cifar10_st_ot}.  Additionally, we provide the number of samples above our computed thresholds for \ours, \base and \baseod in Table \ref{tab:app_num_samples_cifar10}. Note that we use maximally up to 25k, 50k or 75k per class in the first, second and third iteration respectively but there can be more samples which have confidence higher than our class-specific thresholds.  Our in- and out-distribution thresholds used in \ours   result in a rather conservative selection of samples, selecting less than 20k samples for both "frog" and "deer", less than 75k for "cat", "dog" and "truck" and the full 75k for the remaining classes in the third iteration. In particular, for difficult classes like frog and deer we found our selected samples above the threshold to be of very high quality while the model makes more and more false-positive predictions below the threshold. Thus, the combination of our in-and out-distribution thresholds plus the out-distribution aware training leads to a very good sample selection quality with a small number of false positives. 

The \base baseline without OD-thresholding on the other hand accepts way too many samples above the ID-threshold which results in wrong class representations in the later iterations. For example, the model associates flags with "plane", human faces with "dog" and human portraits with "horse". Even with the OD-threshold, \baseod seems to systematically make similar predictions, highlighting that both thresholding and OD-aware training are necessary for a successful sample selection.
We additionally compare samples from 500k-TI \cite{CarEtAl19} (which select roughly 50k per class) to our selection of maximal 50k samples per class in Figure \ref{fig:appendix_cifar10_carmon}.
\cite{CarEtAl19} use a $K+1$ class model trained on a labeled subset of 80 million tiny images which does not contain images related to CIFAR10 to remove out-of-distribution samples. Even with this form of weak-supervision, they achieve worse quality for underrepresented classes such as "deer" and "frog" as they include 50k samples for each class. While their sample selection is good for the most part, they make some occasional mistakes, for example they include a street sign in front of a clouded sky as "plane" and a reptile as "bird". We note that both \ours and 500k-TI include some related classes like trains and busses into the category "truck".

\subsection{CIFAR100}
For CIFAR100 we show four samples per class for each of the 100 classes for each iteration with the ResNet architecture. Figures \ref{fig:appendix_cifar100_odst_1}
to \ref{fig:appendix_cifar100_odst_3} contain the results for \ours, and Figures \ref{fig:appendix_cifar100_st_1} to \ref{fig:appendix_cifar100_st_ot_3} for \base and \baseod. The number of added samples per class is visualized in Figure \ref{fig:appendix_cifar100_samples_hist}. We again highlight the difficulty of the task at hand. The model has to select additional samples for 100 classes with each only having 450 train samples from a pool of 80 million images that are mostly not task-related.  Despite this fact, \ours is able to select images for most classes with high accuracy throughout all 3 iterations. We highlight the large amount of diversity in the selected samples for most classes, which is important for proper generalization performance. While the average sample quality across classes is very good, few classes such as "worm" are problematic as the model selects visually similar but non-relevant samples. On the other hand, \ours is able to distinguish similar classes such as "leopard" and "lion", "dinosaur" and "elephant" or "apple" and "orange" and select proper samples for each of them. 

Both baseline methods \base and \baseod are performing surprisingly well in the first iteration. We believe that this is mostly due to the relatively small number of samples that are added in the first iteration (2250 per class). However, without OD-aware training, the quality of selected samples decreases progressively for both baselines in the second and third iteration. Once again, one can see that the model learns completely wrong class representations, for example \base starts to associate cooked food with "crab", probably as some "crab" images in the train set contain plates. Such errors accumulate, as the teacher passes on those wrong representations to its student and there is no correction mechanism that could prevent the student from learning those wrong representations. It is thus extremely important to be conservative with the addition of new samples, although OD-thresholding is not sufficient on its own, as the \baseod baseline still suffers from similar problems as \base. We again emphasise that the model learning wrong class representations can not necessarily be observed on the test set and it is thus important to judge open world SSL algorithms not only based on their test predictive performance.   

\begin{table*}[t]
\begin{tabular}{|l||rrrrrrrrrr|}
\hline
\ours & plane  & car    & bird   & cat   & deer  & dog   & frog  & horse  & ship   & truck \\
\hhline{|=#==========|}
1st  & 65851  & 345636 & 99529  & 25820 & 13735 & 41894 & 8308  & 151406 & 117886 & 32152 \\
2nd  & 94198  & 357582 & 108365 & 48432 & 15054 & 44242 & 9161  & 167501 & 114301 & 35032 \\
3rd  & 102979 & 332256 & 97470  & 52006 & 17141 & 59476 & 11576 & 134020 & 116660 & 39271\\
\hhline{|=#==========|}
\base & plane   & car    & bird    & cat    & deer    & dog     & frog    & horse   & ship    & truck  \\
\hhline{|=#==========|}
1st  & 408410  & 786267 & 991242  & 259478 & 295440  & 533440  & 211276  & 641187  & 1112572 & 10697  \\
2nd  & 1201883 & 903382 & 2141996 & 220140 & 626600  & 988225  & 1220826 & 1461518 & 443693  & 335166 \\
3rd  & 797443  & 311856 & 658163  & 474090 & 1199681 & 1342559 & 556252  & 1425692 & 1018332 & 542613\\
\hhline{|=#==========|}
\baseod & plane & car    & bird  & cat   & deer  & dog   & frog  & horse & ship  & truck \\
\hhline{|=#==========|}
1st                    & 83628 & 109902 & 59482 & 30161 & 13813 & 52180 & 6840  & 78036 & 86010 & 10697 \\
2nd                    & 39588 & 21111  & 35017 & 23175 & 19778 & 52114 & 11397 & 68127 & 57483 & 12709 \\
3rd                    & 30546 & 42453  & 11460 & 52185 & 8800  & 50571 & 3302  & 80201 & 12629 & 16842\\
\hline
\end{tabular}
\caption{CIFAR10: Number of samples above the in- and out-distribution thresholds for \ours and \baseod respectively above the in-distribution threshold for \base. Note that the number of samples above the threshold can be larger than the maximal number of accepted samples, which increases from 25k over 50k to 75k per class in iteration 3.}
\label{tab:app_num_samples_cifar10}
\end{table*}
\begin{figure*}
\centering
\includegraphics[width=\textwidth]{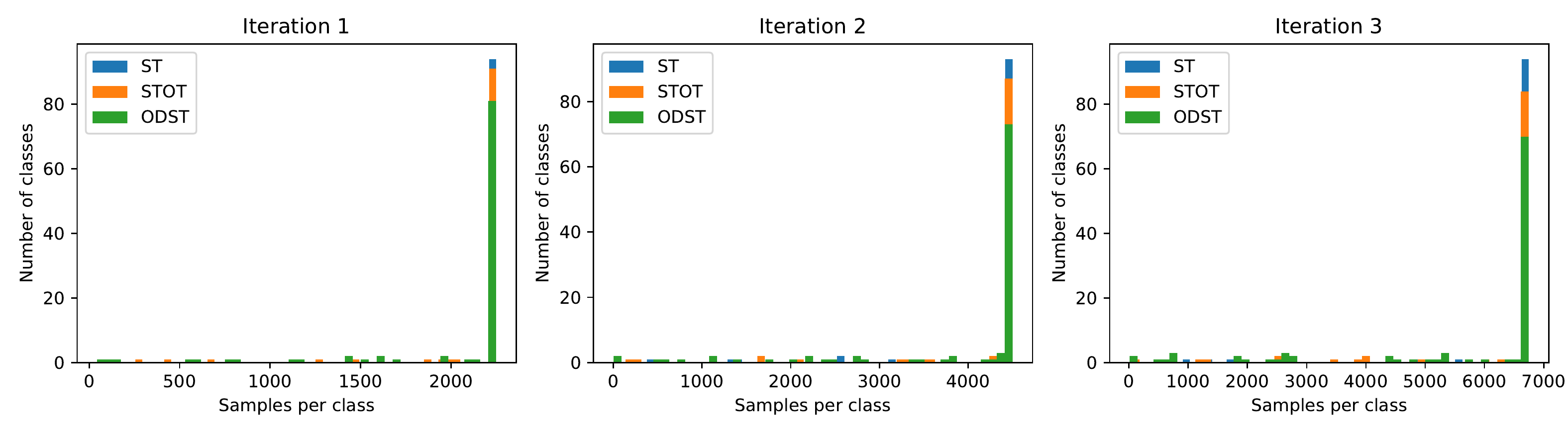}
\caption{CIFAR100: Histogram plot showing the number of accepted samples per class for \ours, \base and \baseod over three iterations. Each methods adds up to 2250 samples per class in the first, 4500 in the second and 6750 in the third iteration (note that the number of labeled examples is 450 per class). We see that all methods add the full number of samples for the majority of classes but \ours has the most conservative but also the best sample selection.}
\label{fig:appendix_cifar100_samples_hist}
\end{figure*}

\section{Ablation}
In this section, we present various ablation studies to motivate our design choices. All ablation studies use the ResNet50 architecture and are done on CIFAR10 and CIFAR100. 

\begin{table}[]
    \centering
    \begin{tabular}{|c|c|c|}
        \hline
        \textbf{CIFAR10} & \ours 3rd 75k& Non-iterative 75k\\
        \hline
        Test error & \textbf{1.88} & 1.97\\
        \hline
        CIFAR10.1 & 4.55 & \textbf{4.45}\\
        \hline
        CIFAR10-C & 15.02 & \textbf{13.53}\\
        \hline
        OD-AUROC & 98.98 & \textbf{99.31}\\
        \hhline{===}
        \textbf{CIFAR100} & \ours 3rd 6.75k& Non-iterative 6.75k\\
        \hline
         Test error & \textbf{14.86} & 16.15\\
        \hline
        CIFAR100-C & \textbf{34.82} & 36.53\\
        \hline
        OD-AUROC & 92.49 & \textbf{93.32}\\
        \hline
    \end{tabular}
    \caption{Comparison of the third iteration \ours model, including up to 15 times the samples per class and a non-iterative \ours model trained directly with the same number of samples selected by the base teacher. Our iterative self-training outperforms the one-shot self-training in terms of test error significantly for CIFAR100 and a bit for CIFAR10.}
    \label{tab:ablation_cifar10_non_iterative}
\end{table}

\begin{table*}[]
    \small
    \centering
    \begin{tabular}{|c|c|ccc|ccc|ccc|}
     \hhline{~|-|---|---|---|}
     \multicolumn{1}{c|}{} & Base & \multicolumn{3}{c|}{\ours} &   \multicolumn{3}{c|}{Hard labels $\U \setminus \I$} & \multicolumn{3}{c|}{No label smoothing}\\
     \hline
     \textbf{CIFAR10} & 0th & 1st & 2nd & 3rd & 1st & 2nd & 3rd & 1st & 2nd & 3rd\\
     \hline
     Test error & 3.19 & 2.15 & 2.01 & \textbf{1.88} & 2.52 & 2.83 & 2.73 & 2.18 & 2.22 & 2.40\\
     \hline
     CIFAR10.1 & 7.00 & 5.40 & \textbf{4.55} & 4.70 & 6.40 & 6.65 & 6.60 & 4.90 & 4.75 & 5.40\\
     \hline
     CIFAR10-C & 16.86 & 14.19 & 15.02 & 15.46 & 15.39 & 14.49 & \textbf{13.96} & 14.62 & 15.15 & 16.00\\
     \hline 
     OD-AUROC & 98.95 &\textbf{99.20} & 99.16 & 98.98 & 98.96 & 98.90 & 99.03 & 99.10 & 98.72 & 98.35\\
     \hhline{===========}
     \textbf{CIFAR100} & 0th & 1st & 2nd & 3rd & 1st & 2nd & 3rd & 1st & 2nd & 3rd\\
     \hline
     Test error & 20.02 & 15.91 & 15.38 & \textbf{14.86} & 18.27 & 17.64 & 17.77 & 16.13 & 15.31 & 15.58\\
     \hline
     CIFAR100-C & 43.39 & 36.92 & 35.64 & \textbf{34.82} & 38.85 & 37.75 & 36.80 & 36.49 & 35.58 & 35.69\\
     \hline
     OD-AUROC & 91.76 & 93.42 & 93.28 & 92.49 & \textbf{93.62} & 93.39 & 92.29 & 93.00 & 93.16 & 92.17\\
     \hline
    \end{tabular}
    \caption{Comparison of different choices for the labels of non-selected images in $\U \setminus \I$. \ours uses soft-labels with label smoothing factor 0.5, hard labels enforces $1/K$ on all samples and no label-smoothing is equivalent to \ours but without any smoothing. All models are trained using the same base model.}
    \label{tab:ablation_cifar_combined}
\end{table*}

\subsection{Non-iterative training}
As iterative self-training can greatly increase computational cost, it is obvious to ask whether one could skip the first two iterations and directly train with up to 15 times the amount of pseudo-labeled data per class. We compare the third iteration of \ours model with a non-iterative model that is directly trained with up to 15 times the amount of pseudo-labeled data in Table \ref{tab:ablation_cifar10_non_iterative}. While for CIFAR10 non-iterative training performs similarly well to iterative training, with slightly worse test but better robust accuracy, non-iterative training greatly decreases performance on CIFAR100. This highlights that especially for more complex tasks, iterative training is necessary to achieve the best performance. 

\subsection{Choice of peudo-labels on $\U \setminus \I$}
While \cite{xie2020self} demonstrated that soft-labels on selected samples can improve performance, it remains an open question whether one should use soft-labels for the remaining samples in $\U \setminus \I$. In principle, one could keep enforcing uniform confidence on $\U \setminus \I$, that is
\[ v(z)_i=\frac{1}{K}, \; i=1,\ldots,K \textrm{ for } z \in \U \backslash \I,\] like we did for training the baseline model. Thus instead of training the students with the loss presented in \eqref{eq:final-loss},we minimize the loss:
\begin{align}\frac{1}{n+|\I|} \hspace{-0.5mm}&\Big[\begin{aligned}[t] \sum_{i=1}^n L\big(y_i,\hat{p}_{\f{t+1}}(x_i)\big) 
  \hspace{-0.2mm} + \hspace{-0.5mm}\sum_{z \in \I} \hspace{-0.2mm}L\big(q(z),\hat{p}_{\f{t+1}}(z)\big)\hspace{-0.5mm}\Big] \end{aligned} \nonumber\\  
   + &\frac{1}{|\U\setminus\I|}\sum_{z \in \U \backslash \I} L\big(\mathbf{1}/K,\hat{p}_{\f{t+1}}(z)\big).\end{align} 
   
The other alternative is that we use soft-labels on $\U \backslash \I$:
\begin{align} 
v(z) = \hat{p}_{\f{t}}(z) \textrm{ for }\; z \in \U \backslash \I.
\end{align}
In Table \ref{tab:ablation_cifar_combined} we compare these alternative choices to the one of \ours (the mean of both) given in Equation \eqref{eq:pseudo_labels} which shows that our chosen pseudo-labels in \ours are the right compromise between these two extremes.

Using hard-labels on $\U \setminus \I$ decreases performance in comparison to \ours, especially on CIFAR100. There are two possible explanations for this. First, due to our strict thresholding, it is possible that $\U \setminus \I$ contains task-relevant examples that are correctly classified but not accepted into $\I$. In this case, the soft-label is a better target for the next student than strict uniform confidence. Second, even for unrelated images that contain certain features that correlate with a specific class, soft-labels might be a more meaningful target.
Soft-labels without label smoothing clearly outperform hard labels, however the OD-AUROC values reveal that the model becomes increasingly overconfident on out-distribution samples, especially for CIFAR10. This results in a decrease in sample selection quality which again causes worse overall performance than \ours. 

\section{SVHN Experiments}
Next we give a more detailed overview over our SVHN experiments. For SVHN on top of the 73257 standard training samples there are 531k additional labeled samples available. We split them into 10k validation and 521k unlabeled samples. The unlabeled set $\U$ for \ours is the union of the entire 80MTI dataset with the 521k unlabeled samples. Disregarding additional numbers in 80MTI, this results in a task-related ratio of $0.65\%$.\\
To compare \ours open world self-training with standard close world self-training, we train an additional ST-CW model. The base model is trained by minimizing the cross-entropy on the 73k labeled samples and the self-training iterations only select new samples from the extra 521k unlabeled samples. For both \ours and ST-CW, we select up to 25k samples per class in the first iteration and up to 50k and 75k in the second resp. third iteration. \ours uses both the OD- and ID-threshold and ST-CW only uses the precision ID-threshold. Moreover, we also train a fully supervised model on the 73k train samples plus 521k extra samples with labels.  
For all SVHN experiments, we use AutoAugment with Cutout. 

Even when compared to the close world setting, \ours is able to not only match but even outperform the self-training baseline (Table \ref{tab:svhn_resnet_error}). We also note that the self-training baseline accepts 504k unlabeled samples in a close world setting and \ours recalls 502k out of the 521k SVHN extra in a pool of 80 million images.

\section{Implementation details}
In this section we present the hyperparameters used to train our models. Note that we use the exact same set of hyperparameters for \ours and the two baselines \base and \baseod. 

\subsection{ResNet50}
Our ResNet50 models are trained for 250 epochs with piecewise learning rate schedule. We use a batchsize of 128, a starting learning rate of 0.1 and decay it by a factor of 10 at epochs 100, 150 and 200. We use Nesterov stochastic gradient descent optimizer with a momentum weight of $0.9$. The weight decay is set to 0.0005. Throughout all iterations we use AutoAugment \cite{cubuk18autoaugment} and Cutout \cite{devries17cutout}. We evaluate validation set error throughout the last 20\% of epochs and chose the model with the best validation set performance.

\subsection{Shakedrop PyramidNet272}
The base PyramidNet is trained for 1000 epochs with a cosine schedule and initial learning rate of 0.05 and a batch size of 64. We use Nesterov SGD with a weight decay of 0.0001. The base model is trained with with AutoAugment and Cutout.  

As training large models for 1000 epochs and up to 16 times the original amount of data is expensive, we use fine-tuning to train the later student models for 55 epochs. We thus always initialize the new student with the previous teacher model's weight. Note that although the student model is initialized with the teacher model that was used to label the unlabeled data, the training loss for the pseudo-labeled data is not 0 due to Shakedrop \cite{yamada2019shakedrop} and data augmentation. Due to large amounts of noise from both heavy data augmentation and Shakedrop, we use a mixed augmentation strategy for fine-tuning. In detail we train each student for $N$ epochs with AutoAugment and a cosine schedule with initial learning rate of 0.05 that decays to 0 after $N$ epochs. We then do a warm restart and train for another $55 - N$ epochs using a cosine schedule starting at learning rate 0.01 and only use random cropping and flipping. As data augmentation becomes less useful with increasing amounts of data, we set $N$ to 50 for the first student and decrease it to 40 and 30 for the second and last student. Batch size and weight decay remain at 64 and 0.0001 for fine-tuning.

\begin{figure*}[h!]
\small
\centering
\begin{subfigure}{0.33\textwidth}
\includegraphics[width=\textwidth]{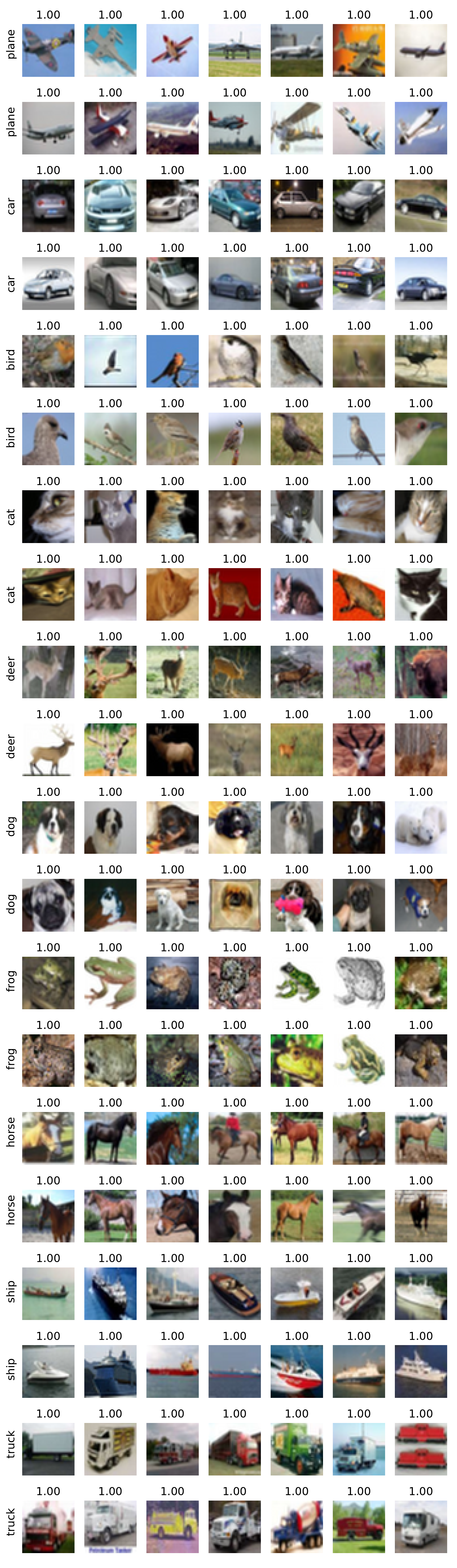}
\caption{1. iteration - max. 25k samples per class}
\end{subfigure}
\begin{subfigure}{0.33\textwidth}
\includegraphics[width=\textwidth]{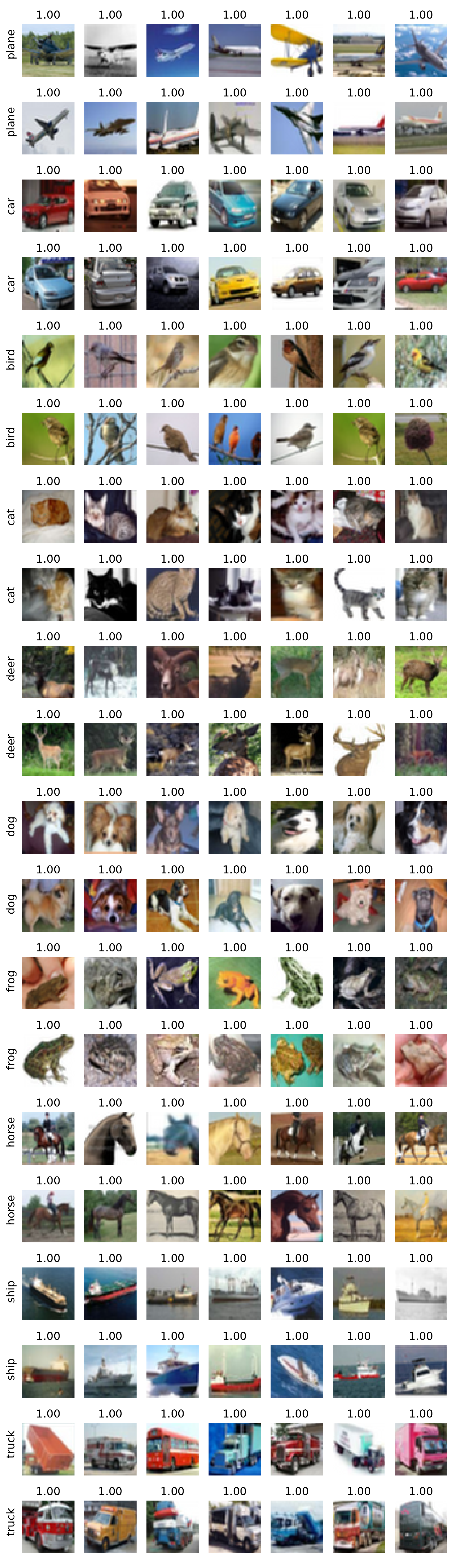}
\caption{2. iteration - max. 50k samples per class}
\end{subfigure}
\begin{subfigure}{0.33\textwidth}
\includegraphics[width=\textwidth]{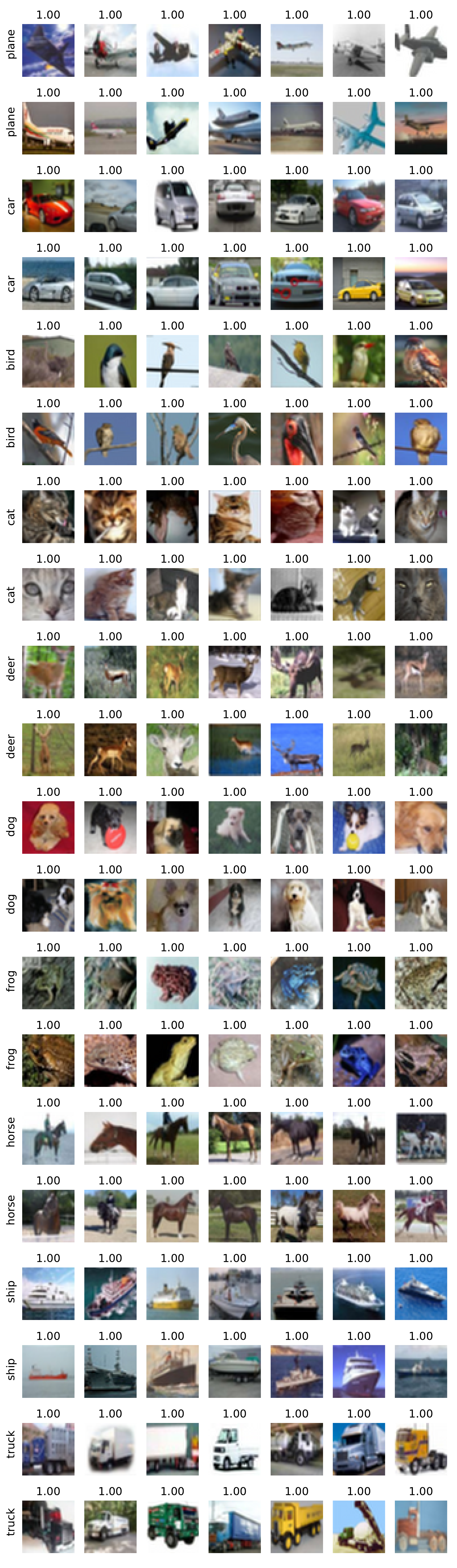}
\caption{3. iteration - max. 75k samples per class}
\end{subfigure}
\caption{CIFAR10: \ours selected samples for the ResNet50 architecture over iterations. Sample selection remains stable over iterations, even with the addition of up to 75k samples per class per iteration. This is an astonishingly good result, given the difficulty of the task due to the tiny fraction of CIFAR10 related images in 80MTI. Note that \ours has much better sample selection accuracy even though it selects more samples than \baseod, see Table \ref{tab:app_num_samples_cifar10}.}
\label{fig:appendix_cifar10_odst}
\end{figure*}

\begin{figure*}[h!]
\small
\centering
\begin{subfigure}{0.33\textwidth}
\includegraphics[width=\textwidth]{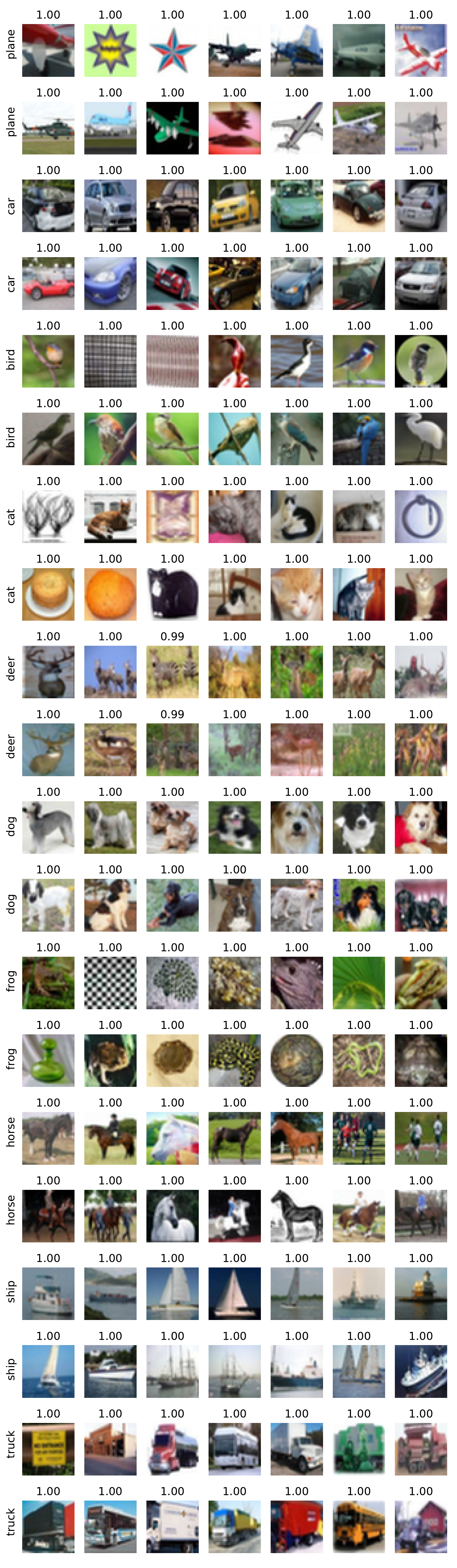}
\caption{1. iteration - max. 25k samples per class}
\end{subfigure}
\begin{subfigure}{0.33\textwidth}
\includegraphics[width=\textwidth]{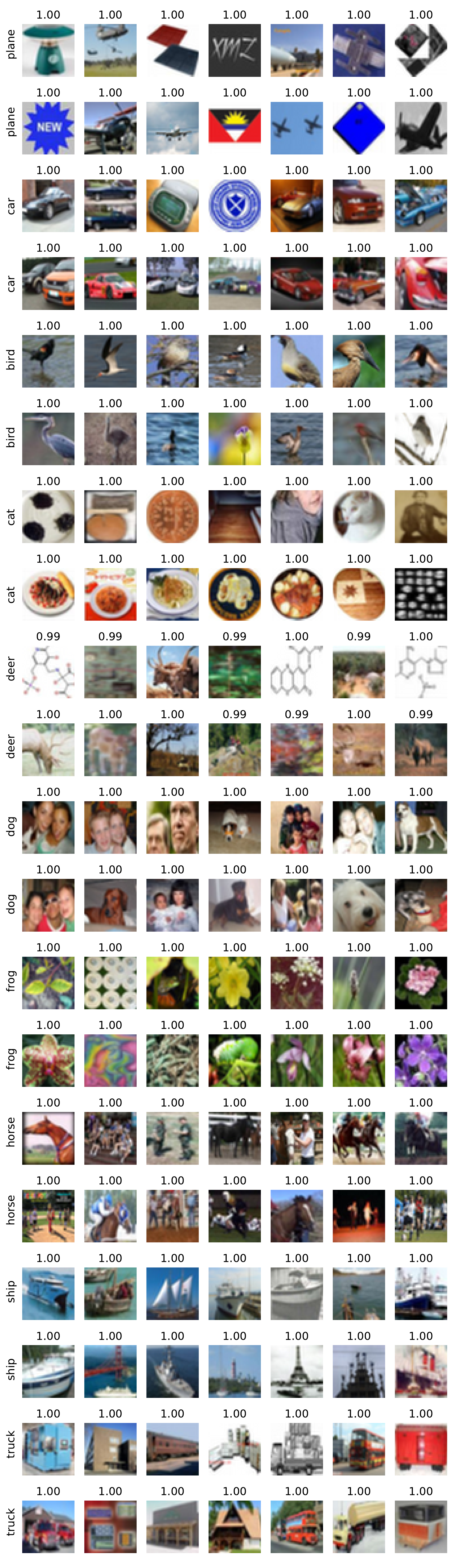}
\caption{2. iteration - max. 50k samples per class}
\end{subfigure}
\begin{subfigure}{0.33\textwidth}
\includegraphics[width=\textwidth]{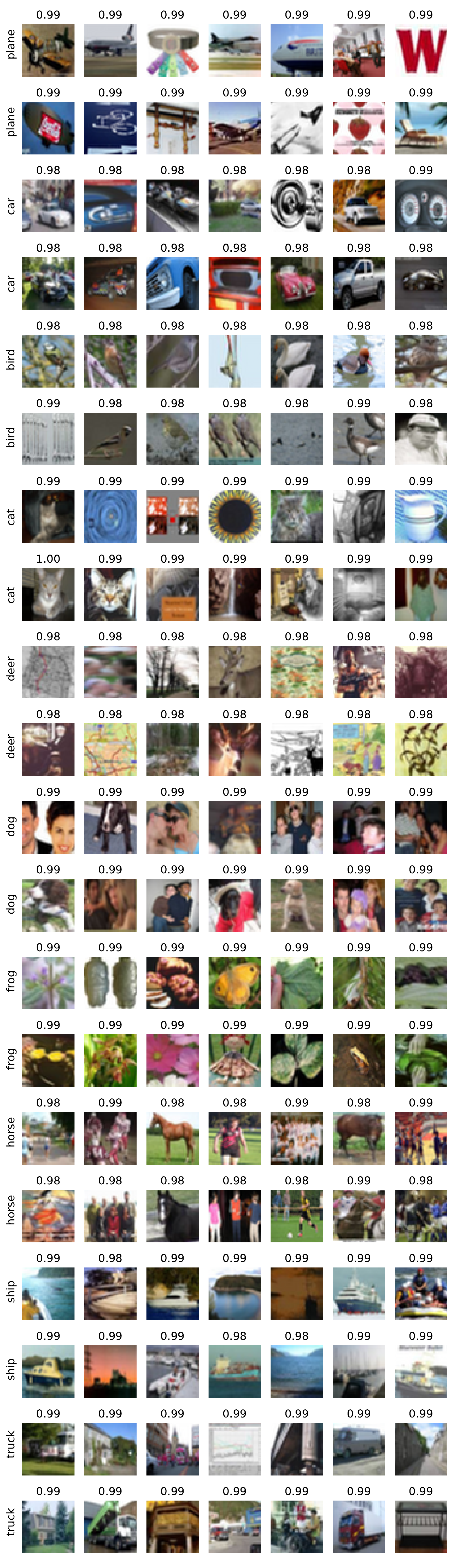}
\caption{3. iteration - max. 75k samples per class}
\end{subfigure}\vspace{-2mm}
\caption{CIFAR10: selected samples by \base for the ResNet50 architecture over iterations. Without OD-thresholding, even the first teacher model selects a lot of unrelated samples. The later students learn from 
these mistakes and \eg start to associate humans with "dog" and "horse".  This highlights the difficulty of iterative self-training as the student has no way to recover from the failures of the teacher. 
As (c) maintains $3.54\%$ test error this shows that test error alone is not a good indicator if the classifier has learned good task representations.
}
\label{fig:appendix_cifar10_st}
\end{figure*}

\begin{figure*}[h!]
\small
\centering
\begin{subfigure}{0.33\textwidth}
\includegraphics[width=\textwidth]{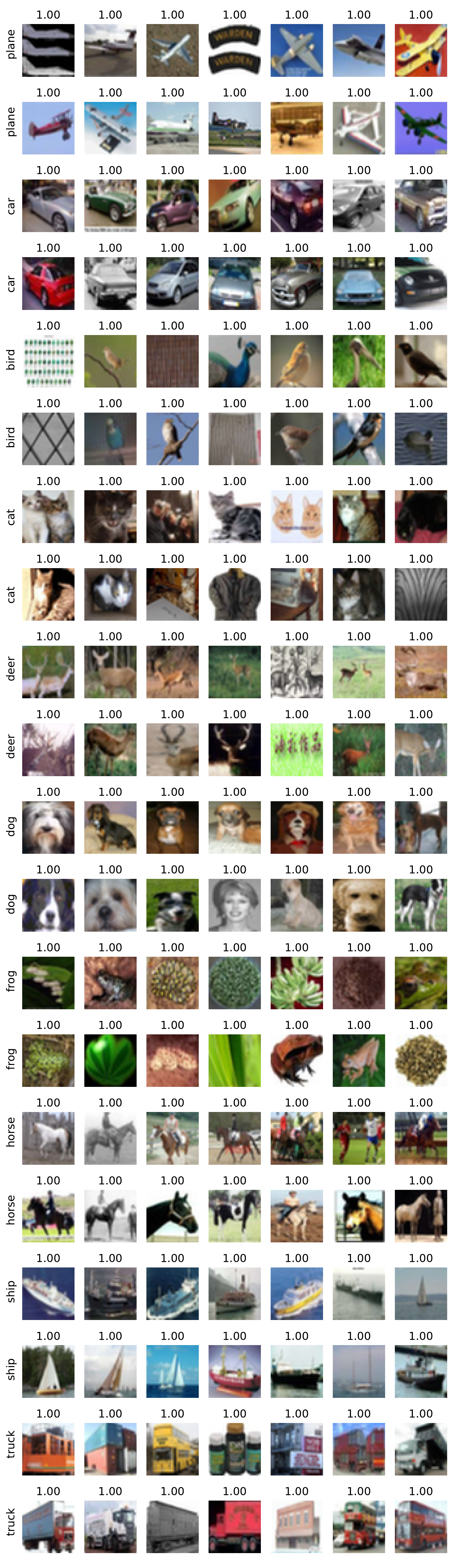}
\caption{1. iteration - max. 25k samples per class}
\end{subfigure}
\begin{subfigure}{0.33\textwidth}
\includegraphics[width=\textwidth]{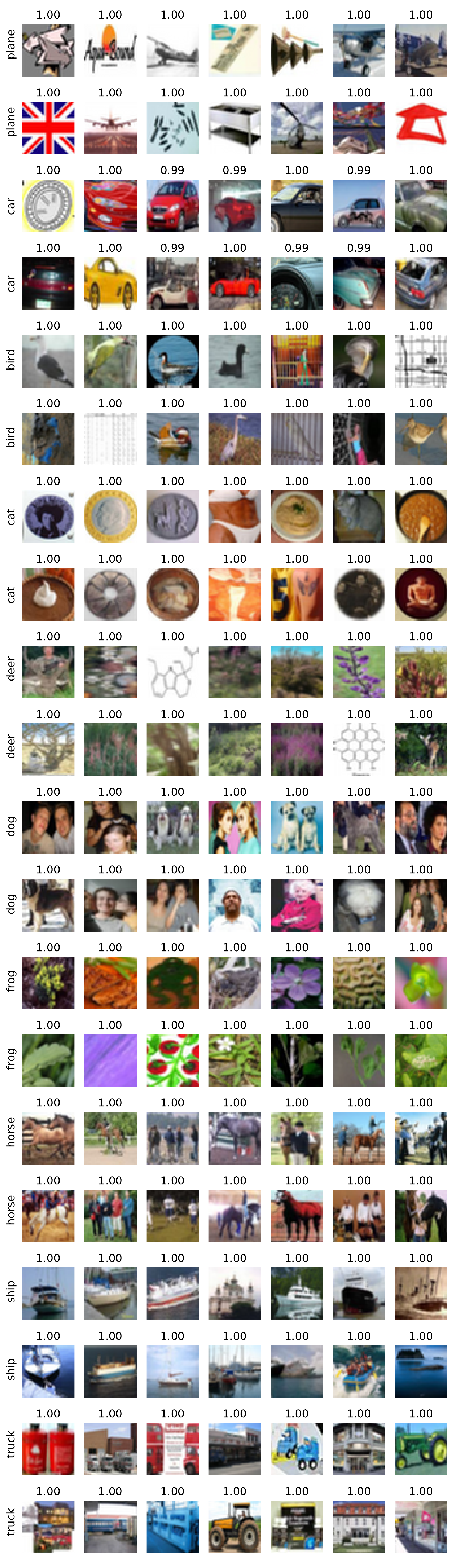}
\caption{2. iteration - max. 50k samples per class}
\end{subfigure}
\begin{subfigure}{0.33\textwidth}
\includegraphics[width=\textwidth]{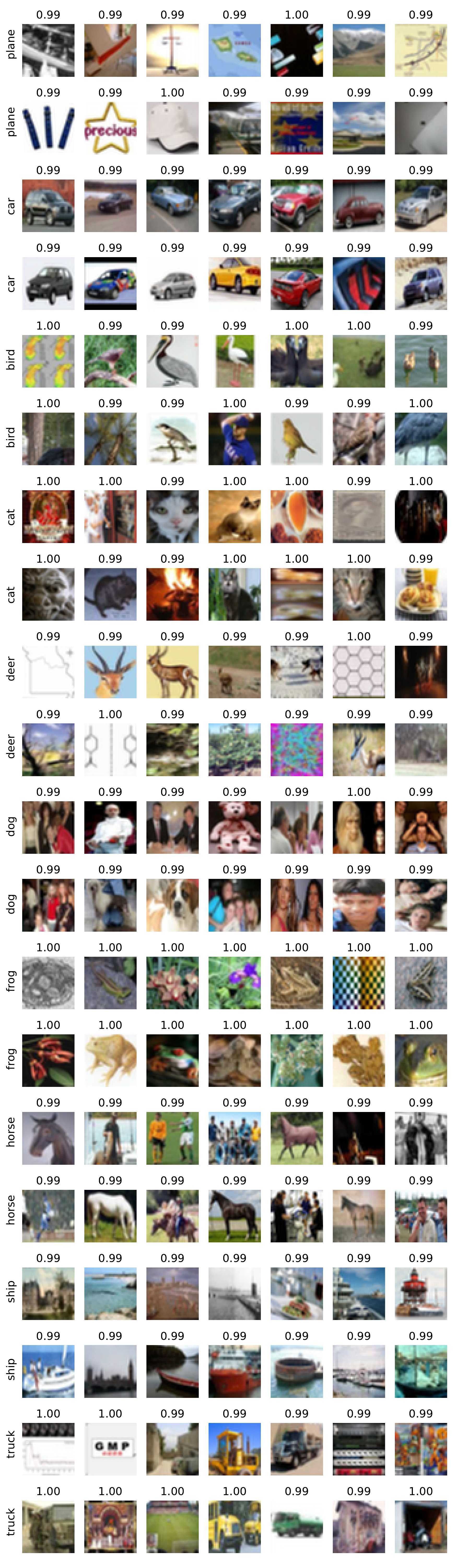}
\caption{3. iteration - max. 75k samples per class}
\end{subfigure}
\caption{CIFAR10: selected samples by  \baseod for the ResNet50 architecture over iterations. With the addition of OD-thresholding, \baseod selects more useful samples in the first iteration. However the error accumulates and the later models associate plates with "cat", chemical structures with "deer" and humans with "dog". This shows that both OD-thresholding \emph{and} OD-aware training are required for iterative open world self-training in order to learn proper class representations.}
\label{fig:appendix_cifar10_st_ot}
\end{figure*}

\begin{figure*}[h!]
\small
\centering
\begin{subfigure}{0.48\textwidth}
\includegraphics[width=\textwidth]{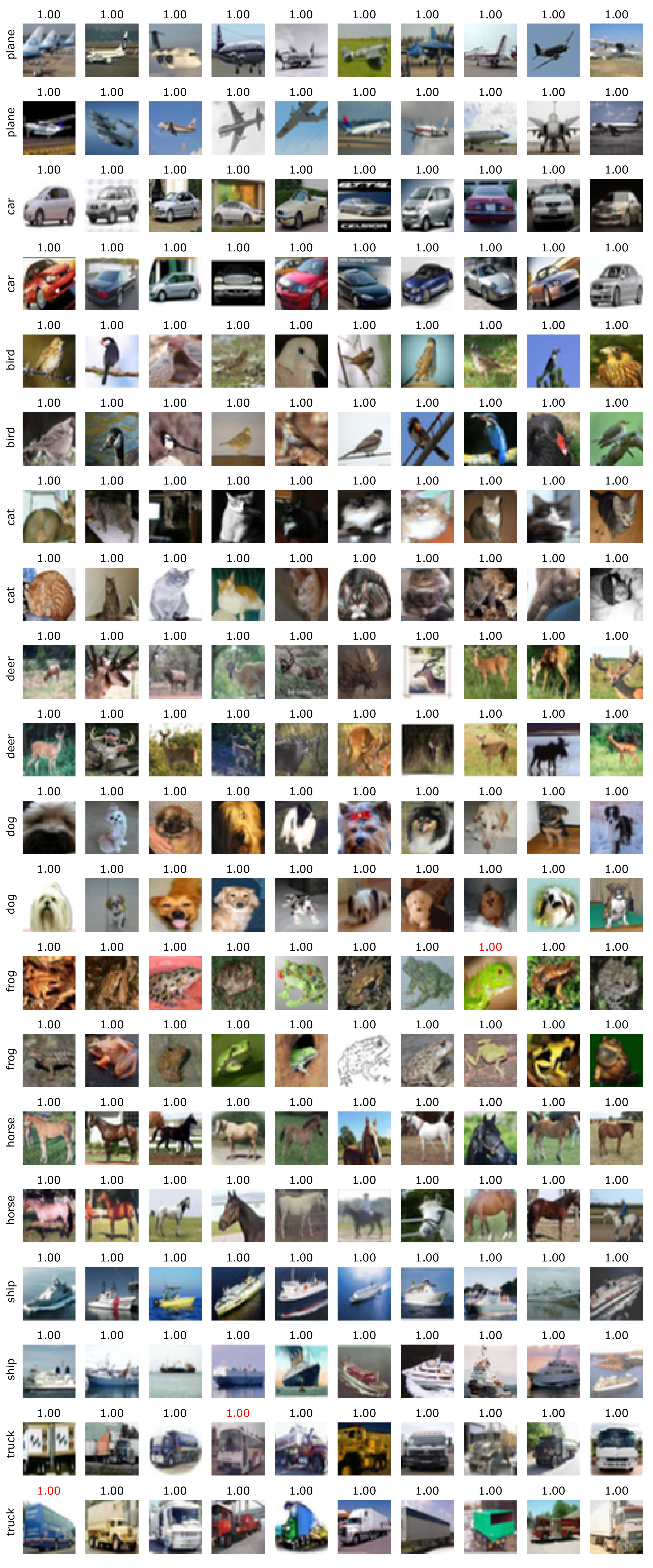}
\caption{\ours - 2nd iteration - 50k samples per class}
\end{subfigure}
\begin{subfigure}{0.48\textwidth}
\includegraphics[width=\textwidth]{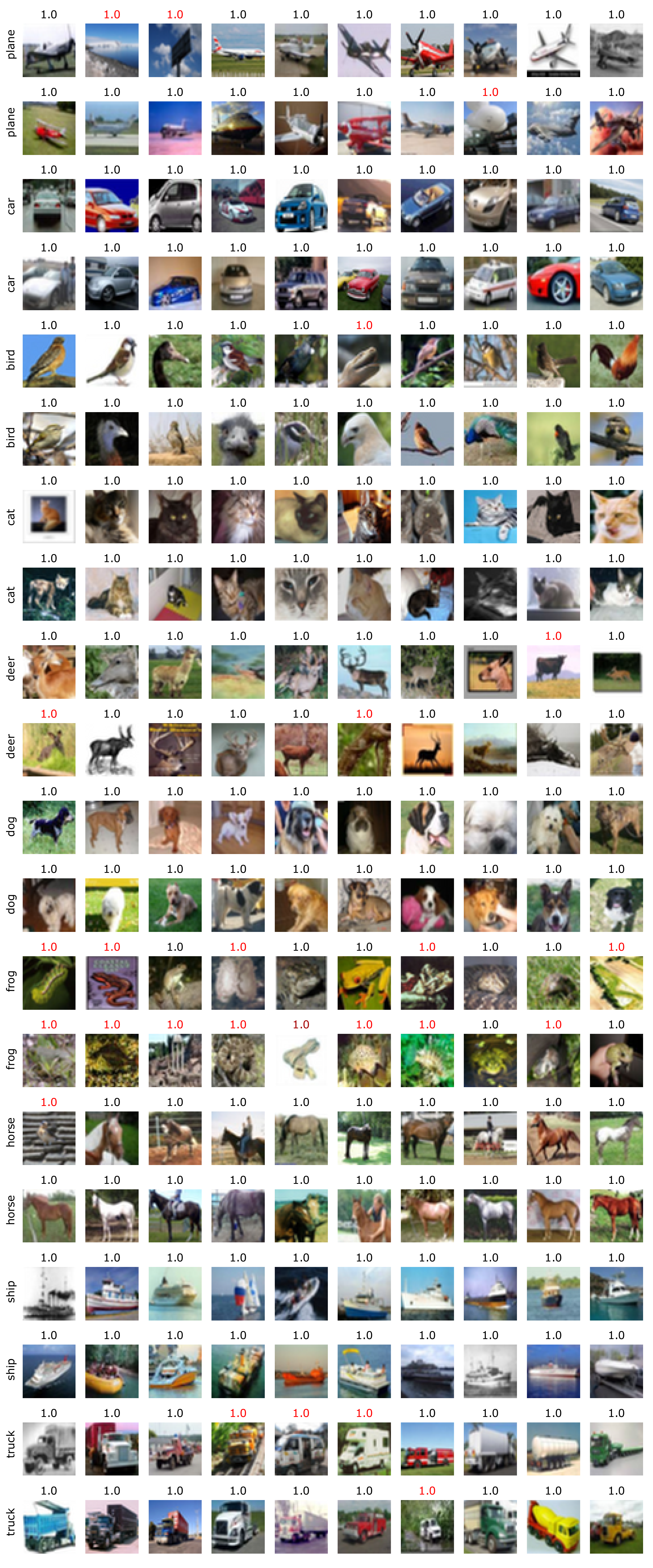}
\caption{500k-TI \cite{CarEtAl19} - 50k samples per class}
\end{subfigure}
\caption{CIFAR10: Comparison of \ours to \cite{CarEtAl19}. Obvious misclassifications are marked in red. Especially for the classes "frog" and "deer", \ours has a way more accurate sample selection due to our thresholding whereas \cite{CarEtAl19} include unrelated samples. Additionally, they include some false positives for "plane" and both include related but wrong objects like trains and (mini)busses for "truck".}
\label{fig:appendix_cifar10_carmon}
\end{figure*}

\begin{figure*}[h!]
\centering
\includegraphics[width=\textwidth]{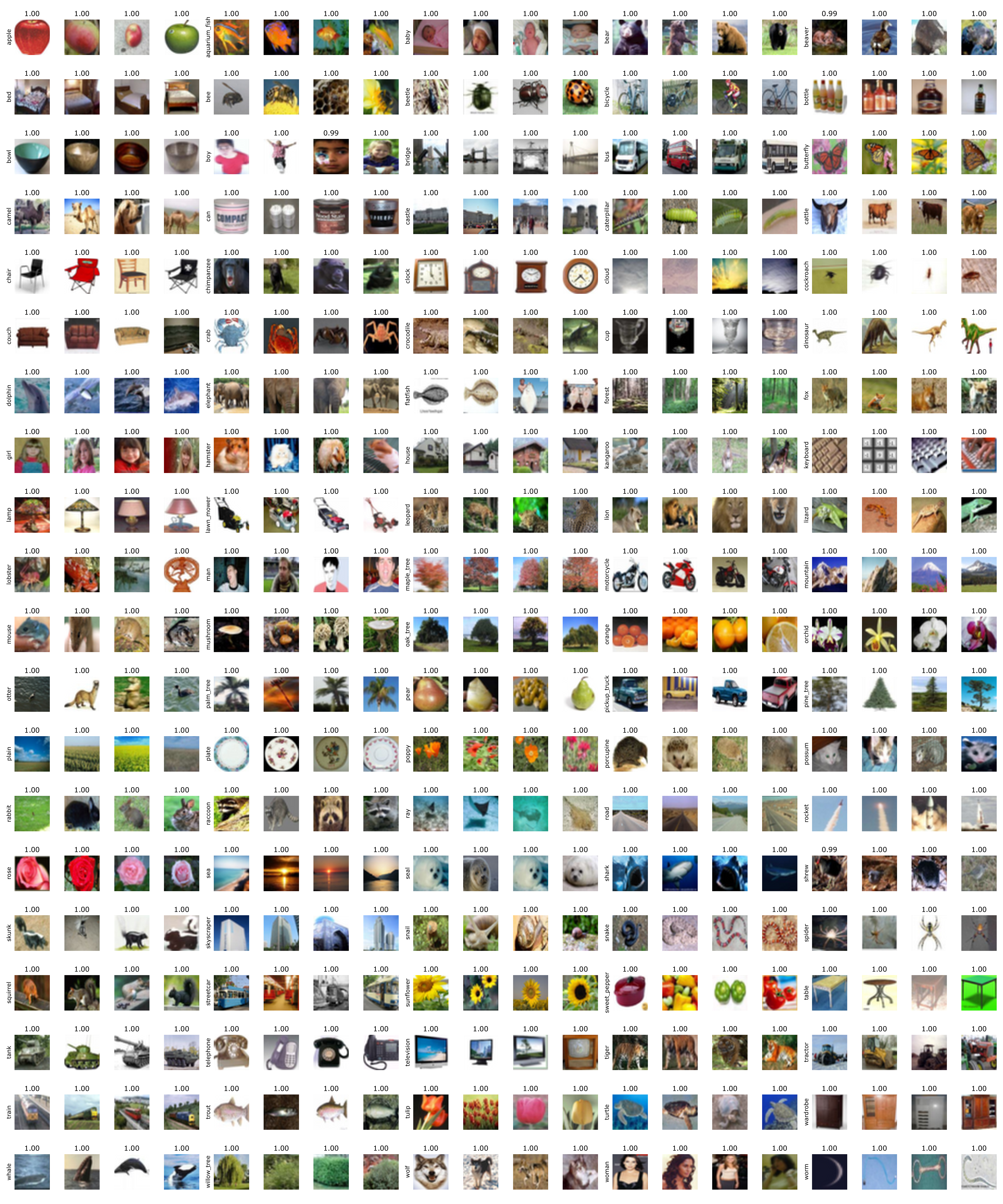}
\caption{CIFAR100: \ours 1st iteration (up to 2.25k randomly selected samples per class). Even with 100 classes in total and only 450 train images per class, \ours is able to select a diverse range of task-related images from the pool of 80MTI, which contains mostly unrelated samples.}
\label{fig:appendix_cifar100_odst_1}
\end{figure*}

\begin{figure*}[h!]
\centering
\includegraphics[width=\textwidth]{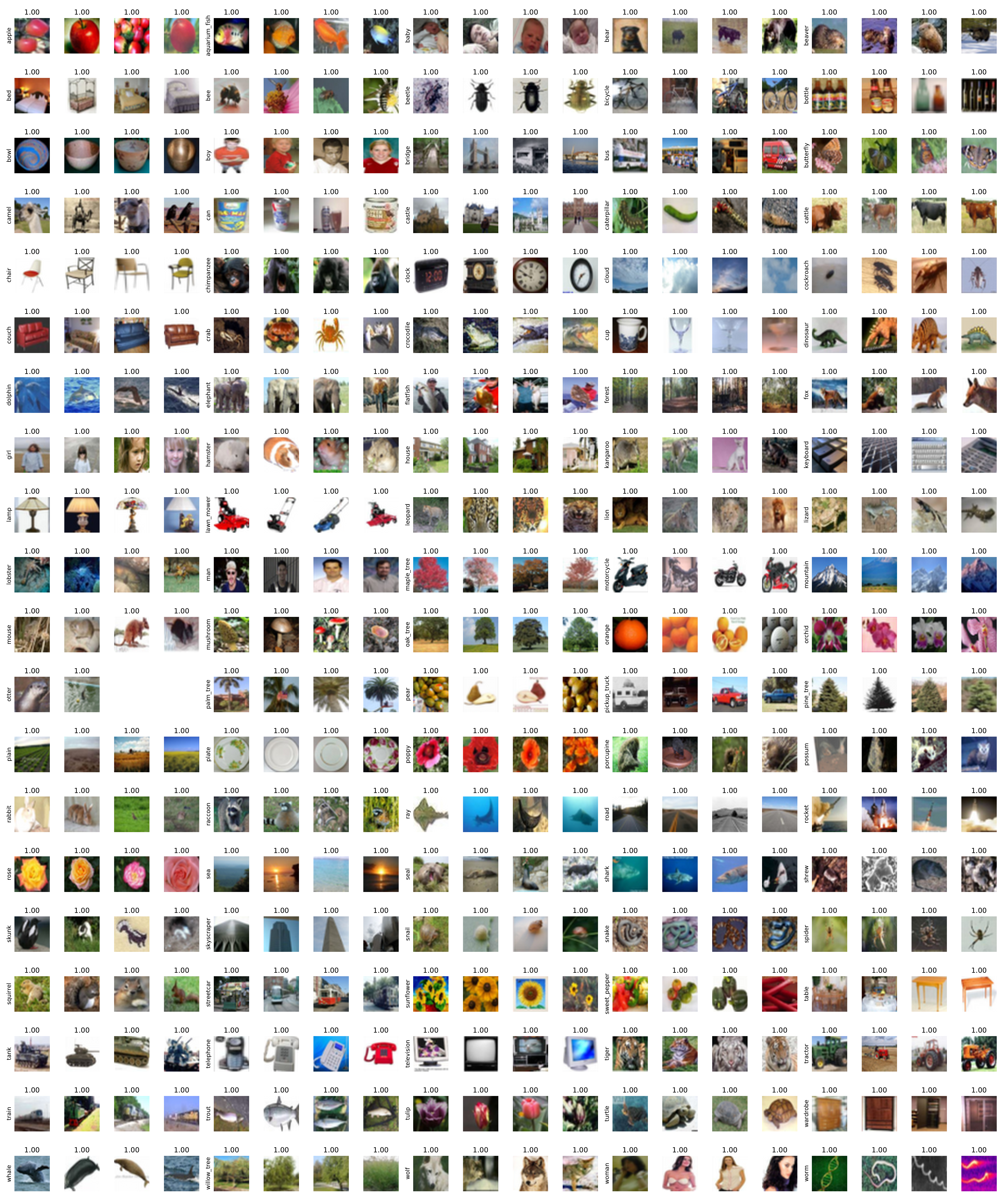}
\caption{CIFAR100: \ours 2nd iteration (up to 4.5k randomly selected samples per class). With \ours, class representations remain stable as the student model is not learning to classify unrelated images with high confidence. } 
\label{fig:appendix_cifar100_odst_2}
\end{figure*}

\begin{figure*}[h!]
\centering
\includegraphics[width=\textwidth]{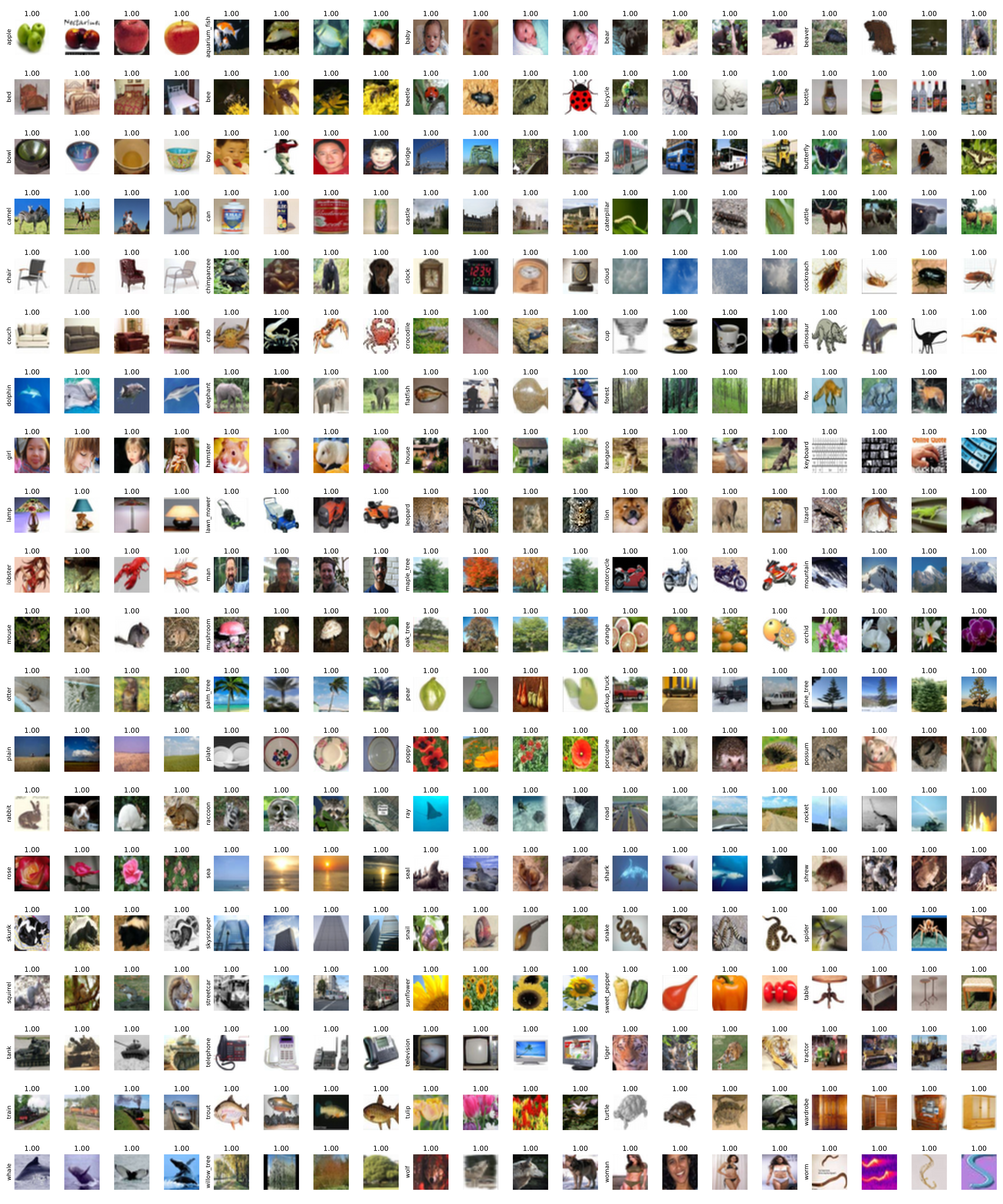}
\caption{CIFAR100: \ours 3nd iteration (up to 6.75k randomly selected samples per class). Taking into account the difficulty of the task \ours shows great stability even in the third iteration, however sample quality depends on the class and some classes like "otter" and "worm" show a larger false positive rate. For the class otter there are only two images above the thresholds. } 
\label{fig:appendix_cifar100_odst_3}
\end{figure*}

\begin{figure*}[h!]
\centering
\includegraphics[width=\textwidth]{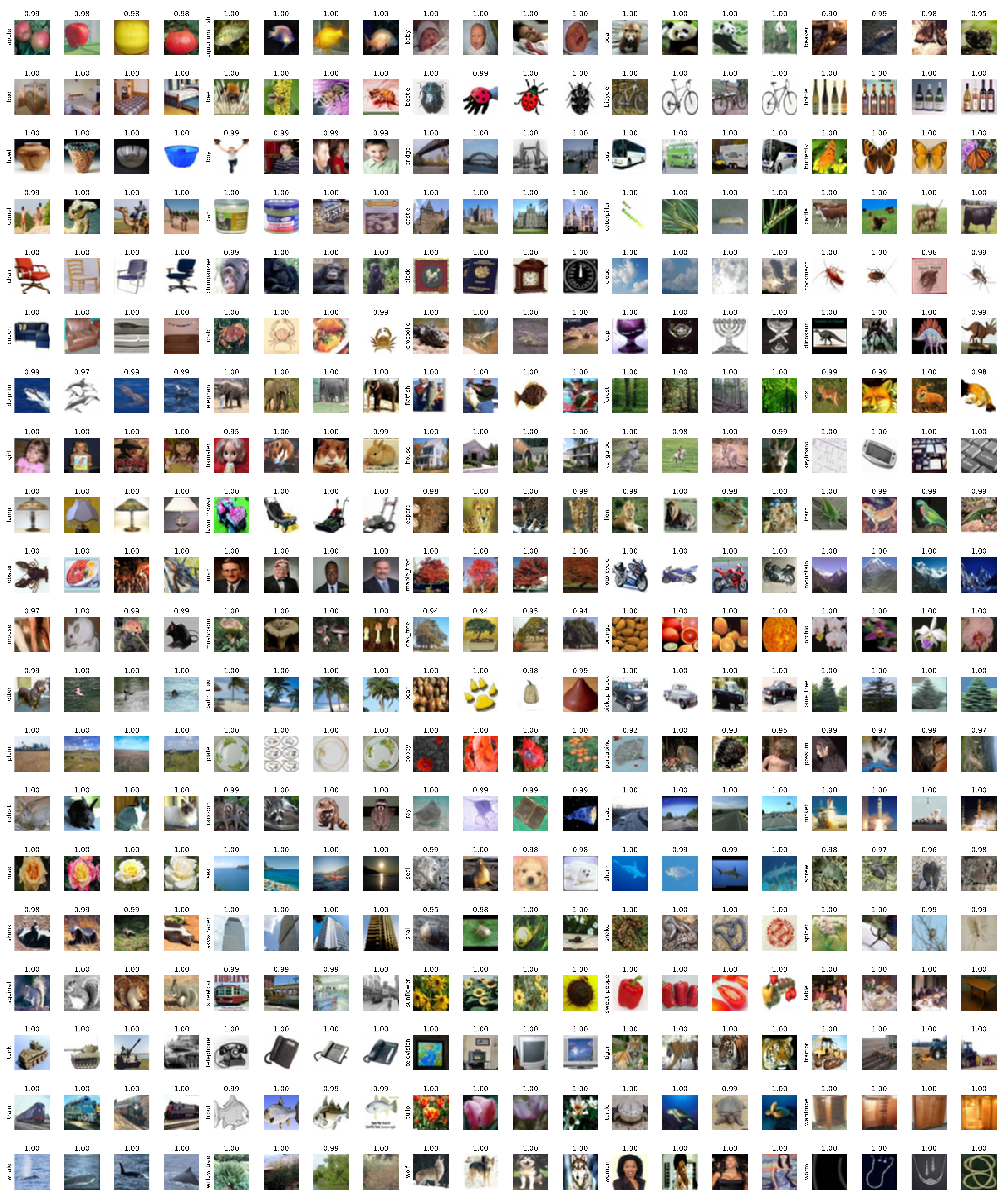}
\caption{CIFAR100: \base 1st iteration (up to 2.25k randomly selected samples per class). Due to the relatively small number of additional samples per class in the first iteration, even the baseline is able to select comparatively good sampls.}
\label{fig:appendix_cifar100_st_1}
\end{figure*}

\begin{figure*}[h!]
\centering
\includegraphics[width=\textwidth]{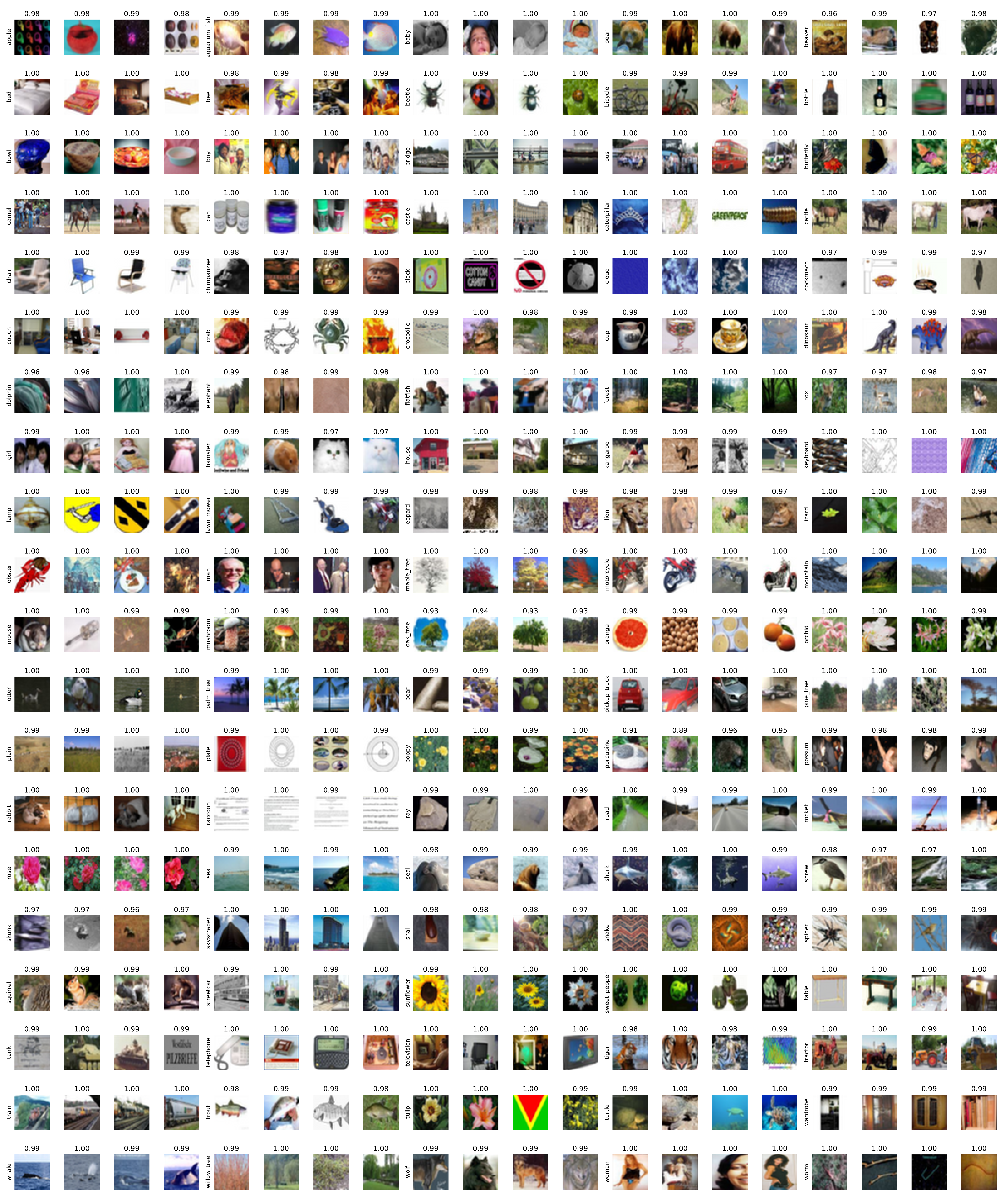}
\caption{CIFAR100: \base 2nd iteration (up to 4.5k randomly selected samples per class). With the 2nd iteration we can notice an increase in task unrelated samples across various classes. As most images are classified with very large confidence, the additional samples are not well suited for training another student.  }
\label{fig:appendix_cifar100_st_2}
\end{figure*}

\begin{figure*}[h!]
\centering
\includegraphics[width=\textwidth]{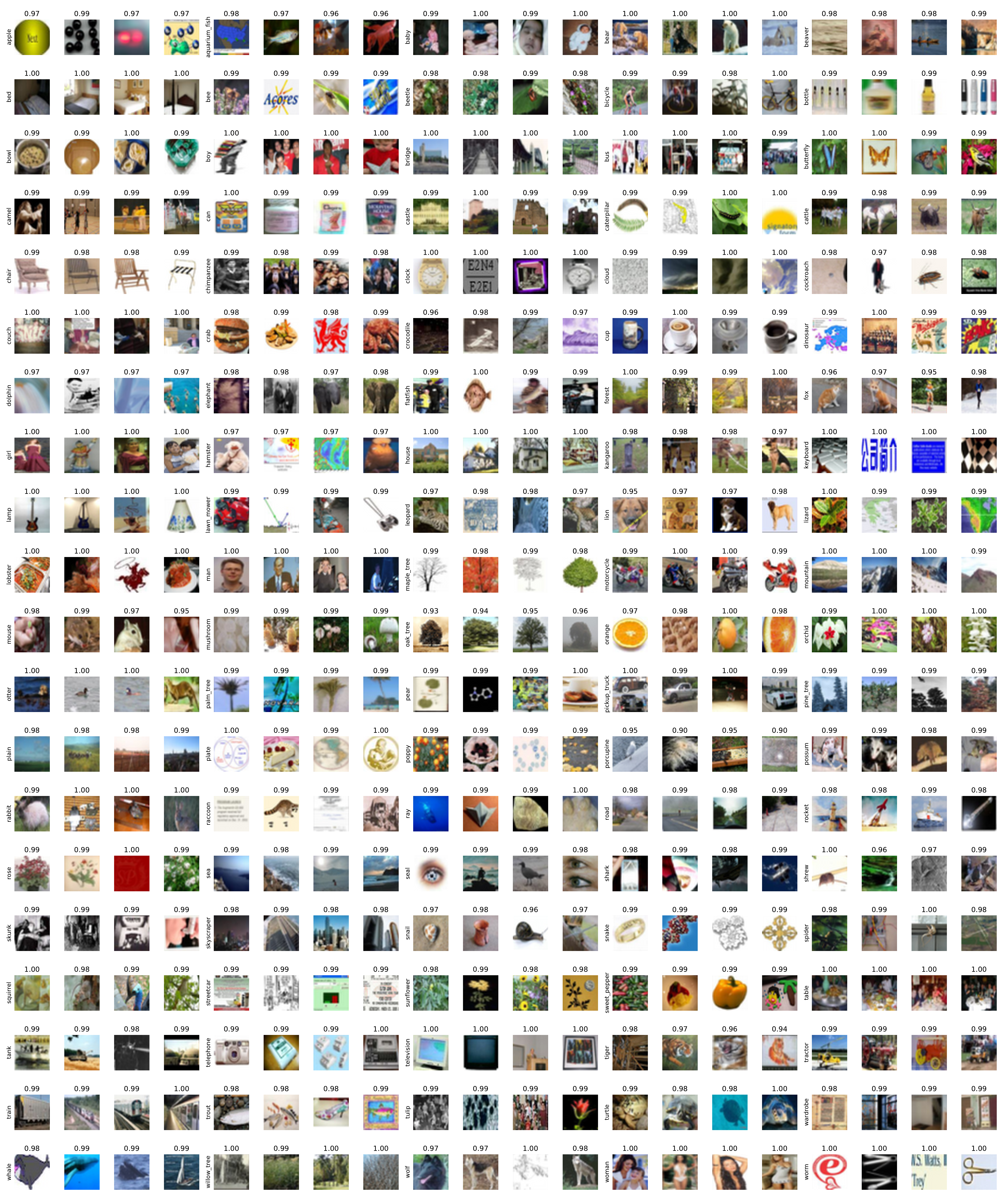}
\caption{CIFAR100: \base 3nd iteration (up to 6.75k randomly selected samples per class). For most classes, standard self-training breaks down at this point. Notice that while the first model was selecting mostly good images and had an intact class representations, this one has learned systematically wrong representations, \eg labeling food as "crab" with high confidence.}
\label{fig:appendix_cifar100_st_3}
\end{figure*}

\begin{figure*}[h!]
\centering
\includegraphics[width=\textwidth]{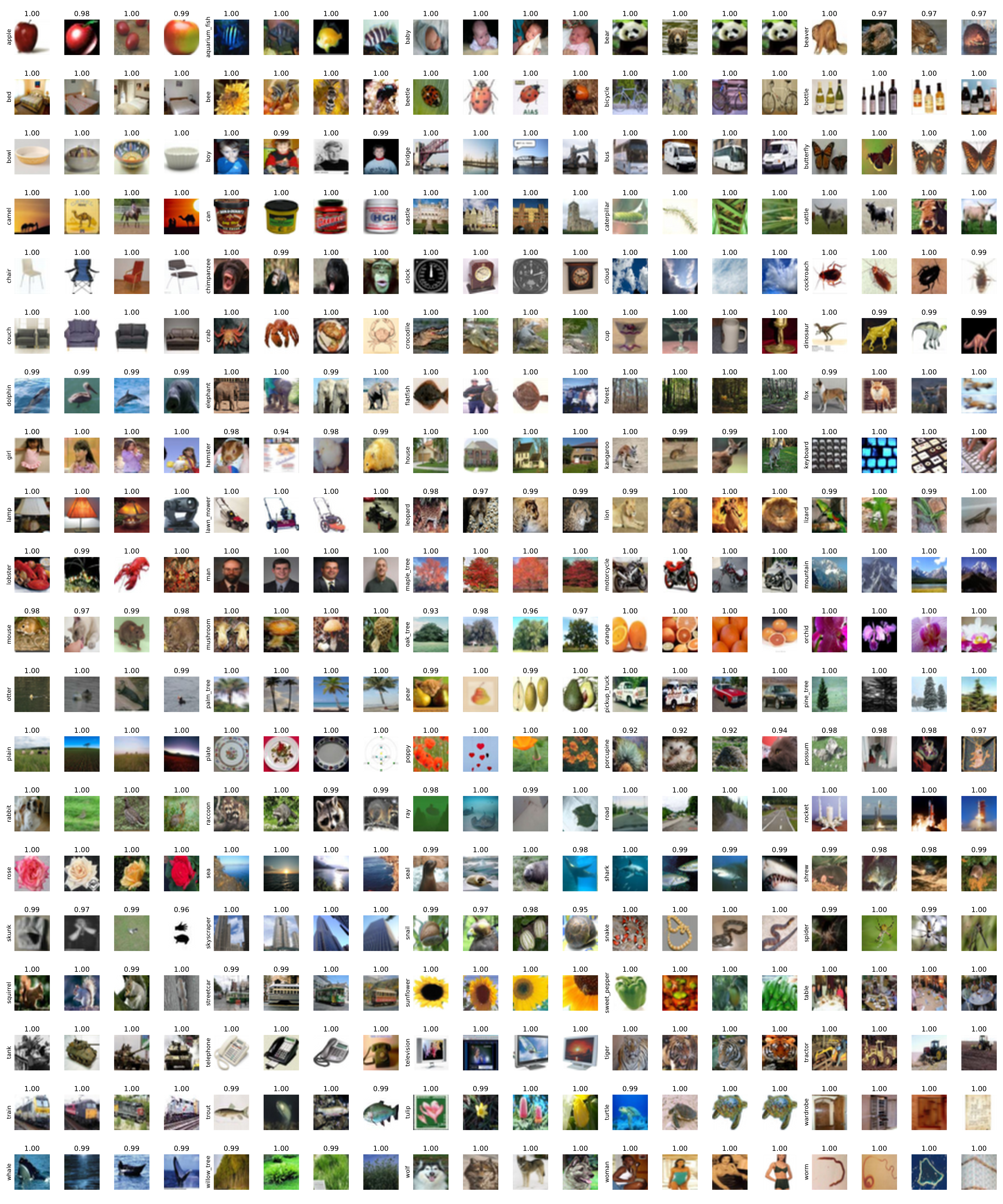}
\caption{CIFAR100: \baseod 1st iteration (up to 2.25k randomly selected samples per class). Disregarding few outlier classes, the baseline with OD thresholding is even better than \base and has a good sample quality even though clearly worse than \ours.}
\label{fig:appendix_cifar100_st_ot_1}
\end{figure*}

\begin{figure*}[h!]
\centering
\includegraphics[width=\textwidth]{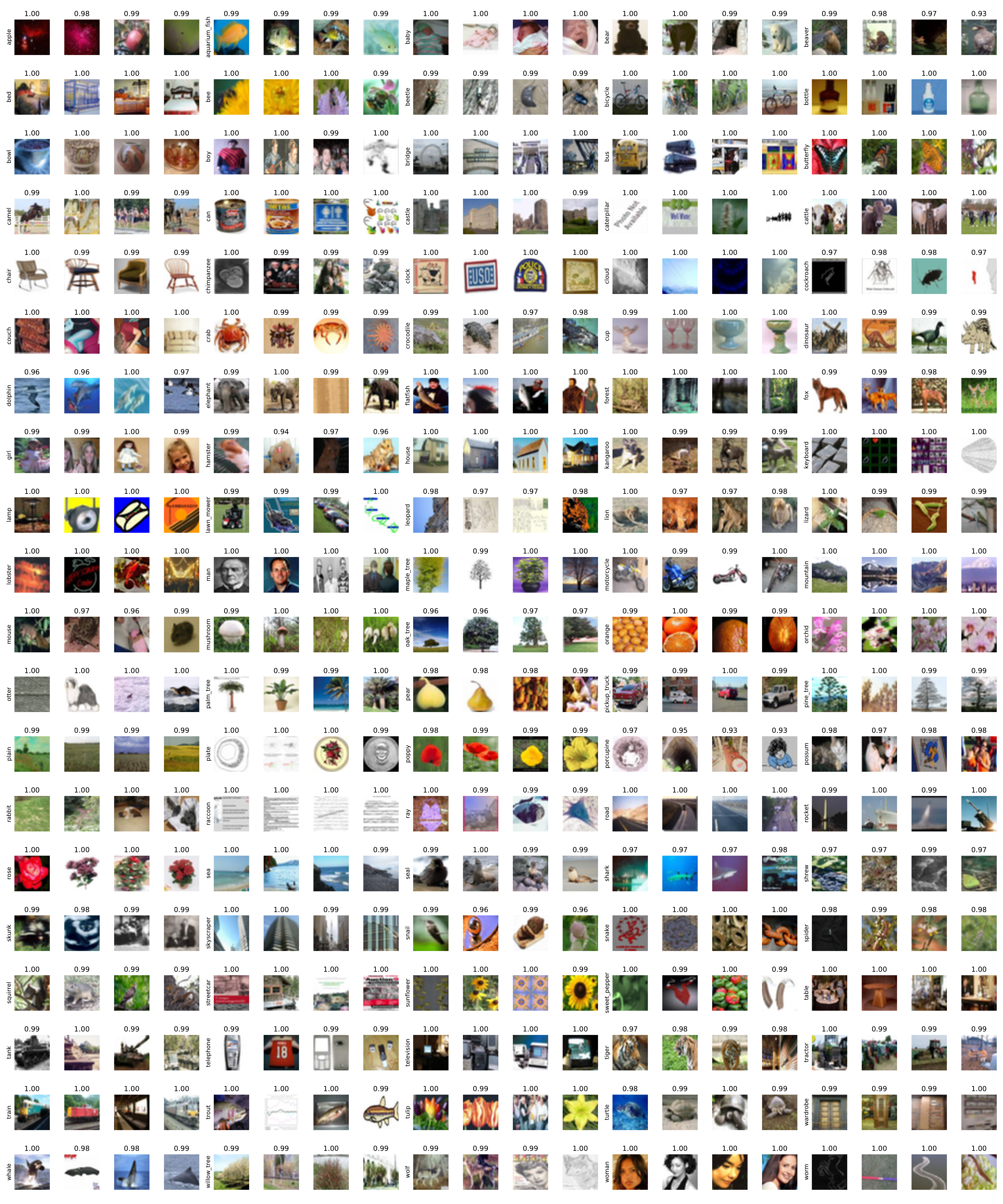}
\caption{CIFAR100: \baseod 2nd iteration (up to 4.5k randomly selected samples per class). While thresholding slightly stabilizes sample selection, without OD-aware training, the model starts to include bad samples, for example text as "raccoon" or graphics as "clock".}
\label{fig:appendix_cifar100_st_ot_2}
\end{figure*}

\begin{figure*}[h!]
\centering
\includegraphics[width=\textwidth]{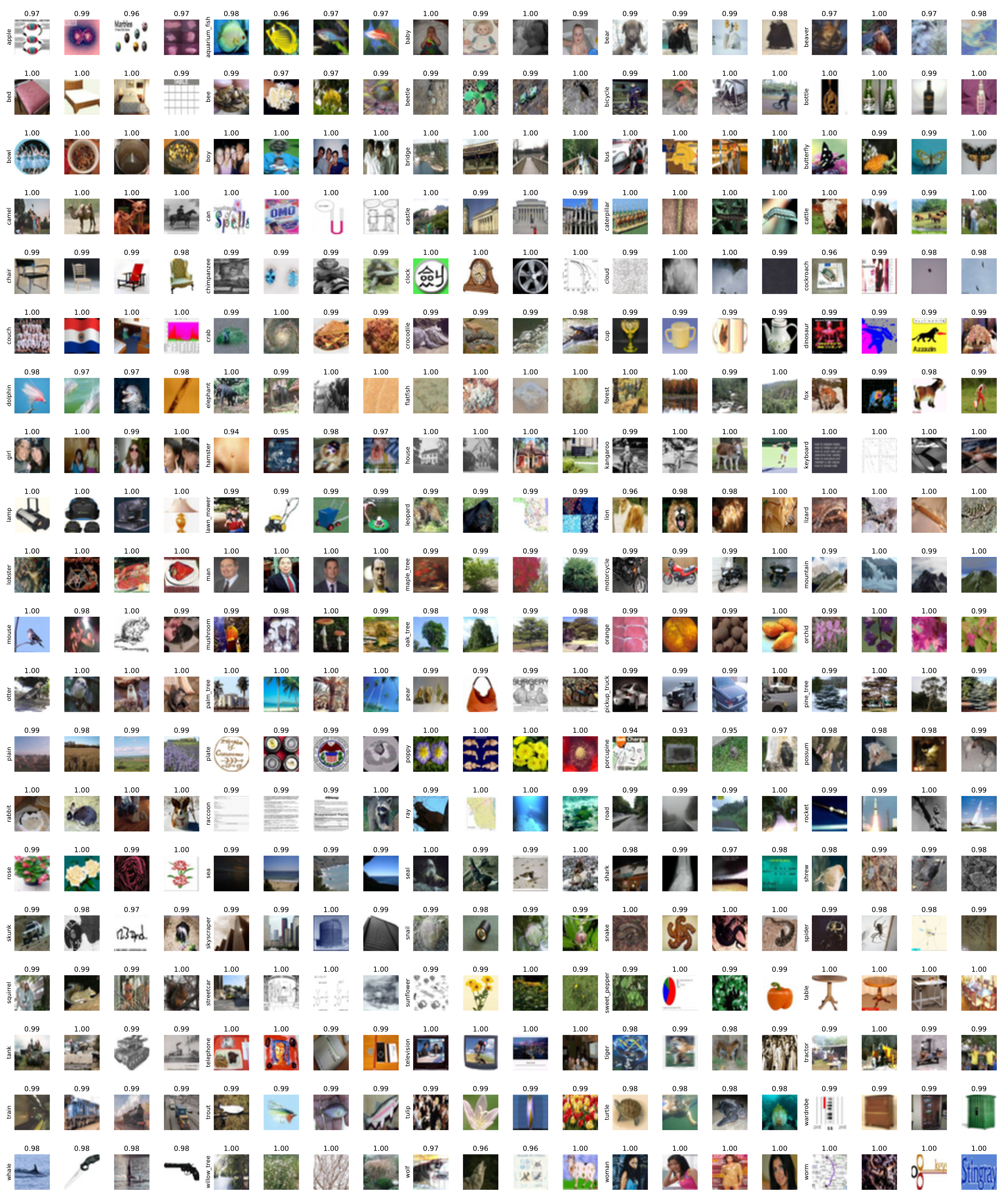}
\caption{CIFAR100: \baseod 3nd iteration (up to 6.75k randomly selected samples per class). In the third iteration, we notice more and more mistakes for a large number of classes. This highlights the challenges of self-training. Once a teacher model has learned a wrong representation, it will always pass on wrong information to the student, which results in even worse sample selection.}
\label{fig:appendix_cifar100_st_ot_3}
\end{figure*}

\end{document}